\documentclass{article}
\usepackage[preprint]{staix_2026}
\usepackage[utf8]{inputenc}
\usepackage[T1]{fontenc}
\usepackage{hyperref}
\usepackage{url}
\usepackage{booktabs}
\usepackage{amsfonts}
\usepackage{nicefrac}
\usepackage{microtype}
\usepackage{xcolor}
\usepackage{amsmath}
\usepackage{amsfonts}
\usepackage{amsmath, amsthm, amssymb}
\usepackage{enumitem}
\usepackage{multirow}
\usepackage{subcaption}
\newtheorem{lemma}{Lemma}
\newtheorem{assumption}{Assumption}
\newtheorem*{assumption*}{Assumption}
\newtheorem{proposition}{Proposition}
\newtheorem{corollary}{Corollary}
\newtheorem{theorem}{Theorem}
\newtheorem{remark}{Remark}
\usepackage{graphicx}
\usepackage{subcaption}
\usepackage{hyperref}
\usepackage{threeparttable}
\usepackage{adjustbox}
\usepackage[most]{tcolorbox}
\usepackage{tabularx}
\usepackage[htt]{hyphenat}

\title{Variance-aware Reward Modeling with Anchor Guidance}

\author{
  Shuxing Fang\textsuperscript{1}
  \quad Ruijian Han\textsuperscript{1}
  \quad Liangyu Zhang\textsuperscript{2}\thanks{
    Corresponding author.
    Authors are listed in alphabetical order.
    Emails:
    \texttt{shuxing.fang@connect.polyu.hk},
    \texttt{ruijian.han@polyu.edu.hk},
    \texttt{zhangliangyu@sufe.edu.cn},
    \texttt{zhoufan@mail.shufe.edu.cn}.
  }
  \quad Fan Zhou\textsuperscript{2}
  \\[0.5em]
  {\normalfont\small
  \textsuperscript{1}Department of Data Science and Artificial Intelligence,
  Hong Kong Polytechnic University
  }
  \\[-0.1em]
  {\normalfont\small
  \textsuperscript{2}School of Statistics and Data Science,
  Shanghai University of Finance and Economics
  }
}

\begin{document}

\maketitle

\begin{abstract}
Standard Bradley--Terry (BT) reward models are limited when human preferences are pluralistic. Although soft preference labels preserve disagreement information, BT can only  express it by shrinking reward margins.
Gaussian reward models provide an alternative by jointly predicting a reward mean and a reward variance, but suffer from a fundamental
non-identifiability from pairwise preferences alone. We propose Anchor Guided Variance-aware Reward Modeling, a framework that resolves this non-identifiability by augmenting preference data with two coarse response-level anchor labels. Building on this, we prove that two anchors are sufficient for identification, develop a joint training objective and establish a non-asymptotic convergence rate for both the estimated reward mean and variance functions. Across simulation studies and four real-world diverging-preference datasets, our method consistently improves reward modeling performance and downstream RLHF, including PPO training and best-of-$N$ selection. Code is available at \url{https://github.com/fsmiu/Anchor-guided-Variance-aware-Reward-Modeling}.
\end{abstract}

\section{Introduction}
Reinforcement Learning from Human Feedback (RLHF) are widely employed to align LLMs with human preferences and avoid generating harmful content \citep{bai2022training,ouyang2022training,lambert2024tulu,hu2024openrlhf,ji2026overview,liu2026reinforcement}. A typical RLHF pipeline first trains a reward model on human preference data, and then uses this reward model to guide policy optimization via algorithms such as PPO \citep{schulman2017proximal} or GRPO \citep{shao2024deepseekmath}. The quality of the reward model is therefore central to the success of the entire pipeline.

As LLMs are used by increasingly diverse populations, alignment should account for pluralistic
human preferences \citep{gordon2022jury,sorensen2024value,sorensen2024position}. A line of recent work shows that annotators with different backgrounds
frequently provide diverging preferences on the same prompt–response pair \citep{baumler2023examples,mishra2023ai,siththaranjandistributional,zhang2025diverging,halpern2026pairwise}. Recent multi-annotator preference datasets such as \texttt{MultiPref} \citep{miranda2025hybrid}, \texttt{HelpSteer2} \citep{wang2025helpsteerpreference}, \texttt{HelpSteer3} \citep{wang2025helpsteer3preferenceopenhumanannotatedpreference}, and \texttt{PersonalLLM} \citep{zollo2025personalllm} further confirm that a substantial fraction of pairwise comparisons receive split votes from human annotators or LLM judges.
Standard reward modeling approaches simply aggregate these split votes into a single majority label \citep{casper2023open, wang2024helpsteer2}. This aggregation design fails to capture the underlying diversity of preferences and can further lead to preference collapse \citep{zhang2025diverging,halpern2026pairwise} and reward hacking \citep{gao2023scaling,fu2025reward,sun2025probabilistic}.

A natural alternative is to adopt soft preference labels that reflect the empirical proportion of annotators favoring one response, thereby preserving annotator disagreement information and providing a better training signal for learning calibrated preference probabilities \citep{liu2025reward, halpern2026pairwise}. Although soft preference labels can be directly used in the BT \citep{bradley1952rank} loss, the standard BT captures only the mean preference probability.
As a result, annotator disagreement is absorbed by shrinking the reward margin between two responses, rather than being explicitly represented as uncertainty.

A principled alternative is to model rewards as distributions rather than scalars \citep{siththaranjandistributional,lou2024uncertainty,sun2025probabilistic,zhang2025diverging,liu2025uncertaintyquantificationlargelanguage,frick2025reward,chen2026learning}. A representative instance is the Gaussian reward model, which outputs a reward mean and a reward variance \citep{siththaranjandistributional,zhang2025diverging,frick2025reward}. It corresponds to the classical Thurstonian formulation \citep{thurstone1927method} and generalizes BT from a scalar reward with fixed noise to a heteroscedastic reward distribution.  However, this additional modeling flexibility introduces a fundamental identifiability problem \citep{liu2025uncertaintyquantificationlargelanguage,frick2025reward}. Different combinations of reward mean and variance functions can induce the same preference probabilities, making the learned uncertainty difficult to compare and interpret.

To address this issue, we propose Anchor-guided Variance-aware
Reward Modeling, which resolves the non-identifiability of Gaussian reward models via an auxiliary anchor dataset: each response receives two coarse binary anchor labels indicating whether its latent utility exceeds two distinct thresholds. We develop a joint training objective that combines a soft-label preference loss and a multinomial cross-entropy loss induced by the two anchors. To verify that performance is robust to the label source, we construct coarse anchor labels from two distinct sources: external reward models and LLM judges. Based on this framework, we prove that the two coarse anchors provide the constraints to uniquely identify both the reward mean and variance functions, and we further establish a non-asymptotic convergence rate for both functions. Empirically, on simulation data and four real-world diverging-preference
benchmarks (\texttt{MultiPref}, \texttt{HelpSteer2},
\texttt{HelpSteer3}, \texttt{PersonalLLM}), our method consistently
outperforms other baselines in both
reward modeling evaluation and downstream RLHF performance (PPO
training and best-of-$N$ selection) .

To summarize, our contributions include: (1) a formal characterization
of the non-identifiability of Gaussian reward models and a two-anchor
identification result; (2) a practical anchor-guided variance-aware reward modeling
framework with finite-sample convergence guarantees for both the
reward mean and variance function; (3) a comprehensive empirical demonstration
of its effectiveness on diverging-preference benchmarks and downstream
RLHF evaluation.

\textbf{Related work.} We focus on three lines of literature most related to our work. \textbf{\emph{(i) Pluralistic alignment.}}

Existing methods for pluralistic preferences can be broadly divided into two categories. The first leverages auxiliary information about annotators, such as demographic attributes, group memberships, or user-specific information, to learn group-specific or personalized reward functions \citep{chakraborty2024maxmin,poddar2024personalizing,chen2024pal,chen2025pal,kim2026swapguided}.
The second represents rewards as distributions rather than scalars, often referred to as distributional or uncertainty-aware reward model. Since this line of work is closely related to ours, we discuss it in detail below. \textbf{\emph{(ii) Distributional reward model.}} Existing distributional framework can be divided into three categories: mean-variance reward models \citep{lou2024uncertainty,siththaranjandistributional,sun2025probabilistic,zhang2025diverging,liu2025uncertaintyquantificationlargelanguage,frick2025reward}, categorical reward models \citep{siththaranjandistributional,zhang2025diverging,chen2026learning}, and ensemble-based reward models \citep{coste2023reward,lou2024uncertainty,halpern2026pairwise}. Among them, mean-variance reward models are the most related to our work. \cite{frick2025reward} only briefly discusses
the unidentifiability of Gaussian reward models and their empirical results show comparable performance to the BT model. In contrast, our work resolves this non-identifiability via two coarse anchor labels and obtains clear empirical improvements over other baselines. \textbf{\emph{(iii) Auxiliary supervision.}} A growing line of work uses auxiliary fine-grained supervision for BT reward modeling, such as teacher reward values \citep{zhang2026distill} or attribute-level scores \citep{zhang2026bradleyterry}. In contrast, we employ external reward models or LLM as a judge solely as a low-cost tool to overcome the lack of reponse-level coarse quality labels (e.g., good / normal / bad) in existing preference data. The closest prior work is \citet{chen2026learning}, which uses such coarse quality labels to train a categorical reward model. From a statistical perspective, we instead use these coarse labels as identification constraints for a Gaussian reward model, and prove both identifiability and a non-asymptotic convergence rate. We further empirically evaluate the robustness of our method across different coarse anchor label sources, and analyze anchor data efficiency by varying proportions of available anchor data.

\section{Preliminaries}
Let $\mathcal{X}$ be the prompt space and $\mathcal{Y}$ be the response space. We consider a pairwise preference dataset $\mathcal{D}_{\mathrm{pref}}
= \{(x_i, y_{i,1}, y_{i,2}, C_i)\}_{i=1}^{n_{\mathrm{pref}}}$ consisting of $n_{\mathrm{pref}}$ i.i.d.\ tuples, where $x \in \mathcal{X}$ is a prompt, $y_{i,1}, y_{i,2} \in \mathcal{Y}$ are two candidate responses and
$C_i \in \{0,1\}$ is the observed binary preference label, with $C_i = 1$
indicating that $y_{i,1}$ is preferred over $y_{i,2}$.

\textbf{BT Reward Model.} Most existing reward-based RLHF algorithms adopt the BT assumption \citep{bradley1952rank}. We assume the latent utility of a response $y$ given a prompt $x$ is $U(x,y) = r_\phi(x,y) + \epsilon$, where $r_\phi: \mathcal{X} \times \mathcal{Y} \to \mathbb{R}$ is a parametrized scalar reward function and $\epsilon$ is i.i.d.\ standard Gumbel noise. The probability that $y_{i,1}$ is preferred to $y_{i,2}$ is
{
\setlength{\abovedisplayskip}{4pt}
\setlength{\belowdisplayskip}{4pt}
\begin{equation}
\mathbb{P}(C_i = 1 \mid x_i, y_{i,1}, y_{i,2})
=
\mathbb{P}\bigl(U(x_i, y_{i,1}) > U(x_i, y_{i,2})\bigr)
=
\sigma\!\bigl(r_\phi(x_i, y_{i,1}) - r_\phi(x_i, y_{i,2})\bigr)
\end{equation}
}
where $\sigma(t)=1/(1+e^{-t})$ is the sigmoid function. In practice, we optimize the reward function by minimizing the negative log-likelihood loss over the preference dataset: $\mathcal{L}_{\mathrm{BT}}(\mathcal{D}_{\mathrm{pref}} \mid \phi) = - \frac{1}{n_{\mathrm{pref}}} \sum_{i=1}^{n_{\mathrm{pref}}} \left[ C_i \log \sigma(z_i) + (1 - C_i) \log \bigl(1 - \sigma(z_i)\bigr) \right]$, where $z_i= r_\phi(x_i, y_{i,1}) - r_\phi(x_i, y_{i,2})$ is the reward margin.

\textbf{Gaussian Reward Model.} Following the Thurstonian model assumption \citep{thurstone1927method}, we consider a Gaussian reward model that explicitly captures heteroscedastic uncertainty in human preferences. Specifically, for each $(x,y)$, we assume the latent utility follows $U(x,y) \sim \mathcal{N}\!\bigl(r_\phi(x,y),\, s_\phi^2(x,y)\bigr)$, where $r_\phi(x,y)$ is the reward mean function and $s_\phi(x,y) > 0$ is the reward standard deviation function. For a comparison $(x_i, y_{i,1}, y_{i,2})$, the preference probability is
{
\setlength{\abovedisplayskip}{3pt}
\setlength{\belowdisplayskip}{3pt}
\begin{equation}
\mathbb{P}(C_i = 1 \mid x_i, y_{i,1}, y_{i,2})
=
\mathbb{P}\bigl(U(x_i, y_{i,1}) > U(x_i, y_{i,2})\bigr)
=
\Phi\!\left(
\frac{
r_\phi(x_i, y_{i,1}) - r_\phi(x_i, y_{i,2})
}{
\sqrt{
s_\phi^2(x_i, y_{i,1}) + s_\phi^2(x_i, y_{i,2})
}}
\right),
\label{gaussian_p}
\end{equation}
}
where $\Phi(\cdot)$ denotes the standard normal cumulative distribution
function. Similar to the BT model, we optimize both functions by minimizing the negative log-likelihood over the preference dataset:
{
\setlength{\abovedisplayskip}{3pt}
\setlength{\belowdisplayskip}{3pt}
\begin{equation}
\mathcal{L}_{\mathrm{G}}(\mathcal{D}_{\mathrm{pref}} \mid \phi) = - \frac{1}{n_{\mathrm{pref}}} \sum_{i=1}^{n_{\mathrm{pref}}} \left[ C_i \log \Phi(z_i^{\mathrm{G}}) + (1 - C_i) \log(1 - \Phi(z_i^{\mathrm{G}})) \right],
\label{eq:gaussian-margin}
\end{equation}
}
where
$z_i^{\mathrm{G}} = \bigl(r_\phi(x_i, y_{i,1}) - r_\phi(x_i, y_{i,2})\bigr) / \sqrt{s_\phi^2(x_i, y_{i,1}) + s_\phi^2(x_i, y_{i,2})}$
is the variance-normalized reward margin. The identifiability issue of this formulation is analyzed in
Section~\ref{sec:Identifiability}.

\section{Identifiability}\label{sec:Identifiability}

We first demonstrate that, given only pairwise preference dataset $\mathcal{D}_{\mathrm{pref}}$, the reward mean
and standard deviation functions $(r_\phi, s_\phi)$ cannot be uniquely identified.

\begin{proposition}[Non-identifiability of the Gaussian reward model]
\label{prop:nonid}
The Gaussian preference probability defined in \eqref{gaussian_p}
is invariant under the following two families of function transformations: \emph{(i) Reward translation.} For any constant $b \in \mathbb{R}$, the preference probability is invariant under the mapping $(r_\phi, s_\phi) \mapsto (r_\phi + b,\, s_\phi)$. \emph{(ii) Joint positive scaling.} For any scalar $c > 0$,
the preference probability is invariant under the mapping $(r_\phi, s_\phi) \mapsto (c\, r_\phi,\, c\, s_\phi)$.
\end{proposition}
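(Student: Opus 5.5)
The plan is a direct substitution argument. By \eqref{gaussian_p}, the preference probability is a fixed function of the variance-normalized margin $z^{\mathrm{G}}$ from \eqref{eq:gaussian-margin}, namely $\Phi(z^{\mathrm{G}})$. So it suffices to show that $z^{\mathrm{G}}$ is unchanged under each transformation, for every triple $(x, y_1, y_2)$. I would state this reduction first, noting that $\Phi$ is applied pointwise, and then treat the two families separately.

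For (i), replace $r_\phi$ by $r_\phi + b$ while keeping $s_\phi$ fixed. The numerator becomes $(r_\phi(x,y_1)+b) - (r_\phi(x,y_2)+b) = r_\phi(x,y_1) - r_\phi(x,y_2)$, so the constant cancels. The denominator $\sqrt{s_\phi^2(x,y_1)+s_\phi^2(x,y_2)}$ does not involve $r_\phi$ at all. Hence $z^{\mathrm{G}}$ is unchanged.

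For (ii), replace $(r_\phi, s_\phi)$ by $(c\,r_\phi, c\,s_\phi)$ with $c>0$. The numerator is multiplied by $c$. The denominator becomes $\sqrt{c^2 s_\phi^2(x,y_1) + c^2 s_\phi^2(x,y_2)} = |c|\sqrt{s_\phi^2(x,y_1)+s_\phi^2(x,y_2)}$. The only point needing care is that $c>0$ gives $|c| = c$, so the factors cancel. One should also check that the scaled pair is still admissible: $c\,s_\phi > 0$ holds, so the denominator stays strictly positive and the new pair is a valid Gaussian reward model.

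Neither step has a real obstacle; the argument is elementary. The two families compose, giving the affine orbit $(r_\phi, s_\phi)\mapsto(c\,r_\phi + b,\, c\,s_\phi)$, and I would remark that it follows immediately that $\mathcal{L}_{\mathrm{G}}$ is constant along this orbit. That is why the two anchors introduced later are needed: one to fix the location $b$ and one to fix the scale $c$.
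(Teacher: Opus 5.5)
Your direct-substitution argument is correct and matches the paper, which gives no separate proof because the claim follows immediately from Eq.~\eqref{gaussian_p}: the constant $b$ cancels in the numerator of $z^{\mathrm{G}}$, and for $c>0$ the factor $c$ cancels between numerator and denominator. Your extra checks, that $|c|=c$ and that $c\,s_\phi>0$ keeps the scaled pair a valid Gaussian reward model, are the right small points to make explicit.
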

Thus, different choices of \((r_\phi, s_\phi)\) induce the same distribution over observed preferences. In the BT formulation, the noise distribution is fixed, so $r_\phi$ is identifiable up to an additive constant.
The additional modeling flexibility provided by the
learnable variance $s_\phi^2(x,y)$ in the Gaussian model introduces a new identifiability issue. Therefore, without additional constraints or regularization, neither the absolute level of $r_\phi$ nor the joint scale of
$(r_\phi, s_\phi)$ is identifiable only from $\mathcal{D}_{\mathrm{pref}}$.

To resolve the non-identifiability under preference-only data, we
impose additional response-level anchoring constraints. Specifically,
suppose we have access to an additional anchor dataset
$\mathcal{D}_{\mathrm{anc}} = \{(x_j, y_j, A_{j,1}, A_{j,2})\}_{j=1}^{n_{\mathrm{anc}}}$
consisting of $n_{\mathrm{anc}}$ i.i.d.\ tuples, where each prompt--response
pair $(x_j, y_j)$ is associated with two binary labels indicating whether
the response quality exceeds two fixed anchor thresholds $\tau_1 < \tau_2$. Formally, for $k \in \{1, 2\}$, the anchor label is
defined as $A_{j,k} = \mathbf{1}\{U(x_j, y_j) \geq \tau_k\}$. Under the
Gaussian reward model, the probability that the latent utility exceeds
$\tau_k$ is
{
\setlength{\abovedisplayskip}{3pt}
\setlength{\belowdisplayskip}{3pt}
\begin{equation}
\label{eq:anchor-prob}
    q_{k}(x,y)
    = \mathbb{P}\bigl(U(x,y) \geq \tau_k \mid x,y\bigr)
    = \Phi\!\left(\frac{r_\phi(x,y) - \tau_k}{s_\phi(x,y)}\right),
    \qquad k \in \{1, 2\}.
\end{equation}
}

 We now show that two distinct anchor thresholds are sufficient to
identify the function pair $(r_\phi, s_\phi)$.

\begin{proposition}[Identifiability under two anchors]
\label{prop:two-anchor-id}
Let $\tau_1 < \tau_2$ be two distinct anchor thresholds.
Under the Gaussian reward model, the two anchor probability functions
$q_{1}(x,y)$ and $q_{2}(x,y)$ uniquely identify the reward mean function
$r_\phi(x,y)$ and the standard deviation function $s_\phi(x,y)$ on the support of the prompt--response distribution.
\end{proposition}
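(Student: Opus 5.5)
The plan is a pointwise inversion. Fix any $(x,y)$ in the support and write $r = r_\phi(x,y)$ and $s = s_\phi(x,y) > 0$. By \eqref{eq:anchor-prob}, the pair $(q_1(x,y), q_2(x,y))$ is determined by $(r,s)$. It therefore suffices to show that the map $(r,s) \mapsto (q_1, q_2)$ is injective on $\mathbb{R} \times (0,\infty)$, and to write down its inverse explicitly.

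\textbf{Step 1: linearize.} Since $\Phi$ is a strictly increasing continuous bijection from $\mathbb{R}$ onto $(0,1)$, and $s > 0$ gives $q_k \in (0,1)$, we may apply $\Phi^{-1}$ to get two equations that are linear in $(r, 1/s)$:
\[
z_k := \Phi^{-1}\bigl(q_k(x,y)\bigr) = \frac{r - \tau_k}{s}, \qquad k \in \{1,2\}.
\]

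\textbf{Step 2: solve.} Subtracting the two equations gives $z_1 - z_2 = (\tau_2 - \tau_1)/s$. Because $\tau_1 < \tau_2$, the right-hand side is strictly positive, so $z_1 > z_2$ is automatic and
\[
s = \frac{\tau_2 - \tau_1}{z_1 - z_2} > 0 .
\]
Back-substituting then gives
\[
r = \tau_1 + s\,z_1 = \frac{\tau_2 z_1 - \tau_1 z_2}{z_1 - z_2}.
\]

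\textbf{Step 3: conclude uniqueness.} Suppose two pairs $(r,s)$ and $(\tilde r,\tilde s)$ produce identical functions $q_1$ and $q_2$ on the support. Then at every point they produce the same $z_1$ and $z_2$. The explicit formulas in Step 2 force $s = \tilde s$ and $r = \tilde r$ pointwise, so the two function pairs coincide on the support.

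As a sanity check against Proposition~\ref{prop:nonid}, note why the anchors remove both invariances. A translation $r \mapsto r+b$ changes $z_k$ because the thresholds $\tau_k$ are fixed. A joint scaling $(r,s) \mapsto (cr, cs)$ changes $(\tau_2 - \tau_1)/s$.

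The only delicate point, which is minor, is the non-degeneracy of the linear system in $(r, 1/s)$. This is precisely where distinctness of the thresholds is used. With a single anchor, or with $\tau_1 = \tau_2$, one equation cannot pin down both unknowns, and a one-parameter family of solutions survives.
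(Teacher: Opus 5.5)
Your argument is correct and is essentially the paper's own: apply $\Phi^{-1}$ pointwise to get $r-\tau_k=s\,z_k$, subtract to obtain $s=(\tau_2-\tau_1)/(z_1-z_2)$, and back-substitute for $r$, with $\tau_1<\tau_2$ and $s>0$ guaranteeing $z_1\neq z_2$ (the paper's appendix records the same closed form $r=(\tau_2 z_1-\tau_1 z_2)/(z_1-z_2)$). One cosmetic fix: the equations $z_k=(r-\tau_k)/s$ are linear in $(r/s,1/s)$, or in $(r,s)$ after multiplying through by $s$, but not in $(r,1/s)$; your explicit solution does not depend on that phrasing.
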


The intuition behind Proposition~\ref{prop:two-anchor-id} is simple: for each prompt--response pair $(x,y)$, the two anchor probabilities provide two constraints for the two unknowns
$r_\phi(x,y)$ and $s_\phi(x,y)$. Specifically, we obtain the following equations $r_\phi(x,y) -\tau_1=s_\phi(x,y)\, z_{\mathrm{anc},1}(x,y)$ and $r_\phi(x,y) - \tau_2 =s_\phi(x,y)\, z_{\mathrm{anc},2}(x,y)$, where
$z_{\mathrm{anc},k}(x,y)= \Phi^{-1}\bigl(q_{k}(x,y)\bigr)$ for $k \in \{1, 2\}$. Solving for $s_\phi(x,y)$ and  $r_\phi(x,y)$ yields the following closed-form solutions: $s_\phi(x,y) =\frac{\tau_2 - \tau_1}{z_{\mathrm{anc},1}(x,y) - z_{\mathrm{anc},2}(x,y)}$ and $r_\phi(x,y) =\tau_1 + s_\phi(x,y)\, z_{\mathrm{anc},1}(x,y) =\tau_2 + s_\phi(x,y)\, z_{\mathrm{anc},2}(x,y)$. The solution is unique as long as $z_{\mathrm{anc},1}(x,y) \neq z_{\mathrm{anc},2}(x,y)$. Because $\tau_1 \neq \tau_2$ and $s_\phi(x,y)>0$, the condition $q_{1}(x,y) \neq q_{2}(x,y)$ is naturally satisfied. Since the same argument applies to every prompt--response pair on the support, the two anchor probability functions identify both $r_\phi$ and $s_\phi$. Therefore, two distinct anchors are sufficient. Additionally, we design a simulation study to empirically confirm that, unlike preference-only baselines, the two-anchor model recovers both the reward mean and variance (see Figure~\ref{fig:anchor-recovery} in Appendix~\ref{app:sim}).

\section{Methodology}\label{sec:method}

We introduce our proposed Anchor-guided
Variance-aware Reward Modeling framework.

\textbf{Model architecture.}
We parameterize the Gaussian reward model using a shared backbone
followed by two linear heads. The shared backbone is an instruction-tuned
LLM with its language model head removed. Given a prompt--response pair $(x,y)$, the shared backbone with parameter $\theta_{\mathrm{backbone}}$ produces a hidden representation $\psi(x,y) \in \mathbb{R}^d$. Two separate linear heads then map this
representation to the reward mean and variance: $r_\phi(x,y) = \mathbf{w}_r^\top \psi(x,y)$ and $s_\phi^2(x,y)=\operatorname{softplus}\!\bigl(\mathbf{w}_v^\top \psi(x,y)\bigr)$, where $\phi = \{\theta_{\mathrm{backbone}}, \mathbf{w}_r, \mathbf{w}_v\}$
denotes the full parameter set. We adopt the $\operatorname{softplus}$ activation to ensure $s_\phi^2(x,y) > 0$, a common choice for enforcing positivity in neural heteroscedastic modeling \citep{skafte2019reliable,seitzer2022pitfalls,stirn2023faithful}.

\textbf{Anchor label generation.}
To resolve the identifiability issue discussed in Section~\ref{sec:Identifiability}, we require additional coarse anchor labels at the response level. Such labels can be viewed as coarse quality assessments indicating whether a response is above a low or high quality threshold (e.g., good / normal / bad). However, existing preference datasets mainly focus on pairwise comparison labels and often lack response-level coarse quality labels. We therefore design two strategies to automatically construct coarse anchor labels. (1) \textit{\textbf{External reward model}}: For a prompt--response pair $(x_i,y_i)$, we compute a scalar score $\tilde{r}(x_i,y_i)$ from an external reward model. (2) \textit{\textbf{Human/LLM-as-a-judge.}} Each response can be evaluated by strong language models or human annotators along
multiple quality dimensions. These dimension-wise scores are then aggregated into a scalar reward score $\tilde{r}(x_i, y_i)$ via a weighted sum. For both strategies, we then compute the normalized scalar reward $\tilde{r}'(x_i,y_i)$ by subtracting the training set mean. The two thresholds $\tau_1 < \tau_2$ are then set to the $q$-th and $(1-q)$-th quantiles of the centered score distribution, yielding the anchor labels $A_{i,1} = \mathbf{1}\{\tilde{r}'(x_i, y_i) \geq \tau_1\}$ and $A_{i,2} = \mathbf{1}\{\tilde{r}'(x_i, y_i) \geq \tau_2\}$. Details are provided in Appendix~\ref{app:anchor_details}.

\textbf{Joint training objective.}
Human preferences often diverge across annotators \citep{sorensen2024position,zhang2025diverging,halpern2026pairwise}. Following recent work that treats annotator disagreement as informative soft labels \citep{halpern2026pairwise}, we train the preference model with soft labels rather than collapsing multiple annotations into a single hard label. Specifically, if the $i$-th comparison
$(x_i,y_{i,1},y_{i,2})$ is annotated by $m_i$ annotators, let
$C_{i,a}\in\{0,1\}$ denote the binary preference label provided by the
$a$-th annotator, where $C_{i,a}=1$ indicates that $y_{i,1}$ is preferred
over $y_{i,2}$. We define the soft preference label as $\bar{C}_i = \frac{1}{m_i}\sum_{a=1}^{m_i} C_{i,a}$, where $\bar{C}_i$ represents the empirical fraction of annotators who
prefer $y_{i,1}$ over $y_{i,2}$. Given the aggregated soft preference dataset
$\{(x_i, y_{i,1}, y_{i,2}, \bar{C}_i)\}_{i=1}^{n_{\mathrm{pref}}}$,
the negative log-likelihood is
{
\setlength{\abovedisplayskip}{3pt}
\setlength{\belowdisplayskip}{3pt}
\begin{equation}
\label{eq:pref-loss}
    \mathcal{L}_{\mathrm{pref}}(\phi)
    =
    -\frac{1}{n_{\mathrm{pref}}}
    \sum_{i=1}^{n_{\mathrm{pref}}}
    \Bigl[
    \bar{C}_i \log \Phi(z_i^{\mathrm{G}})
    +
    (1-\bar{C}_i)\log\bigl(1-\Phi(z_i^{\mathrm{G}})\bigr)
    \Bigr],
\end{equation}
}
where $z_i^{\mathrm{G}}$ is the variance-normalized reward margin defined in Eq.~\eqref{eq:gaussian-margin} Given an anchor dataset
$\mathcal{D}_{\mathrm{anc}}
= \{(x_j, y_j, A_{j,1}, A_{j,2})\}_{j=1}^{n_{\mathrm{anc}}}$,
where $A_{j,k}=\mathbf{1}\{U(x_j,y_j)\ge \tau_k\}$ and
$\tau_1<\tau_2$, the two anchor labels are derived from the same latent utility
$U(x_j, y_j)$ and thus satisfy the ordinal constraint
$A_{j,2} \leq A_{j,1}$. Specifically, the valid label configurations are
$(A_{j,1},A_{j,2})=(0,0)$, $(1,0)$, and $(1,1)$, corresponding to
$U(x_j,y_j)<\tau_1$, $\tau_1\le U(x_j,y_j)<\tau_2$, and
$U(x_j,y_j)\ge \tau_2$. The probabilities of three threshold-induced classes are $\pi_{j,0}=\mathbb{P}(A_{j,1}=0, A_{j,2}=0) = 1 - q_{1}(x_j, y_j)$, $\pi_{j,1}=\mathbb{P}(A_{j,1}=1, A_{j,2}=0) = q_{1}(x_j, y_j) - q_{2}(x_j, y_j)$ and $\pi_{j,2}=\mathbb{P}(A_{j,1}=1, A_{j,2}=1) = q_{2}(x_j, y_j)$, where $q_{k}(x,y)$ is the conditional anchor probability defined in Eq.~\eqref{eq:anchor-prob}.
Thus, the anchor negative log-likelihood is the multinomial
cross-entropy over the three threshold-induced classes:
{
\setlength{\abovedisplayskip}{3pt}
\setlength{\belowdisplayskip}{3pt}
\begin{equation}
\label{eq:anchor-loss}
\mathcal{L}_{\mathrm{anc}}(\phi)
    =
    -\frac{1}{n_{\mathrm{anc}}}
    \sum_{j=1}^{n_{\mathrm{anc}}}
    \Bigl[
    (1-A_{j,1}) \log \pi_{j,0}
    +
    (A_{j,1}-A_{j,2}) \log \pi_{j,1}
    +
    A_{j,2} \log \pi_{j,2}
    \Bigr].
\end{equation}
}
The joint training objective is to minimize the combined negative
log-likelihood: $\mathcal{L}(\phi)
    = \mathcal{L}_{\mathrm{pref}}(\phi) + \lambda\, \mathcal{L}_{\mathrm{anc}}(\phi)$, where $\lambda > 0$ is a weighting hyperparameter to control the strength of anchor supervision.

\section{Theory}

For notational simplicity, we define $u=\psi(x,y)\in\mathcal{U}\subset\mathbb{R}^d$,
where $\mathcal{U}$ is a compact domain. We
then specify the latent reward mean and log-standard-deviation functions as
$r(u)$ and $g(u)=\log s(u)$ in our theoretical analysis.
 We adopt the log-standard-deviation parameterization instead of the softplus form employed in
Section~\ref{sec:method}, since the log parameterization ensures that $g$ lies in an unconstrained
function space and is more convenient for sieve analysis.
Let $\{\mathcal{R}_K\}_{K\ge 1}$ and
$\{\mathcal{G}_K\}_{K\ge 1}$ be sequences of $K$-dimensional sieve spaces
(e.g., spline spaces) defined on $\mathcal{U}$. For a fixed constant $B_\Theta>B$, we define the truncated sieve space $\Theta_K(B_\Theta)=\left\{
(r,g)\in\mathcal R_K\times\mathcal G_K:
\|r\|_\infty\le B_\Theta,\
\|g\|_\infty\le B_\Theta
\right\}$.
We consider the aggregated soft-label
preference dataset
$\{(x_i,y_{i,1},y_{i,2},\bar C_i)\}_{i=1}^{n_{\mathrm{pref}}}$
defined in Section~\ref{sec:method}. Under the true functions
$(r^*,g^*)$, we assume $\mathbb E[\bar C_i\mid x_i,y_{i,1},y_{i,2}]
=
\Phi\!\left(z_i^{\mathrm G}(r^*,g^*)\right)$. The anchor label pair $(A_{j,1},A_{j,2})$ follows
the ordinal three-class multinomial distribution induced by the two
thresholds $\tau_1<\tau_2$. We estimate the true functions by minimizing the joint empirical loss
{
\setlength{\abovedisplayskip}{3pt}
\setlength{\belowdisplayskip}{3pt}
\[
(\hat r,\hat g)
\in
\arg\min_{(r,g)\in\Theta_K(B_\Theta)}
\Bigl\{
\mathcal L_{\mathrm{pref}}(r,g)
+
\lambda\,\mathcal L_{\mathrm{anc}}(r,g)
\Bigr\},
\]
}
where $\mathcal L_{\mathrm{pref}}(r,g)$ is the empirical preference loss defined in Eq.~\eqref{eq:pref-loss}, $\mathcal L_{\mathrm{anc}}(r,g)$ is the empirical anchor loss
defined in Eq.~\eqref{eq:anchor-loss}, and $\lambda>0$ is a weighting parameter.

We next introduce some assumptions to analyze this estimator.

\begin{assumption}[Data generation]
\label{ass:data}
Let $\mathbb{P}_\psi$ denote the marginal distribution of $\psi(x,y)$. For each anchor data $(x,y)$, we have $\psi(x,y)\sim\mathbb{P}_\psi$. For each preference data $(x,y_1,y_2)$, the two responses are conditionally i.i.d.\ given $x$, we have $\psi(x,y_1),\psi(x,y_2)\sim\mathbb{P}_\psi$.
\end{assumption}

\begin{assumption}[Structure and regularity]
\label{ass:1}
 The true functions $(r^*,g^*)$ are uniformly bounded by some $B>0$ and lie in H\"older classes with smoothness indices $\alpha_r,\alpha_g>0$.
\end{assumption}

\begin{assumption}[Sieve approximation and stability]
\label{ass:2}
The sieve spaces $\mathcal R_K$ and $\mathcal G_K$ approximate $r^*$ and $g^*$
at the standard nonparametric rates and the basis functions are uniformly stable.

\end{assumption}

\begin{assumption}[Sieve dimension]
\label{ass:dim}
The sieve dimension $K$ is allowed to grow with the sample sizes and satisfies $\frac{K}{\min\{n_{\mathrm{pref}},n_{\mathrm{anc}}\}}\to 0$.
\end{assumption}

More detailed versions of Assumption~\ref{ass:data}--\ref{ass:2} are provided in
Appendix~\ref{app:additional}. Assumption \ref{ass:1} requires the feature representation to be uniformly bounded and non-degenerate, and the true functions to be smooth and uniformly bounded.
 These conditions are standard in both the RLHF literature \citep{zhu2023principled,liu2025uncertaintyquantificationlargelanguage,zhang2026bradleyterry} and nonparametric sieve estimation \citep{shen1994convergence,newey1997convergence,chen2007large}.
 Assumption~\ref{ass:2} ensures that the true functions can be well approximated by finite-dimensional sieve spaces, while Assumption~\ref{ass:dim} controls the growth of the sieve dimension relative to the sample sizes.

 \begin{proposition}[Global quadratic curvature of the two-anchor population loss]
\label{prop:anchor-global-curvature}
We define the population anchor loss $\mathcal M_{\mathrm{anchor}}(r,g)
$ and the optimal population anchor loss $\mathcal M_{\mathrm{anchor}}(r^*,g^*)$. Under Assumptions~\ref{ass:1}--\ref{ass:2}, there exists a constant
$c_{\mathrm{anc}}>0$, depending only on $B_\Theta,\tau_1,\tau_2$, such
that for all $(r,g)\in\Theta_K(B_\Theta)$,
{
\setlength{\abovedisplayskip}{3pt}
\setlength{\belowdisplayskip}{3pt}
\[
\mathcal{M}_{\mathrm{anchor}}(r,g)
-
\mathcal{M}_{\mathrm{anchor}}(r^*,g^*)
\ge
c_{\mathrm{anc}}
\left(
\|r-r^*\|_{L_2(\mathbb{P}_\psi)}^2
+
\|g-g^*\|_{L_2(\mathbb{P}_\psi)}^2
\right).
\]
}
\end{proposition}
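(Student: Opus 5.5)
The plan is to write the population excess anchor loss as an expected Kullback--Leibler divergence between three-class multinomial distributions and then reduce it, pointwise in $u$, to a quadratic lower bound in $(r(u)-r^*(u),\,g(u)-g^*(u))$. Concretely, with $\pi_\ell(u;r,g)$ the class probabilities induced by $q_k(u)=\Phi((r(u)-\tau_k)e^{-g(u)})$, I will first establish the identity
\[
\mathcal M_{\mathrm{anchor}}(r,g)-\mathcal M_{\mathrm{anchor}}(r^*,g^*)
=\mathbb E_{\mathbb P_\psi}\Bigl[\mathrm{KL}\bigl(\pi(u;r^*,g^*)\,\|\,\pi(u;r,g)\bigr)\Bigr].
\]
This holds because the anchor labels at $u$ follow the ordinal multinomial law with probabilities $\pi(u;r^*,g^*)$. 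The first step is then to lower bound the KL divergence by a squared distance between probability vectors. I would use Pinsker's inequality or, more usefully, the bound $\mathrm{KL}(p\|q)\ge \sum_\ell (p_\ell-q_\ell)^2/(2\max)\ge \tfrac12\|p-q\|_2^2$. Since $\pi_0=1-q_1$ and $\pi_2=q_2$, this reduces to $\mathrm{KL}\ge c\,[(q_1-q_1^*)^2+(q_2-q_2^*)^2]$.

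The second step passes from probabilities to the probit scale. Write $z_k=(r-\tau_k)e^{-g}$. Since $\|r\|_\infty,\|g\|_\infty\le B_\Theta$ and the same holds for $(r^*,g^*)$ because $B<B_\Theta$, every $z_k$ lies in a compact interval $[-M,M]$ with $M$ depending only on $B_\Theta,\tau_1,\tau_2$. By the mean value theorem, $|\Phi(z_k)-\Phi(z_k^*)|\ge \varphi(M)\,|z_k-z_k^*|$. This gives $\mathrm{KL}\ge c'\,[(z_1-z_1^*)^2+(z_2-z_2^*)^2]$.

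The third step, and the key one, is a uniform local-to-global invertibility of the map $F:(r,g)\mapsto (z_1,z_2)$ on the box $[-B_\Theta,B_\Theta]^2$. This is where Proposition~\ref{prop:two-anchor-id} enters. The inverse is explicit:
\[
e^{g}=\frac{\tau_2-\tau_1}{z_1-z_2},\qquad r=\tau_1+e^{g}z_1 .
\]
Since $z_1-z_2=(\tau_2-\tau_1)e^{-g}\in[(\tau_2-\tau_1)e^{-B_\Theta},(\tau_2-\tau_1)e^{B_\Theta}]$, the image $F([-B_\Theta,B_\Theta]^2)$ is contained in the region $D=\{|z_k|\le M,\ z_1-z_2\ge \delta\}$ with $\delta=(\tau_2-\tau_1)e^{-B_\Theta}>0$. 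This region is convex, so the segment between two image points stays inside it. On $D$ the inverse map $G=F^{-1}$ is smooth with gradient bounded by some $L$ depending on $B_\Theta,\tau_1,\tau_2$. Applying the mean value inequality along that segment gives
\[
|r-r^*|^2+|g-g^*|^2\le L^2\bigl[(z_1-z_1^*)^2+(z_2-z_2^*)^2\bigr].
\]
Combining the steps pointwise and integrating against $\mathbb P_\psi$ yields the claim with $c_{\mathrm{anc}}=c'/L^2$.

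The main obstacle I anticipate is making this third step genuinely global rather than local, which is exactly why the convexity of $D$ matters. I also need care that the KL-to-squared-distance bound does not require the probabilities to be bounded away from zero. The $\chi^2$-type bound needs the middle-class probability $\pi_1=q_1-q_2$ bounded below, which follows from $z_1-z_2\ge\delta$ and $|z_k|\le M$. To avoid that issue entirely I would use Pinsker's inequality, since $\mathrm{KL}\ge 2\,\mathrm{TV}^2\ge \tfrac12\|p-q\|_2^2$ holds without any such condition.
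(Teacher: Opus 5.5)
Your proposal is correct and follows the paper's proof essentially step for step. The paper likewise writes the excess loss as an expected KL divergence, applies Pinsker's inequality to reduce it to $(q_1-q_1^*)^2+(q_2-q_2^*)^2$, and passes to the probit scale using the lower bound $\varphi(M)$ on the normal density over $[-M,M]$, which it phrases as $\Phi^{-1}$ being $1/\varphi(M)$-Lipschitz. It then applies the mean value inequality to the explicit inverse map on the convex set $D$, so your constant $\varphi(M)^2/(2L^2)$ coincides with its $c_{\mathrm{anc}}=1/(2L_z^2L_\Phi^2)$.
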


Without two-anchor labels, the preference loss alone is invariant to certain transformations discussed in Section \ref{sec:Identifiability}. Two distinct anchor thresholds break the degeneracy, and Proposition~\ref{prop:anchor-global-curvature} provides a quadratic lower bound on the anchor population excess loss, which extends to the joint population excess loss (See Corollary~\ref{cor:joint-curvature} in Appendix). This quadratic curvature converts the empirical fluctuation between the empirical and population excess losses into estimation error bounds, yielding the following non-asymptotic rate.

\begin{theorem}[Non-asymptotic rate for the sieve MLE]
\label{thm:main-rate}
Under Assumptions~\ref{ass:data}, \ref{ass:1} and \ref{ass:2}, there exists a constant $C>0$, depending only on
$B_\Theta,\tau_1,\tau_2$, such that for every $t\ge 1$, with
probability at least $1-2e^{-t}$,
{\small
\setlength{\abovedisplayskip}{3pt}
\setlength{\belowdisplayskip}{3pt}
\begin{equation}
\begin{aligned}
\|\hat r-r^*\|_{L_2(\mathbb{P}_\psi)} + \|\hat g-g^*\|_{L_2(\mathbb{P}_\psi)}
&\le C \Big[ \sqrt{1+\lambda^{-1}} \left( K^{-\alpha_r/d} + K^{-\alpha_g/d} \right) + \lambda^{-1}\sqrt{\tfrac{K+t}{n_{\mathrm{pref}}}} + \sqrt{\tfrac{K+t}{n_{\mathrm{anc}}}} \Big]
\end{aligned}
\end{equation}
}

In particular, with $\alpha:=\min\{\alpha_r,\alpha_g\}$,
{\small
\setlength{\abovedisplayskip}{3pt}
\setlength{\belowdisplayskip}{3pt}
\begin{equation}
\|\hat r-r^*\|_{L_2(\mathbb{P}_\psi)} + \|\hat g-g^*\|_{L_2(\mathbb{P}_\psi)} \le C\!\left[ \sqrt{1+\lambda^{-1}}\,K^{-\alpha/d} + \lambda^{-1}\sqrt{\tfrac{K+t}{n_{\mathrm{pref}}}} + \sqrt{\tfrac{K+t}{n_{\mathrm{anc}}}} \right]
\end{equation}
}

\end{theorem}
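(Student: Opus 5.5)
We follow the standard sieve-M-estimation template: a basic inequality, a curvature lower bound, and a uniform deviation bound over the $K$-dimensional sieve. Write $\mathcal M(r,g)=\mathcal M_{\mathrm{pref}}(r,g)+\lambda\mathcal M_{\mathrm{anchor}}(r,g)$ for the population joint loss and $\mathcal L$ for its empirical version. Let $(r_K,g_K)\in\Theta_K(B_\Theta)$ be the sieve approximant from Assumption~\ref{ass:2}, with $\|r_K-r^*\|_\infty\lesssim K^{-\alpha_r/d}$ and $\|g_K-g^*\|_\infty\lesssim K^{-\alpha_g/d}$. Because $B_\Theta>B$, $(r_K,g_K)$ is feasible for large $K$. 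Optimality of $(\hat r,\hat g)$ gives
\[
\mathcal M(\hat r,\hat g)-\mathcal M(r^*,g^*)\le \bigl[\mathcal M(r_K,g_K)-\mathcal M(r^*,g^*)\bigr]+\bigl[(\mathcal M-\mathcal L)(\hat r,\hat g)-(\mathcal M-\mathcal L)(r_K,g_K)\bigr].
\]

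\textbf{Step 1 (bounds on the excess loss).} On $\Theta_K(B_\Theta)$ the arguments $z^{\mathrm G}$ and $(r-\tau_k)e^{-g}$ lie in a compact set. The cell probabilities $\pi_0,\pi_1,\pi_2$ are bounded below there: $\pi_1$ is bounded below because $\tau_2-\tau_1>0$ and $e^{g}\le e^{B_\Theta}$. Likewise $\Phi(z^{\mathrm G})$ stays away from $0$ and $1$. Hence both losses are smooth, Lipschitz functions of $(r,g)$, and their population excess losses are bounded above by a constant times $\|r-r^*\|_{L_2}^2+\|g-g^*\|_{L_2}^2$. For the preference term this uses Assumption~\ref{ass:data}, so that the $L_2$ norm over pairs is controlled by $\mathbb P_\psi$-norms. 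The approximation term is therefore $\lesssim (1+\lambda)(K^{-2\alpha_r/d}+K^{-2\alpha_g/d})$.

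For the lower bound, the preference excess loss is nonnegative, since it is a KL divergence between Bernoulli laws at the truth. Proposition~\ref{prop:anchor-global-curvature} (equivalently Corollary~\ref{cor:joint-curvature}) then gives
\[
\mathcal M(\hat r,\hat g)-\mathcal M(r^*,g^*)\ge \lambda c_{\mathrm{anc}}\Delta^2,\qquad \Delta^2:=\|\hat r-r^*\|^2_{L_2}+\|\hat g-g^*\|^2_{L_2}.
\]

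\textbf{Step 2 (empirical process, the main obstacle).} We control the centered increment separately for the preference sample and the anchor sample. Each loss is Lipschitz in $(r,g)$ with a constant depending only on $(B_\Theta,\tau_1,\tau_2)$, and the class is a bounded subset of a $2K$-dimensional linear space. A localized Talagrand/Bernstein bound, or a peeling argument with a covering number of order $(C/\epsilon)^{2K}$, then shows that with probability at least $1-e^{-t}$ per sample
\[
|(\mathcal L_{\mathrm{pref}}-\mathcal M_{\mathrm{pref}})(\hat\theta)-(\mathcal L_{\mathrm{pref}}-\mathcal M_{\mathrm{pref}})(\theta_K)|\lesssim \|\hat\theta-\theta_K\|_{L_2}\sqrt{\tfrac{K+t}{n_{\mathrm{pref}}}}+\tfrac{K+t}{n_{\mathrm{pref}}},
\]
and the analogous bound holds for the anchor term with $n_{\mathrm{anc}}$ and a factor $\lambda$. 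Uniform stability of the basis (Assumption~\ref{ass:2}) is what lets us pass from sup-norm to $L_2$-norm entropy, so that the variance proxy scales with $\|\hat\theta-\theta_K\|_{L_2}$. Assumption~\ref{ass:dim} makes the $(K+t)/n$ terms lower order. A union bound gives the overall probability $1-2e^{-t}$.

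\textbf{Step 3 (solving the inequality).} Bound $\|\hat\theta-\theta_K\|\le\Delta+\|\theta_K-\theta^*\|$ and combine Steps 1 and 2:
\[
\lambda c\Delta^2\le (1+\lambda)a_K^2+\Delta\Bigl(\sqrt{\tfrac{K+t}{n_{\mathrm{pref}}}}+\lambda\sqrt{\tfrac{K+t}{n_{\mathrm{anc}}}}\Bigr)+\text{lower-order terms},
\]
where $a_K=K^{-\alpha_r/d}+K^{-\alpha_g/d}$. Dividing by $\lambda$ and solving the quadratic in $\Delta$ (via $ab\le a^2/(2\epsilon)+\epsilon b^2/2$) gives
\[
\Delta\lesssim \sqrt{1+\lambda^{-1}}\,a_K+\lambda^{-1}\sqrt{\tfrac{K+t}{n_{\mathrm{pref}}}}+\sqrt{\tfrac{K+t}{n_{\mathrm{anc}}}}.
\]
Finally, $\|\hat r-r^*\|+\|\hat g-g^*\|\le\sqrt2\,\Delta$, and the second display of the theorem follows from $K^{-\alpha_r/d}+K^{-\alpha_g/d}\le 2K^{-\alpha/d}$.

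The delicate point is Step 2. Obtaining the $\Delta$-proportional (localized) variance term rather than a crude $\sqrt{K/n}$ bound requires checking that the log-likelihood increments have second moment $\lesssim\|\theta-\theta_K\|_{L_2}^2$ uniformly on $\Theta_K(B_\Theta)$. For the preference loss this rests on the Lipschitz property of $\log\Phi$ and $\log(1-\Phi)$ on bounded arguments, together with Assumption~\ref{ass:data}.
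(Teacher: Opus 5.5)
Your route is the paper's own proof, step for step. You use the basic inequality from empirical optimality against a sieve approximant $(r_K,g_K)$. The lower bound $\lambda c_{\mathrm{anc}}\Delta^2$ comes from nonnegativity of the Bernoulli KL plus the anchor curvature (Corollary~\ref{cor:joint-curvature}). The approximation term gets a quadratic upper bound with constant $C_{\mathrm{pref,smooth}}+\lambda C_{\mathrm{anc}}$ (Lemma~\ref{lem:pop-smoothness}). The stochastic term is handled by a localized empirical-process bound with $2K$-dimensional entropy and peeling (Lemma~\ref{lem:empirical-process}). Finally, you divide by $\lambda$ and apply Young's inequality. Centering the increments at $\theta_K$ rather than at $(r^*,g^*)$, as the paper does, is harmless and produces the same cross term $a_K\lambda\Delta_t$.

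The one step whose justification does not hold is the disposal of the second-order terms. After dividing by $\lambda c_{\mathrm{anc}}$, the Bernstein-type remainder is $\lambda^{-1}\frac{K+t}{n_{\mathrm{pref}}}+\frac{K+t}{n_{\mathrm{anc}}}$. The anchor piece is exactly $\bigl(\sqrt{(K+t)/n_{\mathrm{anc}}}\bigr)^2\le\Delta_t^2$, so it needs nothing. The preference piece satisfies $\lambda^{-1}\frac{K+t}{n_{\mathrm{pref}}}=\lambda\bigl(\lambda^{-1}\sqrt{(K+t)/n_{\mathrm{pref}}}\bigr)^2\le\lambda\Delta_t^2$. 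That is absorbed into $C\Delta_t^2$ with a $\lambda$-free constant only when $\lambda\le 1$.

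Assumption~\ref{ass:dim} does not fix this, and it is not a hypothesis of the theorem anyway. Negligibility of a term in the $\Delta^2$-inequality is measured against $\Delta_t^2$, not against $1$. Making $(K+t)/n_{\mathrm{pref}}$ small does not remove the factor $\lambda$. For $\lambda>1$ your argument leaves an extra $\lambda^{-1/2}\sqrt{(K+t)/n_{\mathrm{pref}}}$ in the final bound, which exceeds $\lambda^{-1}\sqrt{(K+t)/n_{\mathrm{pref}}}$. The paper handles this by explicitly taking $0<\lambda\le 1$ in the proofs of both Lemma~\ref{lem:empirical-process} and the theorem, a restriction not visible in the statement. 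You should make the same restriction.

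A minor point: the paper's approximation assumption is stated in $L_2(\mathbb P_\psi)$, with the infimum already taken over $\Theta_K(B_\Theta)$. So you need neither a sup-norm approximant nor a ``large $K$'' feasibility argument, and only the $L_2$ approximation is used afterwards.
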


The bound in Theorem~\ref{thm:main-rate} consists of two parts: a deterministic approximation error $\sqrt{1+\lambda^{-1}}K^{-\alpha/d}$ arising from the sieve approximation of $(r^*,g^*)$, and a stochastic estimation error $\lambda^{-1}\sqrt{(K+t)/n_{\mathrm{pref}}}+\sqrt{(K+t)/n_{\mathrm{anc}}}$ controlled by the smaller of the two sample sizes. Notably, (i) the approximation error vanishes as $K\to\infty$ at the standard nonparametric rate; (ii) Under Assumption~\ref{ass:dim}, the stochastic term vanishes as
\(n_{\min}\to\infty\), and is governed by $n_{\min}$ rather than $n_{\mathrm{pref}}+n_{\mathrm{anc}}$, reflecting the complementary roles of the two datasets. Balancing the two parts via $K\asymp n_{\min}^{d/(2\alpha+d)}$ yields the optimal rate $\mathcal{O}_p(n_{\min}^{-\alpha/(2\alpha+d)})$, which matches the classical minimax rate for nonparametric
regression \citep{stone1982optimal}. Whereas existing RLHF analyses \citep{zhu2023principled} focus on the reward mean function, the proposed estimator additionally recovers the variance function, without sacrificing the classical nonparametric convergence rate under the two-anchor identification condition.  Auxiliary results and Detailed proofs are provided in Appendices~\ref{app:additional} and~\ref{app:proofs}, respectively.

\begin{table}[t]
\centering
\caption{
Overall performance of different models with the \texttt{Llama-3.2-3B-Instruct} as backbone. Bold numbers represent the best performance. Underlined numbers represent the second best.
}
\label{tab:rm_eval_3b}
\scriptsize
\setlength{\tabcolsep}{1.5pt}
\renewcommand{\arraystretch}{0.85}
\resizebox{\textwidth}{!}{
\begin{tabular}{l ccc ccc | l ccc ccc}
\toprule
\multirow{2}{*}{\textbf{Method}} & \multicolumn{3}{c}{\textbf{RewardBench}} & \multicolumn{3}{c|}{\textbf{PPE-P}} &
\multirow{2}{*}{\textbf{Method}} & \multicolumn{3}{c}{\textbf{RewardBench}} & \multicolumn{3}{c}{\textbf{PPE-P}} \\
\cmidrule(lr){2-4} \cmidrule(lr){5-7} \cmidrule(lr){9-11} \cmidrule(l){12-14}
& Acc $\uparrow$ & Brier $\downarrow$ & CE $\downarrow$ & Acc $\uparrow$ & Brier $\downarrow$ & CE $\downarrow$ &
& Acc $\uparrow$ & Brier $\downarrow$ & CE $\downarrow$ & Acc $\uparrow$ & Brier $\downarrow$ & CE $\downarrow$ \\
\midrule

\multicolumn{7}{c|}{\textbf{Dataset: MultiPref-8K}} & \multicolumn{7}{c}{\textbf{Dataset: HelpSteer3-20K}} \\
\midrule
BT                        & 71.7\% & 0.209 & 0.604 & \underline{59.3\%} & \underline{0.236} & \underline{0.665} &
BT                        & 79.4\% & 0.169 & 0.590 & 60.9\% & 0.267 & 0.862 \\
BT(Hard)                  & 75.9\% & 0.194 & 0.703 & 57.1\% & 0.295 & 0.962 &
BT(Hard)                  & 77.7\% & 0.178 & 0.639 & 60.3\% & 0.285 & 1.015 \\
Gaussian                  & 73.2\% & 0.201 & 0.584 & 57.7\% & 0.242 & 0.680 &
Gaussian                  & 78.7\% & 0.168 & 0.644 & 60.3\% & 0.264 & 0.932 \\
Two-anchor-Skywork        & \textbf{82.1\%} & \textbf{0.163} & \textbf{0.490} & \textbf{60.0\%} & \textbf{0.234} & \textbf{0.664} &
Two-anchor-Skywork        & \underline{82.7\%} & \underline{0.147} & \underline{0.506} & \textbf{63.0\%} & \underline{0.244} & 0.782 \\
Two-anchor-DeepSeek-Pro   & 78.4\% & 0.180 & 0.534 & 56.3\% & 0.244 & 0.682 &
Two-anchor-DeepSeek-Pro   & 79.9\% & 0.167 & 0.551 & \underline{61.4\%} & \textbf{0.244} & \textbf{0.732} \\
Two-anchor-DeepSeek-Flash & \underline{80.4\%} & \underline{0.167} & \underline{0.502} & 58.2\% & 0.239 & 0.672 &
Two-anchor-DeepSeek-Flash & \textbf{83.8\%} & \textbf{0.141} & \textbf{0.457} & \underline{61.4\%} & 0.246 & \underline{0.742} \\

\midrule
\multicolumn{7}{c|}{\textbf{Dataset: HelpSteer2-9K}} & \multicolumn{7}{c}{\textbf{Dataset: PersonalLLM-40K}} \\
\midrule
BT                        & \underline{84.8\%} & \underline{0.140} & \underline{0.444} & 58.8\% & 0.251 & 0.719 &
BT                        & 76.8\% & 0.209 & 0.625 & 59.2\% & 0.252 & 0.729 \\
BT(Hard)                  & 81.1\% & 0.149 & 0.509 & 58.4\% & 0.284 & 0.932 &
BT(Hard)                  & 74.2\% & 0.245 & 1.114 & 59.5\% & 0.305 & 1.251 \\
Gaussian                  & 81.9\% & 0.161 & 0.520 & 58.2\% & 0.251 & 0.729 &
Gaussian                  & 73.5\% & 0.220 & 0.644 & 59.5\% & 0.253 & 0.725 \\
Two-anchor-Skywork        & \textbf{85.5\%} & \textbf{0.132} & \textbf{0.413} & \textbf{60.2\%} & 0.241 & 0.694 &
Two-anchor-Skywork        & \textbf{80.3\%} & \textbf{0.181} & \textbf{0.546} & \textbf{61.2\%} & \underline{0.240} & 0.696 \\
Two-anchor-DeepSeek-Pro   & 82.1\% & 0.157 & 0.483 & 59.4\% & \textbf{0.234} & \textbf{0.660} &
Two-anchor-DeepSeek-Pro   & 77.1\% & 0.192 & 0.566 & 60.7\% & 0.241 & \underline{0.688} \\
Two-anchor-DeepSeek-Flash & 83.0\% & 0.151 & 0.467 & \underline{60.0\%} & \underline{0.237} & \underline{0.672} &
Two-anchor-DeepSeek-Flash & \underline{78.4\%} & \underline{0.188} & \underline{0.558} & \underline{60.8\%} & \textbf{0.238} & \textbf{0.680} \\

\bottomrule
\end{tabular}
}
\vspace{-10pt}
\end{table}

\section{Experiments}
\textbf{Datasets.} We evaluate our method on public preference datasets with annotator disagreement. By leveraging multiple annotations per comparison, we construct empirical soft labels that provide a natural signal for modeling preference uncertainty.
After preprocessing public datasets that satisfy this requirement, we obtain four diverging preference datasets with empirical soft labels: \texttt{MultiPref-8K} \citep{miranda2025hybrid}, \texttt{HelpSteer2-9K} \citep{wang2025helpsteerpreference}, \texttt{HelpSteer3-20K} \citep{wang2025helpsteer3preferenceopenhumanannotatedpreference}, \texttt{PersonalLLM-40K} \citep{zollo2025personalllm}. These datasets vary in both size and soft-label distributions, covering different levels of preference disagreement (see Figure~\ref{fig:soft-label-dist} in Appendix). Each dataset is split into training and held-out validation sets. Further details about dataset preprocessing and training are provided in Appendix~\ref{app:rlhf}.

\textbf{Baselines.} We employ the Llama-3 series of models (\texttt{Llama-3.1-8B-Instruct}, \texttt{Llama-3.2-3B-Instruct} and \texttt{Llama-3.2-1B-Instruct}) as the backbone for training reward model. We compare four reward modeling approaches. (1) \textbf{BT}: a standard reward model with the BT loss. (2) \textbf{BT(Hard)}: a BT reward model trained on hard label derived from majority voting (3) \textbf{Gaussian}: a standard gaussian reward model predicting both mean $r_\phi$ and standard deviation $s_\phi$. (4) \textbf{Two-anchor variants}: our proposed model, extending Gaussian with two anchors. We implement three variants: \textbf{Two-anchor-Skywork}, utilizing \texttt{Skywork-Reward-V2-Llama-3.2-3B} as the external reward model; \textbf{Two-anchor-DeepSeek-Pro} and \textbf{Two-anchor-DeepSeek-Flash}, employing \texttt{DeepSeek-V4-Pro} and \texttt{DeepSeek-V4-Flash} as LLM judges.

\begin{figure}[t]
    \centering

    \begin{subfigure}[b]{0.22\textwidth}
        \centering
        \includegraphics[width=\linewidth]{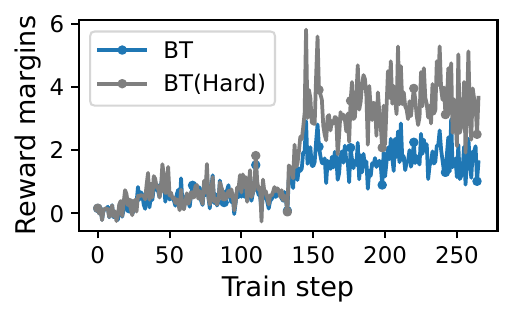}
        \caption{}
        \label{fig:margin}
    \end{subfigure}
    \hfill
    \begin{subfigure}[b]{0.22\textwidth}
        \centering
        \includegraphics[width=\linewidth]{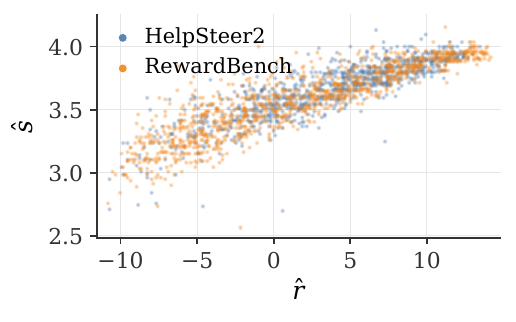}
        \caption{}
        \label{fig:mv_corr}
    \end{subfigure}
    \hfill
    \begin{subfigure}[b]{0.22\textwidth}
        \centering
        \includegraphics[width=\linewidth]{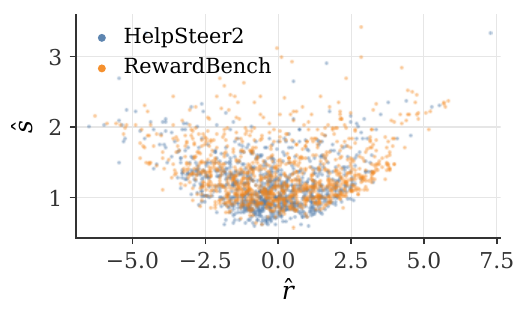}
        \caption{}
        \label{fig:mvanchor_corr}
    \end{subfigure}
    \hfill
\begin{subfigure}[b]{0.3\textwidth}
    \centering
    \begin{threeparttable}
        \resizebox{\linewidth}{!}{
            \begin{tabular}{lcccc}
                \toprule
                \multirow{2}{*}{Method} & \multicolumn{4}{c}{Pearson$(\hat{r},\hat{s})$} \\
                \cmidrule(lr){2-5}
                & MP & HS2 & HS3 & PLLM \\
                \midrule
                Gaussian    & 0.291 & \textbf{0.859} & \textbf{0.592} & \textbf{0.787} \\
                Two-anchor-Skywork  & \textbf{0.412} & 0.041 & -0.252 & 0.356 \\
                Two-anchor-DeepSeek-Pro   & -0.116 & -0.553 & -0.137 & -0.454 \\
                Two-anchor-DeepSeek-Flash & 0.158 & -0.110 & 0.006 & -0.061 \\
                \bottomrule
            \end{tabular}
        }
        \begin{tablenotes}[flushleft]
            \fontsize{5}{6}\selectfont \item \textit{Notes:} MP, HS2, HS3, PLLM denote MultiPref-8K, \\ HelpSteer2-9K,
            HelpSteer3-20K, PersonalLLM-40K.
        \end{tablenotes}
    \end{threeparttable}
    \caption{}
    \label{tab:corr}
\end{subfigure}

    \caption{
        \textbf{Uncertainty representation.}
        \textbf{(a)} BT vs BT(Hard) reward margin.
        \textbf{(b)} Estimated $\hat{r}$ versus $\hat{s}$  for Gaussian trained on \texttt{HelpSteer2}.
        \textbf{(c)} Estimated $\hat{r}$ versus $\hat{s}$ for Two-anchor-DeepSeek-Flash trained on \texttt{HelpSteer2}.
        \textbf{(d)} Pearson correlation between estimated $\hat{r}$ and $\hat{s}$. Results are all based on the Llama-3.2-3B backbone.
    }
    \label{fig:uncertainty_identifiability}
    \vspace{-10pt}
\end{figure}

\begin{figure}[t]
    \centering
    \includegraphics[width=0.93\linewidth]{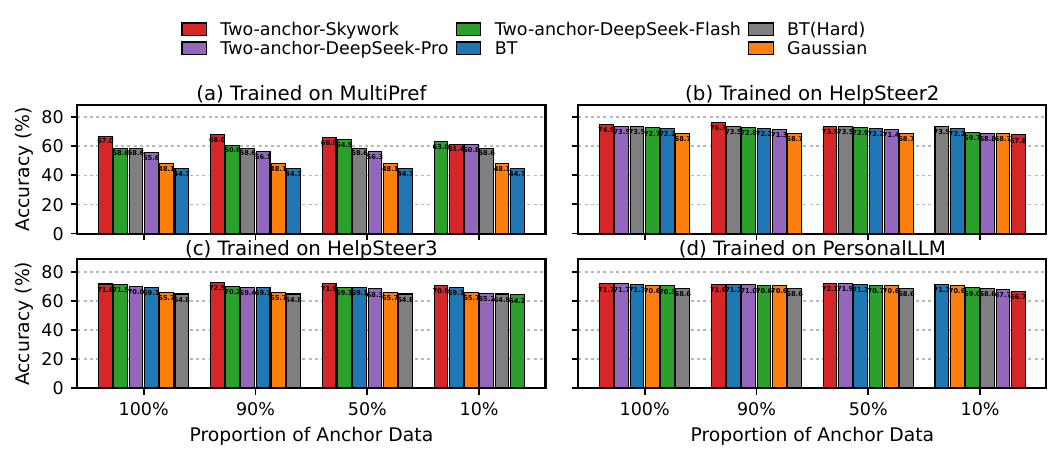}
    \caption{
        \textbf{Anchor data efficiency on RewardBench Accuracy.} Results are all based on the Llama-3.2-1B backbone.
    }
    \label{fig:anchor_efficiency}
    \vspace{-10pt}
\end{figure}

\subsection{Evaluation on reward modeling}

Following prior work, we train reward models on four datasets and then evaluate the learned models under two reward benchmarks, RewardBench \citep{lambert2025rewardbench} and PPE-P \citep{frick2025how}. We report pairwise accuracy and calibration metrics, including Cross-Entropy and Brier score.

\textbf{Main results.} Table~\ref{tab:rm_eval_3b} shows that Two-anchor variants achieve superior performance, not only achieving higher accuracy but also yielding better calibration metrics compared to other baselines. The improvements are consistent when using anchor labels produced by different strategies, including Skywork and DeepSeek-based variants. Meanwhile, Gaussian and BT demonstrate comparable performance, which
supports our motivation that variance modeling requires additional identification constraints.  Notably, BT(Hard) tends to degrade calibration metrics despite moderate accuracy gains. Full results across all backbones are provided in Appendix~\ref{app:experiment_results}

\textbf{Uncertainty representation.}
Figure~\ref{fig:uncertainty_identifiability} illustrates how preference uncertainty is represented by different models. As shown in Figure~\ref{fig:uncertainty_identifiability}(\subref{fig:margin}), BT trained on soft label produces smaller reward margins than BT(Hard), which confirms that annotator disagreement is represented by shrinking the reward margin. The Gaussian model introduces a variance head but $\hat r$ and $\hat s$ remain strongly correlated without anchor supervision (Figures~\ref{fig:uncertainty_identifiability}(\subref{fig:mv_corr})). In contrast, Two-anchor variants weaken this dependence, as shown in Figures~\ref{fig:uncertainty_identifiability}(\subref{fig:mvanchor_corr}) and Table~\ref{fig:uncertainty_identifiability}(\subref{tab:corr}). These results demonstrate that even coarse anchor labels provide sufficient identification signal to disentangle reward quality from preference uncertainty.

\textbf{Anchor data efficiency.} Figure~\ref{fig:anchor_efficiency} illustrates the effect of varying anchor data availability (from 10\% to 100\%) on RewardBench accuracy. Even with only 50\% of anchor data, Two-anchor variants still outperform nearly all baselines with little performance drop, demonstrating that our method is robust to limited anchor data. Such stability effectively reduces the cost of anchor data collection in practice.

\subsection{Evaluation on RLHF}

We evaluate the downstream performance of reward models through both
PPO training and best-of-$N$ (BoN) evaluation. Following prior work \citep{gao2023scaling,coste2023reward,yang2024regularizing,zhang2026bradleyterry}, we select \texttt{Llama-3.2-1B-Instruct} as the base policy model and \texttt{UltraRM-13b}, a strong open-source reward model trained on \texttt{UltraFeedback} with reported state-of-the-art performance among open-source reward models, as the gold reward model. The learned reward model based on the \texttt{Llama-3.2-3B-Instruct} backbone serves as the proxy reward model. In PPO experiment, we downsample 20K samples from the \texttt{Unified-Feedback} dataset to optimize the policy, reserving an additional 500 samples as a held-out set for evaluation. The optimized policy then generates responses for the held-out prompts, which are evaluated by the gold reward model.
For BoN, we apply the same 500 held-out prompts, selecting the best of n responses per prompt based on proxy reward models. The selected responses are then evaluated by the gold reward model. For both PPO and BoN, we average gold scores across all prompts to assess true quality.

Figure~\ref{fig:rlhf} presents PPO and BoN results on the \texttt{UltraFeedback} dataset. As shown in Figure~\ref{fig:rlhf}(\subref{fig:rlhf_a}), Two-anchor variants consistently outperform all baselines in BoN, with Two-anchor-Skywork achieving the best performance. We further evaluate the effect of quantile-based reward, which is defined as $\hat r_q(u)
=
\hat r(u)+\Phi^{-1}(q)\widehat s(u)$. Figure~\ref{fig:rlhf}(\subref{fig:rlhf_b}) shows that lower quantile rewards yield better mean gold scores. This suggests that penalizing response uncertainty leads to more reliable selection. As shown in Figure~\ref{fig:rlhf}(\subref{fig:rlhf_c}), Two-anchor variants maintain stable and strong performance throughout PPO training. Notably, BT, BT(Hard) and Gaussian perform similarly in BoN but BT(Hard) and Gaussian quickly degrade during PPO training due to reward hacking.

\begin{figure}[t]
    \centering

    \begin{subfigure}[b]{0.32\textwidth}
        \centering
        \includegraphics[width=\linewidth]{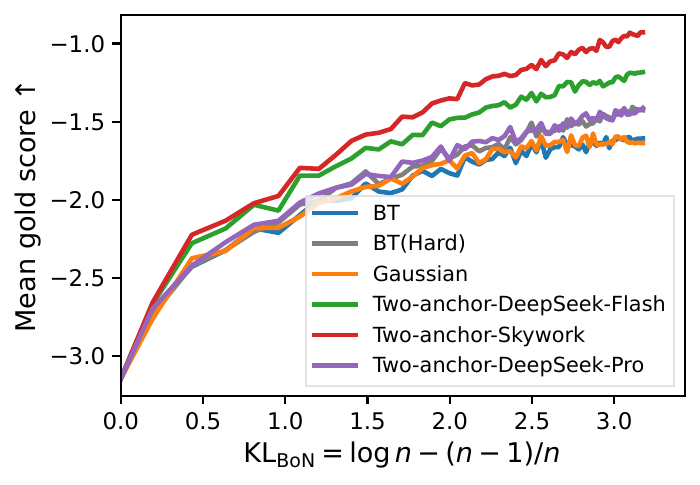}
        \caption{}
        \label{fig:rlhf_a}
    \end{subfigure}
    \hfill
    \begin{subfigure}[b]{0.32\textwidth}
        \centering
        \includegraphics[width=\linewidth]{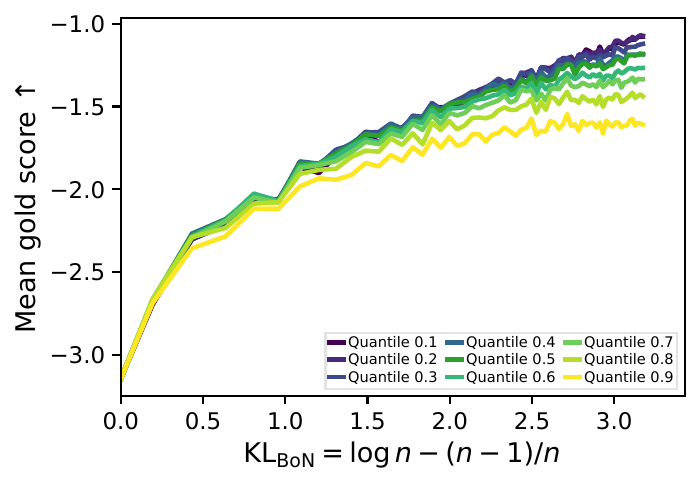}
        \caption{}
        \label{fig:rlhf_b}
    \end{subfigure}
    \hfill
    \begin{subfigure}[b]{0.32\textwidth}
        \centering
        \includegraphics[width=\linewidth]{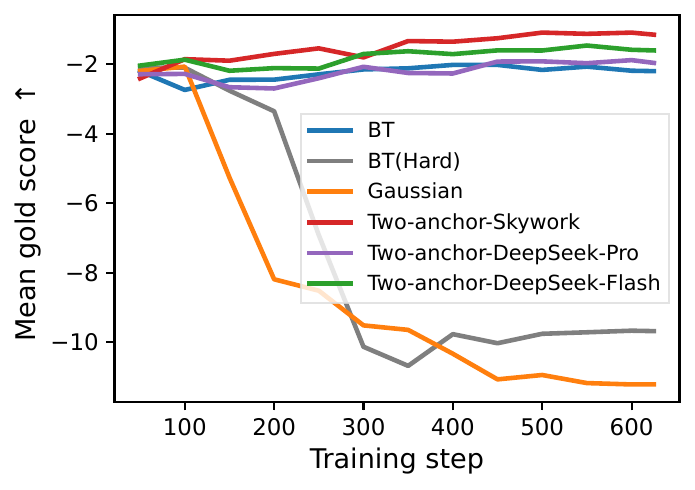}
        \caption{}
        \label{fig:rlhf_c}
    \end{subfigure}
    \hfill

    \caption{
        \textbf{Mean gold score of BoN and PPO.}
        \textbf{(a)} BoN performance VS baselines.
        \textbf{(b)} BoN performance of Two-anchor under different reward distribution quantiles.
        \textbf{(c)} PPO performance VS baselines.  Proxy reward models are all trained on the \texttt{MultiPref}.
    }
    \label{fig:rlhf}
    \vspace{-10pt}
\end{figure}

\section{Conclusion}

In this paper, we propose an effective framework for identifiable variance-aware reward modeling under pluralistic human preferences. On the theoretical side, the two anchors induce a global quadratic curvature of the population loss that drives the estimator to the classical nonparametric minimax rate. On the empirical side,  our method decouples the learned reward mean from the learned variance, remains effective with only 50\% of the anchor data and enhances the Gaussian model against reward hacking during PPO training. These findings suggest that two coarse anchors label are not only auxiliary supervision, but also provide a practical and effective way to make Gaussian reward models identifiable.

\bibliographystyle{plainnat}
\bibliography{staix_2026_sample}
\newpage
\appendix

\section{Addition auxiliary results for the Theory Section}
\label{app:additional}

This appendix collects the auxiliary results supporting the proof of Theorem~\ref{thm:main-rate}. Proposition~\ref{prop:1} ensures that all true preference and anchor probabilities are bounded away from $0$ and $1$. Proposition~\ref{prop:two-anchor-nondegeneracy} characterizes the nondegeneracy of the two-anchor map and provides the geometric intuition behind the curvature in Proposition~\ref{prop:anchor-global-curvature}. Corollary~\ref{cor:joint-curvature} extends the anchor curvature to the joint population loss, while Lemma~\ref{lem:pop-smoothness} provides a quadratic upper bound under  Assumption~\ref{ass:data}. Lemma~\ref{lem:empirical-process} bounds the empirical-process term uniformly over the truncated sieve space. Combining Corollary~\ref{cor:joint-curvature}, Lemma~\ref{lem:empirical-process} and the sieve approximation in Assumption~\ref{ass:2}, we obtain Theorem~\ref{thm:main-rate}. All proofs are deferred to Appendix~\ref{app:proofs}.

\begin{assumption*}[Full version of Assumption~\ref{ass:data}]
\label{full_ass:data}
Let $\mathbb P_\psi$ denote a marginal distribution of the feature representation $\psi(x,y)$. The anchor and preference datasets are generated as follows. \emph{(i) Anchor.} For each anchor sample $(x,y)$, the feature
$\psi(x,y)\sim\mathbb P_\psi$. \emph{(ii) Preference.} For each preference sample $(x,y_1,y_2)$,
the two responses are conditionally i.i.d. given $x$,
$y_1,y_2\mid x \stackrel{}{\sim}\pi(\cdot\mid x)$. The feature representation $u_1=\psi(x,y_1)$ and $u_2=\psi(x,y_2)$ satisfy $u_1\sim\mathbb P_\psi$ and $u_2\sim\mathbb P_\psi$.
\end{assumption*}

\begin{assumption*}[Full version of Assumption~\ref{ass:1}]
    The fixed feature mapping $\psi(x, y) : \mathcal{X} \times \mathcal{Y} \to \mathcal{U} \subset [-\psi_{\max}, \psi_{\max}]^d$ admits a density bounded above and below by positive constants. The true mean and log-standard-deviation functions satisfy $r^* \in \mathcal{H}(\alpha_r, L_r; \mathcal{U})$ and $g^* \in \mathcal{H}(\alpha_g, L_g; \mathcal{U})$ for some H\"older smoothness indices $\alpha_r, \alpha_g > 0$ and constants $L_r, L_g > 0$. Moreover, there exists a positive constant $B$ such that $\|r^*\|_\infty \le B$ and $\|g^*\|_\infty \le B$.
    \label{full_ass:1}
\end{assumption*}

\begin{assumption*}[Full version of Assumption~\ref{ass:2}]
\label{full_ass:2}
The sieve spaces $\mathcal R_K$ and $\mathcal G_K$ satisfy: \emph{(i) Approximation.}
There exists a constant $C>0$, independent of $K$, such that $\inf_{(r,g)\in\Theta_K(B_\Theta)}
\|r-r^*\|_{L_2(\mathbb{P}_\psi)}\le CK^{-\alpha_r/d}$ and $\inf_{(r,g)\in\Theta_K(B_\Theta)}
\|g-g^*\|_{L_2(\mathbb{P}_\psi)}\le CK^{-\alpha_g/d}$; \emph{(ii) Basis functions stability.}
Each $r\in\mathcal R_K$ and $g\in\mathcal G_K$ admits a basis expansion $r(u)=\beta_r^\top B_{r,K}(u)$, $g(u)=\beta_g^\top B_{g,K}(u)$, and there exist constants $0<c_B<C_B<\infty$, independent of $K$, satisfying  $c_B\|\beta_r-\beta_r'\|_2\le\|r-r'\|_{L_2(\mathbb{P}_\psi)}\le C_B\|\beta_r-\beta_r'\|_2$ and $c_B\|\beta_g-\beta_g'\|_2\le\|g-g'\|_{L_2(\mathbb{P}_\psi)}\le C_B\|\beta_g-\beta_g'\|_2$ for all $r,r'\in\mathcal R_K$ and $g,g'\in\mathcal G_K$.
\end{assumption*}

\begin{proposition}[Overlap boundedness]
\label{prop:1}
Under Assumption~\ref{ass:1}, there exists a constant
$c_0\in(0,1/3)$, depending only on $B$ and $\tau_1,\tau_2$, such that
for all $u_1,u_2,u\in\mathcal{U}$, $c_0
\;\le\;
\Phi\!\left(
\frac{r^*(u_1)-r^*(u_2)}
{\sqrt{e^{2g^*(u_1)}+e^{2g^*(u_2)}}}
\right)
\;\le\;
1-c_0$. Moreover, let true anchor probability $q_k^*(u)
:=
\Phi\!\left(
\frac{r^*(u)-\tau_k}{e^{g^*(u)}}
\right)$ for $k\in\{1,2\}$, and define the three threshold-induced anchor classes class probabilities $\pi_0^*(u)=1-q_1^*(u)$, $\pi_1^*(u)=q_1^*(u)-q_2^*(u)$ and $\pi_2^*(u)=q_2^*(u)$. Then $\pi_\ell^*(u)\ge c_0$ for all $\ell\in\{0,1,2\}$
\end{proposition}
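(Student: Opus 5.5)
The plan is a compactness argument: every argument of $\Phi$ that appears lies in a fixed bounded interval determined by $B$, $\tau_1$ and $\tau_2$. Since $\Phi$ is continuous and strictly increasing, it then takes values bounded away from $0$ and $1$ on that interval.

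\emph{Preference probabilities.} For any $u_1,u_2\in\mathcal U$, Assumption~\ref{ass:1} gives $|r^*(u_1)-r^*(u_2)|\le 2B$. It also gives $e^{2g^*(u_\ell)}\ge e^{-2B}$, so the denominator satisfies $\sqrt{e^{2g^*(u_1)}+e^{2g^*(u_2)}}\ge \sqrt2\,e^{-B}$. The standardized margin therefore has absolute value at most $M_0:=\sqrt2\,B e^{B}$. By monotonicity and symmetry of $\Phi$, the preference probability lies in $[\Phi(-M_0),\,1-\Phi(-M_0)]$.

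\emph{Anchor classes $0$ and $2$.} The probability $\pi_0^*(u)=1-q_1^*(u)=\Phi\bigl((\tau_1-r^*(u))e^{-g^*(u)}\bigr)$ has argument at least $-(B+|\tau_1|)e^{B}$. Likewise $\pi_2^*(u)=\Phi\bigl((r^*(u)-\tau_2)e^{-g^*(u)}\bigr)$ has argument at least $-(B+|\tau_2|)e^{B}$. Set $M_1:=(B+\max\{|\tau_1|,|\tau_2|\})e^{B}$; then both probabilities are at least $\Phi(-M_1)$.

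\emph{Middle class.} This step needs more care, because it is a difference of two CDF values. Write $a=(r^*(u)-\tau_2)e^{-g^*(u)}$ and $b=(r^*(u)-\tau_1)e^{-g^*(u)}$, so that $\pi_1^*(u)=\Phi(b)-\Phi(a)$. Both $a$ and $b$ lie in $[-M_1,M_1]$, and $b-a=(\tau_2-\tau_1)e^{-g^*(u)}\ge(\tau_2-\tau_1)e^{-B}$. By the mean value theorem, $\pi_1^*(u)=\varphi(\xi)(b-a)$ for some $\xi\in[-M_1,M_1]$. Since the normal density $\varphi$ is decreasing in $|\xi|$, we get $\pi_1^*(u)\ge\varphi(M_1)(\tau_2-\tau_1)e^{-B}$. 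The strict separation $\tau_1<\tau_2$ is essential here.

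\emph{Choice of $c_0$.} Take $c_0$ to be the minimum of $\Phi(-M_0)$, $\Phi(-M_1)$, $\varphi(M_1)(\tau_2-\tau_1)e^{-B}$, and some number strictly below $1/3$, for instance $1/4$. This $c_0$ depends only on $B,\tau_1,\tau_2$ and lies in $(0,1/3)$. Lowering the constant keeps every lower bound valid, and for the preference probability it only widens the interval $[c_0,1-c_0]$. The cap at $1/3$ is also consistent, since three class probabilities summing to one cannot all exceed $1/3$.
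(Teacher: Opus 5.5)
Your proposal is correct and follows essentially the same route as the paper: you bound every standardized argument of $\Phi$ by a constant depending only on $B,\tau_1,\tau_2$, handle $\pi_0^*,\pi_2^*$ by monotonicity, and lower-bound $\pi_1^*$ via the mean value theorem using $(\tau_2-\tau_1)e^{-g^*(u)}\ge(\tau_2-\tau_1)e^{-B}$ together with $\varphi(\xi)\ge\varphi(M_1)$. The only differences are cosmetic: you keep separate constants $M_0,M_1$ and cap $c_0$ at $1/4$, whereas the paper uses a single $M$ (the maximum of the three bounds) and sets $c_0$ to half the minimum of $1/3$, $\Phi(-M)$ and $\varphi(M)(\tau_2-\tau_1)e^{-B}$.
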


Proposition~\ref{prop:1} ensures that the true preference probability is
bounded away from \(0\) and \(1\), and that each true anchor class has a
uniformly positive probability.

\begin{proposition}[Two-anchor map nondegeneracy on bounded sets]
\label{prop:two-anchor-nondegeneracy}
For each $u\in\mathcal{U}$, define the two-anchor map on pointwise values $(r(u),g(u))\in\mathbb{R}^2$ by
\[
\Gamma_u(r(u),g(u))
=
\left(
\Phi\!\left(\tfrac{r(u)-\tau_1}{e^{g(u)}}\right),\
\Phi\!\left(\tfrac{r(u)-\tau_2}{e^{g(u)}}\right)
\right),
\]
and let $J_{\Gamma_u}(r(u),g(u))$ denote its Jacobian. The Jacobian satisfies
\[
\det J_{\Gamma_u}(r(u),g(u))
=
\varphi(z_1)\,\varphi(z_2)\,e^{-2g(u)}\,(\tau_2-\tau_1),
\qquad
z_k:=(r(u)-\tau_k)e^{-g(u)},
\]
where $\varphi$ is the standard normal density. Moreover, for every $R>0$, there exists a constant $c_J=c_J(R,\tau_1,\tau_2)>0$, independent of $u$, such that
\[
J_{\Gamma_u}(r(u),g(u))^\top J_{\Gamma_u}(r(u),g(u))\succeq c_J\,I_2
\qquad\text{for all } u\in\mathcal{U}\text{ and }(r(u),g(u))\in[-R,R]^2.
\]
\end{proposition}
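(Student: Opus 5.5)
The plan is to compute the Jacobian explicitly, read off the determinant, and then get the uniform lower bound on $J^\top J$ from compactness via an eigenvalue argument.

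\textbf{Step 1: the Jacobian.} Write $r=r(u)$ and $g=g(u)$, and set $z_k=(r-\tau_k)e^{-g}$. The partial derivatives of $z_k$ are
\[
\partial_r z_k=e^{-g}, \qquad \partial_g z_k=-(r-\tau_k)e^{-g}=-z_k .
\]
By the chain rule, the $k$-th row of $J_{\Gamma_u}$ is
\[
\varphi(z_k)\,\bigl(e^{-g},\,-z_k\bigr).
\]
Hence
\[
\det J_{\Gamma_u}=\varphi(z_1)\varphi(z_2)\,e^{-g}\,(z_1-z_2).
\]
Since $z_1-z_2=(\tau_2-\tau_1)e^{-g}$, this gives the claimed formula
\[
\det J_{\Gamma_u}=\varphi(z_1)\,\varphi(z_2)\,e^{-2g}\,(\tau_2-\tau_1).
\]
This step is routine.

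\textbf{Step 2: reduce to eigenvalue bounds.} The map $\Gamma_u$ depends on $u$ only through the pointwise values $(r,g)$. The Jacobian is therefore a continuous function of $(r,g)$ alone, and uniformity in $u$ is automatic. For a $2\times 2$ matrix $M=J^\top J$, the eigenvalues $\lambda_{\min}\le\lambda_{\max}$ satisfy
\[
\lambda_{\min}\,\lambda_{\max}=\det(J)^2, \qquad \lambda_{\max}\le \operatorname{tr}(M)=\|J\|_F^2 .
\]
It follows that
\[
\lambda_{\min}\ge \frac{\det(J)^2}{\|J\|_F^2}.
\]

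\textbf{Step 3: explicit constants on $[-R,R]^2$.} On this box, $|z_k|\le (R+\max_k|\tau_k|)e^{R}=:Z$. This gives:
\begin{itemize}
\item a lower bound on the densities, $\varphi(z_k)\ge\varphi(Z)>0$;
\item a lower bound on the determinant, $|\det J|\ge \varphi(Z)^2e^{-2R}(\tau_2-\tau_1)$;
\item an upper bound on the Frobenius norm, $\|J\|_F^2\le 2\varphi(0)^2\,(e^{2R}+Z^2)$.
\end{itemize}
Combining these with Step 2, we may take
\[
c_J=\frac{\varphi(Z)^4e^{-4R}(\tau_2-\tau_1)^2}{2\varphi(0)^2(e^{2R}+Z^2)}>0,
\]
which depends only on $(R,\tau_1,\tau_2)$.

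\textbf{Expected obstacle.} The only point needing care is the uniformity of $c_J$. This reduces to checking that every quantity is controlled by $R$ and the $\tau_k$ alone, which the explicit bounds above already ensure. A continuity-plus-compactness argument would work equally well: $\det J\neq0$ everywhere and $(r,g)\mapsto\lambda_{\min}(J^\top J)$ is continuous on the compact box, so it attains a positive minimum.
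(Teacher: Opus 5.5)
Your proof is correct. The rows $\varphi(z_k)\,(e^{-g},-z_k)$, the determinant formula, the $2\times2$ bound $\lambda_{\min}(J^\top J)\ge \det(J)^2/\|J\|_F^2$, and the explicit constants on $[-R,R]^2$ all check out, and uniformity in $u$ is indeed trivial.

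The paper gives no standalone proof of this proposition. The matching computation sits inside the proof of Proposition~\ref{prop:anchor-global-curvature}, and it runs in the inverse direction. There the anchor map is factored as $(r,g)\mapsto(z_1,z_2)\mapsto(\Phi(z_1),\Phi(z_2))$. The first factor is inverted explicitly in \eqref{eq:inv}, and the Frobenius norm of the inverse Jacobian is bounded by $L_z$ on the convex set $D$. The second factor is handled through the Lipschitz constant $1/\varphi(M)$ of $\Phi^{-1}$.

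For $2\times2$ matrices the two routes measure the same thing. Since $\mathrm{adj}(J)$ has the same Frobenius norm as $J$, we have $\|J^{-1}\|_F=\|J\|_F/|\det J|$, so your bound is exactly $\lambda_{\min}(J^\top J)\ge\|J^{-1}\|_F^{-2}$. You bound $\det J$ and $\|J\|_F$ separately over the box, rather than first factoring out $\mathrm{diag}(\varphi(z_1),\varphi(z_2))$. This only costs you a somewhat smaller $c_J$, which is immaterial.

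What the inverse-map formulation buys the paper is a global statement. The mean value inequality on the convex set $D$ gives the finite-difference bound \eqref{eq:rg-q}, which is what the curvature argument needs. A uniform lower bound on $J^\top J$ is only infinitesimal. It does not by itself yield $|r-r^*|^2+|g-g^*|^2\le C\,(|q_1-q_1^*|^2+|q_2-q_2^*|^2)$ without an additional convexity or injectivity argument.

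Your route, in turn, proves exactly the pointwise claim as stated and is self-contained. It also actually derives the determinant formula, which the paper only asserts.
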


\begin{remark}[Effect of anchor thresholds configuration]
\label{rmk:anchor-placement}
The constant $c_{\mathrm{anc}}$ in Proposition~\ref{prop:anchor-global-curvature} depends on the configuration of the two anchor thresholds $(\tau_1,\tau_2)$ through two competing effects: a gap effect from $\tau_2-\tau_1$ and a tail effect from $\max\{|\tau_1|,|\tau_2|\}$. Notably, (i) The anchor gap $\tau_2-\tau_1$ affects the curvature constant $c_{\mathrm{anc}}$ and two-anchor map’s Jacobian (see Proposition~\ref{prop:two-anchor-nondegeneracy} in Appendix). A nonzero gap is necessary to ensure identifiability. if the gap is too small, the two anchor probabilities become nearly redundant and the curvature weakens; (ii) Extreme thresholds push the anchor probabilities $\Phi((r^*(u)-\tau_k)/e^{g^*(u)})$ toward $0$ or $1$, reducing the information carried by the anchor labels and again weakening the curvature.
\end{remark}

\begin{corollary}[Curvature lower bound for the joint population loss]
\label{cor:joint-curvature}
We denote the population preference loss by
\[
\mathcal{M}_{\mathrm{pref}}(r,g)
=
-\,\mathbb{E}_{(u_1,u_2)\sim\mathbb{P}_\psi^{\mathrm{pref}}}\!\left[
p^*(u_1,u_2)\log p(u_1,u_2)
+
\bigl(1-p^*(u_1,u_2)\bigr)\log\bigl(1-p(u_1,u_2)\bigr)
\right],
\]
where $p(u_1,u_2)
=
\Phi\!\left(
\frac{r(u_1)-r(u_2)}
{\sqrt{e^{2g(u_1)}+e^{2g(u_2)}}}
\right)$ and $p^*(u_1,u_2)$ is defined similarly. The population anchor loss $\mathcal{M}_{\mathrm{anchor}}$
was defined in Proposition~\ref{prop:anchor-global-curvature}. We define
the joint population loss as
\[
\mathcal{M}(r,g)
:=
\mathcal{M}_{\mathrm{pref}}(r,g)
+
\lambda\,\mathcal{M}_{\mathrm{anchor}}(r,g).
\]
Under Assumptions~\ref{ass:1}--\ref{ass:2}, there exists a constant
$c_{\mathrm{pop}}>0$, depending only on
$B_\Theta,\tau_1,\tau_2,\lambda$, such that for all
$(r,g)\in\Theta_K(B_\Theta)$,
\[
\mathcal{M}(r,g)
-
\mathcal{M}(r^*,g^*)
\;\ge\;
c_{\mathrm{pop}}
\left(
\|r-r^*\|_{L_2(\mathbb{P}_\psi)}^2
+
\|g-g^*\|_{L_2(\mathbb{P}_\psi)}^2
\right).
\]
\end{corollary}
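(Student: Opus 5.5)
The plan is to combine Proposition~\ref{prop:anchor-global-curvature} with the observation that the preference excess loss is nonnegative. The joint excess loss splits as
\[
\mathcal M(r,g)-\mathcal M(r^*,g^*)
=
\bigl[\mathcal M_{\mathrm{pref}}(r,g)-\mathcal M_{\mathrm{pref}}(r^*,g^*)\bigr]
+\lambda\bigl[\mathcal M_{\mathrm{anchor}}(r,g)-\mathcal M_{\mathrm{anchor}}(r^*,g^*)\bigr].
\]
If the first bracket is nonnegative, Proposition~\ref{prop:anchor-global-curvature} gives the claim immediately with $c_{\mathrm{pop}}=\lambda c_{\mathrm{anc}}$. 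This constant depends only on $B_\Theta,\tau_1,\tau_2,\lambda$, as required.

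To show the first bracket is nonnegative, condition on $(u_1,u_2)$. For fixed $p^*\in(0,1)$, consider the pointwise function $h(p)=-p^*\log p-(1-p^*)\log(1-p)$ on $(0,1)$. It is minimized at $p=p^*$, and its excess equals the Bernoulli Kullback--Leibler divergence:
\[
h(p)-h(p^*)=\mathrm{KL}\bigl(\mathrm{Ber}(p^*)\,\|\,\mathrm{Ber}(p)\bigr)\ge 0.
\]
Some bookkeeping is needed to make this well defined. Proposition~\ref{prop:1} gives $p^*\in[c_0,1-c_0]$, and on $\Theta_K(B_\Theta)$ the margin $(r(u_1)-r(u_2))/\sqrt{e^{2g(u_1)}+e^{2g(u_2)}}$ is bounded by $2B_\Theta e^{B_\Theta}/\sqrt2$. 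Hence $p$ is bounded away from $0$ and $1$, all logarithms are finite, and the expectations exist. Integrating the pointwise inequality over $\mathbb P_\psi^{\mathrm{pref}}$ yields $\mathcal M_{\mathrm{pref}}(r,g)\ge\mathcal M_{\mathrm{pref}}(r^*,g^*)$.

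There is no real obstacle here; the substance lies in Proposition~\ref{prop:anchor-global-curvature}. The only care needed is to note that $(r^*,g^*)$ itself need not lie in the sieve. This causes no difficulty, because the comparison uses only population losses evaluated at $(r^*,g^*)$, which are well defined since $\|r^*\|_\infty,\|g^*\|_\infty\le B<B_\Theta$. One could additionally extract curvature from the preference term via a Pinsker-type bound, but this is unnecessary: the preference loss cannot detect the translation and scaling directions of Proposition~\ref{prop:nonid}, so all identification-driven curvature must come from the anchor term anyway.
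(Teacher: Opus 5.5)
Your proposal is correct and matches the paper's proof: the paper also writes the preference excess loss as an expected Bernoulli KL divergence (hence nonnegative), applies Proposition~\ref{prop:anchor-global-curvature} to the anchor term, and sets $c_{\mathrm{pop}}=\lambda c_{\mathrm{anc}}$. Your check that $p$ and $p^*$ stay bounded away from $0$ and $1$ (so the logarithms and expectations are finite) is a small, harmless addition that the paper leaves implicit.
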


\begin{lemma}[Quadratic upper bound on the population excess loss]
\label{lem:pop-smoothness}
Under Assumptions~\ref{ass:data},\ref{ass:1} and \ref{ass:2}, there exists a constant
$C_{\mathrm{pop}}>0$, depending only on
$B_\Theta,\tau_1,\tau_2,\lambda$, such that for all
$(r,g)\in\Theta_K(B_\Theta)$,
\[
\mathcal{M}(r,g)
-
\mathcal{M}(r^*,g^*)
\;\le\;
C_{\mathrm{pop}}
\left(
\|r-r^*\|_{L_2(\mathbb{P}_\psi)}^2
+
\|g-g^*\|_{L_2(\mathbb{P}_\psi)}^2
\right).
\]
\end{lemma}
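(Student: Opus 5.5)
Since both population losses are cross-entropies at the truth, the excess loss splits into expected KL divergences, which I will bound by $\chi^2$-type quantities and then by a Lipschitz argument. For the preference part, since $\mathbb E[\bar C\mid u_1,u_2]=p^*(u_1,u_2)$,
\[
\mathcal M_{\mathrm{pref}}(r,g)-\mathcal M_{\mathrm{pref}}(r^*,g^*)=\mathbb E\bigl[\mathrm{KL}\bigl(\mathrm{Bern}(p^*)\,\|\,\mathrm{Bern}(p)\bigr)\bigr].
\]
For the anchor part,
\[
\mathcal M_{\mathrm{anchor}}(r,g)-\mathcal M_{\mathrm{anchor}}(r^*,g^*)=\mathbb E_{u\sim\mathbb P_\psi}\bigl[\mathrm{KL}(\pi^*(u)\,\|\,\pi(u))\bigr].
\]
I will use the standard inequality $\mathrm{KL}(P\|Q)\le\chi^2(P\|Q)=\sum_\ell(P_\ell-Q_\ell)^2/Q_\ell$. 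This reduces the problem to two tasks: a uniform lower bound on the model probabilities $p$, $1-p$ and $\pi_\ell$ over $\Theta_K(B_\Theta)$, and Lipschitz bounds on the maps from $(r,g)$ to these probabilities.

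\textbf{Lower bounds on model probabilities.} For $(r,g)\in\Theta_K(B_\Theta)$ we have $|r|,|g|\le B_\Theta$ pointwise. The preference margin therefore satisfies
\[
|z^{\mathrm G}|\le 2B_\Theta\big/\bigl(\sqrt2\,e^{-B_\Theta}\bigr),
\]
so $p\in[c_1,1-c_1]$ with $c_1$ depending only on $B_\Theta$. Similarly $|z_k|\le (B_\Theta+|\tau_k|)e^{B_\Theta}$, which bounds $\pi_0=1-q_1$ and $\pi_2=q_2$ below. The middle class is handled by the mean value theorem: $\pi_1=q_1-q_2=\varphi(\xi)(\tau_2-\tau_1)e^{-g}$ with $|\xi|\le\max_k|z_k|$, which is bounded below by a constant depending on $B_\Theta,\tau_1,\tau_2$. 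This step mirrors Proposition~\ref{prop:1}, but applied to the model class rather than only to the truth, and it is where truncation of the sieve is essential.

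\textbf{Lipschitz bounds.} On the compact box $[-B_\Theta,B_\Theta]^2$, the maps $(r(u),g(u))\mapsto q_k(u)$ are smooth with gradients bounded by a constant $L$. Hence
\[
|\pi_\ell(u)-\pi_\ell^*(u)|\le 2L\bigl(|r(u)-r^*(u)|+|g(u)-g^*(u)|\bigr),
\]
using $B<B_\Theta$ so that the truth lies in the same box. Integrating over $u\sim\mathbb P_\psi$ bounds the anchor excess by a constant times $\|r-r^*\|_{L_2}^2+\|g-g^*\|_{L_2}^2$. For the preference term, $p$ is a smooth function of $(r(u_1),r(u_2),g(u_1),g(u_2))$ on a compact box with the square-root denominator bounded below by $\sqrt2 e^{-B_\Theta}$. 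This gives
\[
(p-p^*)^2\le C\sum_{i=1,2}\bigl[(r-r^*)^2(u_i)+(g-g^*)^2(u_i)\bigr].
\]
Taking expectations and using Assumption~\ref{ass:data}, under which each marginal $u_i\sim\mathbb P_\psi$, converts this into $2C\bigl(\|r-r^*\|_{L_2(\mathbb P_\psi)}^2+\|g-g^*\|_{L_2(\mathbb P_\psi)}^2\bigr)$. This marginal-to-$L_2(\mathbb P_\psi)$ transfer is the only point where the data-generation assumption is needed; the joint law of $(u_1,u_2)$ is irrelevant because the bound is additive across the two coordinates.

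\textbf{Combining and main obstacle.} Adding the two bounds with weight $\lambda$ gives the claim with $C_{\mathrm{pop}}$ depending on $B_\Theta,\tau_1,\tau_2,\lambda$. The main obstacle is the uniform positivity of the middle-class probability $\pi_1$ over the whole truncated sieve, since it degenerates as $\tau_2-\tau_1\to0$ or as $g\to+\infty$. I will handle it with the explicit mean-value expression above, which yields an explicit constant.
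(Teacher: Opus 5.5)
Your proposal is correct and follows essentially the same route as the paper. Both split the excess loss into expected Bernoulli and three-class KL divergences, bound each KL by a squared probability difference via uniform lower bounds on the model probabilities over $\Theta_K(B_\Theta)$, and push through Lipschitz bounds from $(r,g)$ to $p$ and $q_k$, using the marginal condition $u_a\sim\mathbb P_\psi$ for the preference pair. Your explicit use of $\mathrm{KL}\le\chi^2$ and the mean-value lower bound on $\pi_1$ make rigorous two steps that the paper only asserts.
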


\begin{lemma}[Empirical-process bound on the truncated sieve]
\label{lem:empirical-process}
We denote
\[
R(r,g)
:=
\|r-r^*\|_{L_2(\mathbb{P}_\psi)}
+
\|g-g^*\|_{L_2(\mathbb{P}_\psi)}
\]
for the joint $L_2$ error, and
\[
G(r,g)
:=
\bigl[
\mathcal{L}(r,g)-\mathcal{L}(r^*,g^*)
\bigr]
-
\bigl[
\mathcal{M}(r,g)-\mathcal{M}(r^*,g^*)
\bigr]
\]
for the centered empirical fluctuation between the empirical and population excess losses. Under Assumptions~\ref{ass:data}, \ref{ass:1} and \ref{ass:2}, for every $t\ge 1$, with probability at least $1-2e^{-t}$, for all $(r,g)\in\Theta_K(B_\Theta)$,
\[
\bigl|G(r,g)\bigr|
\le
C\bigl\{R(r,g)\lambda\Delta_t+\lambda\Delta_t^2\bigr\}.
\]
where $\Delta_t:=\lambda^{-1}\sqrt{(K+t)/n_{\mathrm{pref}}}+\sqrt{(K+t)/n_{\mathrm{anc}}}$. The constant $C>0$ depends only on
$B_\Theta,\tau_1,\tau_2$, and is independent of
$K,n_{\mathrm{pref}},n_{\mathrm{anc}}$, and $\lambda$.
\end{lemma}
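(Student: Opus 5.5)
The plan is to treat $G(r,g)$ as a sum of two independent centered empirical processes and bound each one by a localized, finite-dimensional chaining argument over the truncated sieve. Write
\[
G(r,g)=G_{\mathrm{pref}}(r,g)+\lambda\,G_{\mathrm{anc}}(r,g),
\]
where $G_{\mathrm{pref}}=(\mathcal L_{\mathrm{pref}}-\mathbb E\mathcal L_{\mathrm{pref}})(r,g)-(\mathcal L_{\mathrm{pref}}-\mathbb E\mathcal L_{\mathrm{pref}})(r^*,g^*)$, and $G_{\mathrm{anc}}$ is defined analogously. Soft labels enter the preference loss linearly, so $\mathbb E[\mathcal L_{\mathrm{pref}}]=\mathcal M_{\mathrm{pref}}$ by the assumption on $\mathbb E[\bar C_i\mid\cdot]$, and likewise for the anchor loss. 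I would show separately that, each with probability at least $1-e^{-t}$,
\[
|G_{\mathrm{pref}}|\le C\bigl(R\epsilon_p+\epsilon_p^2\bigr),\qquad |G_{\mathrm{anc}}|\le C\bigl(R\epsilon_a+\epsilon_a^2\bigr),
\]
with $\epsilon_p=\sqrt{(K+t)/n_{\mathrm{pref}}}$ and $\epsilon_a=\sqrt{(K+t)/n_{\mathrm{anc}}}$. Since $\epsilon_p=\lambda(\Delta_t-\epsilon_a)\le\lambda\Delta_t$ and $\epsilon_a\le\Delta_t$, a union bound then gives
\[
|G|\le C\bigl(R\lambda\Delta_t+\lambda^2\Delta_t^2+\lambda R\Delta_t+\lambda\Delta_t^2\bigr).
\]
This is the claimed form after absorbing constants. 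The issue is that $\lambda^2\Delta_t^2$ only reduces to $\lambda\Delta_t^2$ when $\lambda\lesssim1$. I would handle this by bounding the preference quadratic term more carefully as $\epsilon_p^2\le\lambda\Delta_t\epsilon_p$, or by accepting a constant depending on an upper bound for $\lambda$. This bookkeeping is the part I am least sure matches the stated form exactly.

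\textbf{Lipschitz step.} On $\Theta_K(B_\Theta)$ all arguments are bounded: $|r|,|g|\le B_\Theta$, so $z^{\mathrm G}$ and $(r-\tau_k)e^{-g}$ lie in a compact interval. Consequently $\Phi(z)$, $1-\Phi(z)$ and $\pi_\ell$ are bounded below by a constant depending only on $B_\Theta,\tau_1,\tau_2$. For $\pi_1=q_1-q_2$ this uses $\tau_1<\tau_2$ together with $e^{g}\le e^{B_\Theta}$.

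Hence each per-sample loss is a smooth function of $(r(u_1),r(u_2),g(u_1),g(u_2))$ (respectively of $(r(u),g(u))$) with bounded gradient. The loss differences therefore satisfy
\[
|\ell(r,g)-\ell(r',g')|\le L\sum_j\bigl(|r-r'|(u_j)+|g-g'|(u_j)\bigr)
\]
and are uniformly bounded by a constant $M$. Using Assumption~\ref{ass:data}, each $u_j\sim\mathbb P_\psi$, so the $L_2$ norm of the increment is at most $2L(\|r-r'\|_{L_2(\mathbb P_\psi)}+\|g-g'\|_{L_2(\mathbb P_\psi)})$. The same holds relative to $(r^*,g^*)$, which gives variance at most $CR^2$.

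\textbf{Localized chaining.} By Assumption~\ref{ass:2}(ii), the class $\{\ell(r,g)-\ell(r^*,g^*)\}$ restricted to $R(r,g)\le\delta$ is indexed by a $2K$-dimensional Euclidean ball, up to the fixed approximation offset. Its $L_2$ covering numbers are therefore at most $(C\delta/\epsilon)^{2K}$. Bernstein's inequality combined with a chaining or union bound over an $\epsilon$-net (or Bousquet's version of Talagrand's inequality) gives, with probability at least $1-e^{-s}$,
\[
\sup_{R\le\delta}|G_{\mathrm{anc}}|\le C\Bigl(\delta\sqrt{\tfrac{K+s}{n}}+\tfrac{K+s}{n}\Bigr).
\]
A peeling argument over the shells $\delta_j=2^j\epsilon_a$ then converts this to the self-normalized bound $C(R\epsilon_a+\epsilon_a^2)$ uniformly over the sieve. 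Shell $j$ is assigned level $s=t+\log(j^2)$, which costs only constants.

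One subtlety is that $(r^*,g^*)$ is not in the sieve. I would handle this by noting that the Lipschitz bounds hold on the larger bounded set $\{\|r\|_\infty,\|g\|_\infty\le B_\Theta\}$, and the entropy bound only needs the indexed differences to vary in a $2K$-dimensional affine set.

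\textbf{Main obstacle.} The step I expect to be hardest is making the peeling uniform when the variance proxy is $R$ rather than the sup-norm. Bernstein needs a sup-norm bound on the increments, and that bound ($\le M$) does not shrink with $\delta$. This is exactly why the quadratic remainder $(K+t)/n$ appears, and it is acceptable because it matches $\epsilon^2$.
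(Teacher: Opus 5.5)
Your proposal is correct and follows essentially the same route as the paper's proof: the same split of $G$ into the preference process plus $\lambda$ times the anchor process, the Lipschitz bound giving $L_2$ size $C\,R(r,g)$, entropy $CK\log(C/\epsilon)$ from the $2K$-dimensional coefficient ball, Talagrand plus chaining on $R$-shells, and dyadic peeling. The one cosmetic difference is that you peel each process separately and then union-bound, while the paper merges the two shell bounds before peeling.

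On the bookkeeping you were unsure about, the paper simply invokes $0<\lambda\le 1$ to get $\epsilon_p^2+\lambda\epsilon_a^2\le\lambda\Delta_t^2$, which is your fallback. Your alternative $\epsilon_p^2\le\lambda\Delta_t\epsilon_p$ would not remove that restriction, since $\epsilon_p\lesssim\Delta_t$ itself fails when $\lambda\gg 1$ and $\epsilon_a\ll\epsilon_p/\lambda$.
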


\section{Proofs for the Theory Section}
\label{app:proofs}

\subsection{Proof of Proposition~\ref{prop:1}}

\begin{proof}[Proof of Proposition 1]
For notational simplicity, we define the true preference margin and the true anchor threshold scores as $z_{\mathrm{pref}}^*(u_1,u_2)
=
\frac{r^*(u_1)-r^*(u_2)}
{\sqrt{e^{2g^*(u_1)}+e^{2g^*(u_2)}}}$ and $z_{\mathrm{anc},k}^*(u)
:=
\frac{r^*(u)-\tau_k}{e^{g^*(u)}}$ for $k\in\{1,2\}$.

By Assumption~\ref{ass:1}, we have $|r^*(u)|\le B$ and
$|g^*(u)|\le B$ for all $u\in\mathcal U$. It follows that
$|r^*(u_1)-r^*(u_2)|\le 2B$ and $\sqrt{e^{2g^*(u_1)}+e^{2g^*(u_2)}}
\ge
\sqrt{2}\,e^{-B}$.

Therefore, combining these two bounds yields
\begin{equation}
    |z_{\mathrm{pref}}^*(u_1,u_2)|
    \;\le\;
    \frac{2B}{\sqrt{2}\,e^{-B}}
    \;=\;
    \sqrt{2}\,Be^{B}.
    \label{eq:A.1}
    \tag{A.1}
\end{equation}

Again by Assumption~\ref{ass:1}, we obtain
\begin{equation}
|z_{\mathrm{anc},k}^*(u)|
\le
e^B(B+|\tau_k|),
\qquad k\in\{1,2\}.
\label{eq:A.2}
\tag{A.2}
\end{equation}

Then \eqref{eq:A.1} and \eqref{eq:A.2} imply the boundedness of the
true preference margin and the true anchor threshold scores:
\[
|z_{\mathrm{pref}}^*(u_1,u_2)|\le M,
\qquad
|z_{\mathrm{anc},k}^*(u)|\le M,
\quad k\in\{1,2\},
\]
where $M
=
\max\!\left\{
\sqrt{2}\,Be^B,\,
e^B(B+|\tau_1|),\,
e^B(B+|\tau_2|)
\right\}$.

Since the standard normal cumulative distribution function $\Phi$ is strictly increasing, it follows that $\Phi(-M)
\le
\Phi\!\left(z_{\mathrm{pref}}^*(u_1,u_2)\right)
\le
\Phi(M)
=
1-\Phi(-M).$ This proves the boundedness of the true preference margin.

We next prove the lower bounds for the three threshold-induced anchor class probabilities. By the boundedness of $z_{\mathrm{anc},k}^*(u)$, we have the boundedness for true anchor probability
\[
\Phi(-M)
\le
q_k^*(u)
\le
1-\Phi(-M),
\qquad k\in\{1,2\}.
\]
Therefore, we have
\[
\pi_0^*(u)=1-q_1^*(u)\ge \Phi(-M),
\qquad
\pi_2^*(u)=q_2^*(u)\ge \Phi(-M).
\]

For the middle class probability, since $\tau_1<\tau_2$, we have
\[
z_{\mathrm{anc},1}^*(u)-z_{\mathrm{anc},2}^*(u)
=
\frac{\tau_2-\tau_1}{e^{g^*(u)}}
\ge
(\tau_2-\tau_1)e^{-B}.
\]
By the mean value theorem and
$z_{\mathrm{anc},1}^*(u),z_{\mathrm{anc},2}^*(u)\in[-M,M]$, we obtain
\[
\pi_1^*(u)
=
q_1^*(u)-q_2^*(u)
=
\Phi\!\left(z_{\mathrm{anc},1}^*(u)\right)
-
\Phi\!\left(z_{\mathrm{anc},2}^*(u)\right)
\ge
\varphi(M)(\tau_2-\tau_1)e^{-B},
\]
where $\varphi$ is the standard normal density.

Thus, we have $c_0
\le
\Phi\!\left(z_{\mathrm{pref}}^*(u_1,u_2)\right)
\le
1-c_0$ and  $\pi_\ell^*(u)\ge c_0$ for $\ell\in\{0,1,2\}$ by setting $c_0
:=
\frac12
\min\!\left\{
\frac13,\,
\Phi(-M),\,
\varphi(M)(\tau_2-\tau_1)e^{-B}
\right\}
\in(0,1/3)$
This completes the proof.
\label{proof:Proposition 1}
\end{proof}

\subsection{Proof of Proposition~\ref{prop:anchor-global-curvature}}

\begin{proof}[Proof of Global quadratic curvature of the two-anchor population loss]
We define the population anchor loss $\mathcal M_{\mathrm{anchor}}(r,g)
=
\mathbb{E}_{u\sim\mathbb{P}_\psi}\!\left[
-\sum_{\ell=0}^2
\pi_\ell^*(u)\log \pi_\ell(u)
\right]$, where $q_k(u):=\Phi\!\left(\frac{r(u)-\tau_k}{e^{g(u)}}\right)$ and $q_k^*(u):=\Phi\!\left(\frac{r^*(u)-\tau_k}{e^{g^*(u)}}\right)$ for $k\in\{1,2\}$ and the three threshold-induced anchor class probabilities are defined as $\pi_0(u)=1-q_1(u)$, $\pi_1(u)=q_1(u)-q_2(u)$, and $\pi_2(u)=q_2(u)$ with $\pi_\ell^*(u)$ defined similarly. For $k=1,2$ and $u\in\mathcal{U}$, we define $z_{\mathrm{anc},k}(u)
=
\frac{r(u)-\tau_k}{e^{g(u)}}
=
(r(u)-\tau_k)e^{-g(u)}$, $z_{\mathrm{anc},k}^*(u)
=
\frac{r^*(u)-\tau_k}{e^{g^*(u)}}$ and write $q_k(u)=\Phi(z_{\mathrm{anc},k}(u))$ and
$q_k^*(u)=\Phi(z_{\mathrm{anc},k}^*(u))$. Let
$\tau_{\max}:=\max\{|\tau_1|,|\tau_2|\}$ and define $M:=e^{B_\Theta}(B_\Theta+\tau_{\max})$ and $M^*:=e^{B}(B+\tau_{\max})$.

By Assumption~\ref{ass:1} and the truncation $(r,g)\in\Theta_K(B_\Theta)$, we have
$|r(u)|, |g(u)|\le B_\Theta$ and $|r^*(u)|, |g^*(u)|\le B$. Following the same strategy as used in the proof of Proposition~\ref{prop:1}, we obtain
\[
|z_{\mathrm{anc},k}(u)|\le M,\qquad
|z_{\mathrm{anc},k}^*(u)|\le M^*,
\qquad k\in\{1,2\}.
\tag{A.3}
\label{eq:zbound}
\]

We first establish the Lipschitz bound from $q_k$ to $z_{\mathrm{anc},k}$. Combining \eqref{eq:zbound} and the strictly increasing property of $\Phi$, we have
\[
\Phi(-M)\le q_k(u),q_k^*(u)\le \Phi(M).
\]
Since $\Phi^{-1}$ is continuously differentiable with derivative
$1/\varphi(\Phi^{-1}(\cdot))$ and attains its maximum at the endpoints, $\Phi^{-1}$ is $L_\Phi$-Lipschitz on
$[\Phi(-M),\Phi(M)]$ with $L_\Phi=\frac{1}{\varphi(M)}$. It follows that, for every $u\in\mathcal{U}$ and $k=1,2$,
\[
|z_{\mathrm{anc},k}(u)-z_{\mathrm{anc},k}^*(u)|
\le
L_\Phi |q_k(u)-q_k^*(u)|.
\tag{A.4}
\label{eq:phi-lip}
\]

We next establish a pointwise Lipschitz bound from
$(z_{\mathrm{anc},1}(u),z_{\mathrm{anc},2}(u))$ to
$(r(u),g(u))$. For notational simplicity, $z_1(u)=z_{\mathrm{anc},1}(u)$ and
$z_2(u)=z_{\mathrm{anc},2}(u)$ in the following calculation. Since
$\tau_2>\tau_1$, for every $u\in\mathcal U$,
$z_1(u)-z_2(u)=(\tau_2-\tau_1)e^{-g(u)}>0$. Moreover, $r(u)$ and $g(u)$ follow the map
in terms of $z_1(u)$ and $z_2(u)$:
\[
r(u)=\frac{\tau_2 z_1(u)-\tau_1 z_2(u)}{z_1(u)-z_2(u)},
\qquad
g(u)=\log\frac{\tau_2-\tau_1}{z_1(u)-z_2(u)}.
\tag{A.5}
\label{eq:inv}
\]

The bounds $|g|\le B_\Theta$ and $|z_{\mathrm{anc},k}|\le M$ imply that the corresponding
pairs $(z_{\mathrm{anc},1}(u),z_{\mathrm{anc},2}(u))$ and
$(z_{\mathrm{anc},1}^*(u),z_{\mathrm{anc},2}^*(u))$ both belong to the compact
convex set
\[
D:=\Bigl\{(z_1(u),z_2(u))\in\mathbb{R}^2:
(\tau_2-\tau_1)e^{-B_\Theta}\le z_1(u)-z_2(u)\le
(\tau_2-\tau_1)e^{B_\Theta}
\Bigr\},
\tag{A.6}
\label{eq:convex_set}
\]
where $\
\max\{|z_1(u)|,|z_2(u)|\}\le M$.

The Jacobian of the map in~\eqref{eq:inv} is
\[
J_{\mathrm{inv}}(z_1(u),z_2(u))
=
\begin{pmatrix}
-\dfrac{(\tau_2-\tau_1)z_2(u)}{(z_1(u)-z_2(u))^2}
&
\dfrac{(\tau_2-\tau_1)z_1(u)}{(z_1(u)-z_2(u))^2}
\\[10pt]
-\dfrac{1}{z_1(u)-z_2(u)}
&
\dfrac{1}{z_1(u)-z_2(u)}
\end{pmatrix}.
\]

Based on the convex set $D$ \eqref{eq:convex_set}, we have
$z_1(u)-z_2(u)\ge \delta_\Theta>0$ and
$\max\{|z_1(u)|,|z_2(u)|\}\le M$, where
$\delta_\Theta :=(\tau_2-\tau_1)e^{-B_\Theta}$. Therefore,
$\sup_{(z_1(u),z_2(u))\in D}\|J_{\mathrm{inv}}(z_1(u),z_2(u))\|_F\le L_z$, where
\[
\begin{aligned}
\|J_{\mathrm{inv}}(z_1(u),z_2(u))\|_F^2
&=
\left(
\frac{(\tau_2-\tau_1)z_2(u)}{(z_1(u)-z_2(u))^2}
\right)^2
+
\left(
\frac{(\tau_2-\tau_1)z_1(u)}{(z_1(u)-z_2(u))^2}
\right)^2
+ \\ &
\left(
\frac{1}{z_1(u)-z_2(u)}
\right)^2
+
\left(
\frac{1}{z_1(u)-z_2(u)}
\right)^2
\\
&\le
2\left(
\frac{(\tau_2-\tau_1)M}{\delta_\Theta^2}
\right)^2
+
2\left(
\frac{1}{\delta_\Theta}
\right)^2
\\
&=:
L_z^2,
\end{aligned}
\]
and $L_z=L_z(B_\Theta,\tau_1,\tau_2)<\infty$.

Given the convex set $D$ defined in \eqref{eq:convex_set}, applying the mean value inequality to the map defined in \eqref{eq:inv} yields that, for every $u\in\mathcal U$,
\[
\begin{aligned}
|r(u)-r^*(u)|^2+|g(u)-g^*(u)|^2
&=
\left\|
\begin{pmatrix}
r(u)\\
g(u)
\end{pmatrix}
-
\begin{pmatrix}
r^*(u)\\
g^*(u)
\end{pmatrix}
\right\|_2^2
\\
&\le
L_z^2
\left\|
\begin{pmatrix}
z_{1}(u)\\
z_{2}(u)
\end{pmatrix}
-
\begin{pmatrix}
z_{1}^*(u)\\
z_{2}^*(u)
\end{pmatrix}
\right\|_2^2
\\
&=
L_z^2
\left[
|z_{1}(u)-z_{1}^*(u)|^2
+
|z_{2}(u)-z_{2}^*(u)|^2
\right].
\end{aligned}
\tag{A.7}
\label{eq:zr-bound}
\]

Combining the two Lipschitz bounds \eqref{eq:phi-lip} and \eqref{eq:zr-bound}, we obtain
\[
|r(u)-r^*(u)|^2+|g(u)-g^*(u)|^2
\le
L_z^2 L_\Phi^2
\bigl[
|q_1(u)-q_1^*(u)|^2+
|q_2(u)-q_2^*(u)|^2
\bigr].
\tag{A.8}
\label{eq:rg-q}
\]

For the anchor part, conditional on $u$, the pointwise population loss difference equals $\mathrm{KL}\!\left(
\pi^*(u)
\,\middle\|\,
\pi(u)
\right)$, where $\pi(u)=(\pi_0(u),\pi_1(u),\pi_2(u))$ and $\pi^*(u)=(\pi_0^*(u),\pi_1^*(u),\pi_2^*(u))$.

For $k\in\{1,2\}$, we define $\delta_k(u):=q_k(u)-q_k^*(u)$. By the definitions of the three threshold-induced anchor class probabilities, we have $\pi_0(u)-\pi_0^*(u)=-\delta_1(u)$, $\pi_1(u)-\pi_1^*(u)=\delta_1(u)-\delta_2(u)$ and $\pi_2(u)-\pi_2^*(u)=\delta_2(u)$.
Therefore, we obtain
\[
\begin{aligned}
\|\pi(u)-\pi^*(u)\|_2^2
&=
|\delta_1(u)|^2+
|\delta_1(u)-\delta_2(u)|^2+
|\delta_2(u)|^2
\\
&\ge
|\delta_1(u)|^2+|\delta_2(u)|^2
\\
&=
|q_1(u)-q_1^*(u)|^2+
|q_2(u)-q_2^*(u)|^2.
\end{aligned}
\tag{A.9}
\label{eq:three_lower}
\]

By Pinsker's inequality and \eqref{eq:rg-q}, we have
\[
\begin{aligned}
\mathrm{KL}\!\left(
\pi^*(u)
\,\middle\|\,
\pi(u)
\right)
&\ge
\frac{1}{2}\|\pi(u)-\pi^*(u)\|_1^2
\\
&\ge
\frac{1}{2}\|\pi(u)-\pi^*(u)\|_2^2
\\
&\ge
\frac{1}{2}
\bigl[
|q_1(u)-q_1^*(u)|^2+
|q_2(u)-q_2^*(u)|^2
\bigr]
\\
&\ge
\frac{1}{2L_z^2L_\Phi^2}
\left\{
|r(u)-r^*(u)|^2+
|g(u)-g^*(u)|^2
\right\}.
\end{aligned}
\tag{A.10}
\label{eq:pointwise}
\]

Finally, integrating \eqref{eq:pointwise} with respect to $u\sim \mathbb P_\psi$ gives
\[
\begin{aligned}
\mathcal{M}_{\mathrm{anchor}}(r,g)
-
\mathcal{M}_{\mathrm{anchor}}(r^*,g^*)
&=
\mathbb{E}_{u\sim \mathbb P_\psi}
\left[
\mathrm{KL}\!\left(
\pi^*(u)
\,\middle\|\,
\pi(u)
\right)
\right]
\\
&\ge
c_{\mathrm{anc}}
\left(
\|r-r^*\|_{L_2(\mathbb P_\psi)}^2+
\|g-g^*\|_{L_2(\mathbb P_\psi)}^2
\right),
\end{aligned}
\]
where $c_{\mathrm{anc}}
:=
\frac{1}{2L_z^2L_\Phi^2}>0$.
The constant $c_{\mathrm{anc}}$ depends only on
$B_\Theta,\tau_1,\tau_2$, and is independent of $(r,g)$ and $K$. This completes the proof.
\end{proof}

\subsection{Proof of Corollary~\ref{cor:joint-curvature}}
\begin{proof}
For any $(r,g)\in\Theta_K(B_\Theta)$,
\[
\begin{aligned}
&\mathcal{M}(r,g)
-\mathcal{M}(r^*,g^*)
\\
&\quad =
\Bigl[
\mathcal{M}_{\mathrm{pref}}(r,g)
-
\mathcal{M}_{\mathrm{pref}}(r^*,g^*)
\Bigr]
\\
&\qquad
+
\lambda
\Bigl[
\mathcal{M}_{\mathrm{anchor}}(r,g)
-
\mathcal{M}_{\mathrm{anchor}}(r^*,g^*)
\Bigr].
\end{aligned}
\]
For the preference component, we have
\[
\mathcal{M}_{\mathrm{pref}}(r,g)
-
\mathcal{M}_{\mathrm{pref}}(r^*,g^*)
=
\mathbb{E}_{(u_1,u_2)\sim\mathbb{P}_\psi^{\mathrm{pref}}}
\left[
\mathrm{KL}\!\left(
\mathrm{Bern}(p^*(u_1,u_2))\,\|\,\mathrm{Bern}(p(u_1,u_2))
\right)
\right]
\ge 0.
\tag{A.11}
\label{eq:pref-nonneg}
\]
For the anchor component, Proposition~\ref{prop:anchor-global-curvature}
yields
\[
\begin{aligned}
&\mathcal{M}_{\mathrm{anchor}}(r,g)
-
\mathcal{M}_{\mathrm{anchor}}(r^*,g^*)
\\
&\quad\ge
c_{\mathrm{anc}}
\left(
\|r-r^*\|_{L_2(\mathbb P_\psi)}^2
+
\|g-g^*\|_{L_2(\mathbb P_\psi)}^2
\right).
\end{aligned}
\tag{A.12}
\label{eq:anc-curv}
\]
Combining \eqref{eq:pref-nonneg} and \eqref{eq:anc-curv}, and using
$\lambda>0$, we have
\[
\begin{aligned}
&\mathcal{M}(r,g)
-
\mathcal{M}(r^*,g^*)
\\
&\quad\ge
c_{\mathrm{pop}}
\left(
\|r-r^*\|_{L_2(\mathbb P_\psi)}^2
+
\|g-g^*\|_{L_2(\mathbb P_\psi)}^2
\right),
\end{aligned}
\tag{A.13}
\label{eq:c_pop}
\]
where $c_{\mathrm{pop}}:=\lambda c_{\mathrm{anc}}>0$.
\end{proof}

\subsection{Proof of Lemma~\ref{lem:pop-smoothness}}

\begin{proof}[Proof of Quadratic upper bound on the population excess loss]
The joint excess loss decomposes into preference and anchor parts:
\[
\begin{aligned}
&\mathcal{M}(r,g)
-\mathcal{M}(r^*,g^*)
\\
&\quad=
\Bigl[
\mathcal{M}_{\mathrm{pref}}(r,g)
-\mathcal{M}_{\mathrm{pref}}(r^*,g^*)
\Bigr]
\\
&\qquad+
\lambda
\Bigl[
\mathcal{M}_{\mathrm{anchor}}(r,g)
-\mathcal{M}_{\mathrm{anchor}}(r^*,g^*)
\Bigr].
\end{aligned}
\tag{A.14}
\label{eq:smoothness-decomp}
\]
We bound each component in \eqref{eq:smoothness-decomp}. We first establish uniform KL upper bounds. Since
$(r,g)\in\Theta_K(B_\Theta)$ and
$\|r^*\|_\infty,\|g^*\|_\infty\le B<B_\Theta$, the same argument as in Proposition~\ref{prop:1} ensures that the preference probabilities
$p(u_1,u_2)$ and $p^*(u_1,u_2)$ are uniformly bounded away from $0$ and
$1$. Therefore, there exists a constant $C_{\mathrm{KL}}>0$, depending
only on $B_\Theta$, such that for every $(u_1,u_2)$,
\[
\mathrm{KL}\!\left(
\mathrm{Bern}(p^*(u_1,u_2))\,\middle\|\,
\mathrm{Bern}(p(u_1,u_2))
\right)
\le
C_{\mathrm{KL}}
\left(p(u_1,u_2)-p^*(u_1,u_2)\right)^2.
\tag{A.15}
\label{eq:KL-upper}
\]

Moreover, the three threshold-induced anchor class probabilities
$\pi_\ell(u)$ and $\pi_\ell^*(u)$, $\ell\in\{0,1,2\}$, are uniformly
bounded away from zero. Hence there exists a constant $C_\pi>0$,
depending only on $B_\Theta,\tau_1,\tau_2$, such that
\[
\mathrm{KL}\!\left(
\pi^*(u)\,\middle\|\,\pi(u)
\right)
\le
C_\pi\|\pi(u)-\pi^*(u)\|_2^2.
\tag{A.16}
\label{eq:multi-KL-upper}
\]

We next establish a pointwise Lipschitz bound from $(r(u),g(u))$ to the
anchor probabilities. As in the proof of
Proposition~\ref{prop:anchor-global-curvature}, define $z_{\mathrm{anc},k}(u)=(r(u)-\tau_k)e^{-g(u)}$ and $z_{\mathrm{anc},k}^*(u)=(r^*(u)-\tau_k)e^{-g^*(u)}$, so that $q_k(u)=\Phi(z_{\mathrm{anc},k}(u))$ and
$q_k^*(u)=\Phi(z_{\mathrm{anc},k}^*(u))$. The pointwise map $(r(u),g(u))
\mapsto
z_{\mathrm{anc},k}(u)
=
(r(u)-\tau_k)e^{-g(u)}$ has gradient
\[
\nabla_{(r(u),g(u))} z_{\mathrm{anc},k}(u)
=
\bigl(e^{-g(u)},\,-(r(u)-\tau_k)e^{-g(u)}\bigr).
\]
Let $\tau_{\max}:=\max\{|\tau_1|,|\tau_2|\}$. Since
$|r(u)|\le B_\Theta$ and $|g(u)|\le B_\Theta$, its Euclidean norm satisfies
\[
\left\|
\nabla_{(r(u),g(u))} z_{\mathrm{anc},k}(u)
\right\|_2
\le
e^{B_\Theta}\sqrt{1+(B_\Theta+\tau_{\max})^2}
=:
L_anc.
\tag{A.17}
\label{eq:zk-grad}
\]

Combining \eqref{eq:zk-grad} with
$\|\Phi'\|_\infty=\varphi(0)\le 1/\sqrt{2\pi}$ and the chain rule yields,
for every $u\in\mathcal U$ and $k\in\{1,2\}$,
\[
\begin{aligned}
|q_k(u)-q_k^*(u)|
&=
|\Phi(z_{\mathrm{anc},k}(u))-\Phi(z_{\mathrm{anc},k}^*(u))|
\\
&\le
\|\Phi'\|_\infty\,|z_{\mathrm{anc},k}(u)-z_{\mathrm{anc},k}^*(u)|
\\
&\le
\frac{1}{\sqrt{2\pi}}
\sup_{(r,g)\in[-B_\Theta,B_\Theta]^2}
\|\nabla_{(r,g)}z_{\mathrm{anc},k}(r,g)\|_2
\cdot
\sqrt{|r(u)-r^*(u)|^2+|g(u)-g^*(u)|^2}
\\
&\le
\frac{L_anc}{\sqrt{2\pi}}
\sqrt{|r(u)-r^*(u)|^2+|g(u)-g^*(u)|^2}.
\end{aligned}
\]
Squaring both sides yields, for every $u\in\mathcal U$ and
$k\in\{1,2\}$,
\[
|q_k(u)-q_k^*(u)|^2
\le
\frac{L_{anc}^2}{2\pi}
\bigl(|r(u)-r^*(u)|^2+|g(u)-g^*(u)|^2\bigr).
\tag{A.18}
\label{eq:q-Lip}
\]

 By the definitions in \eqref{eq:three_lower}, we have
\[
\begin{aligned}
\|\pi(u)-\pi^*(u)\|_2^2
&=
|\delta_1(u)|^2+
|\delta_1(u)-\delta_2(u)|^2+
|\delta_2(u)|^2
\\
&\le
3\left(|\delta_1(u)|^2+|\delta_2(u)|^2\right)
\\
&=
3\left[
|q_1(u)-q_1^*(u)|^2+
|q_2(u)-q_2^*(u)|^2
\right].
\end{aligned}
\tag{A.19}
\label{eq:pi-q-upper}
\]

Combining \eqref{eq:multi-KL-upper}, \eqref{eq:q-Lip}, and
\eqref{eq:pi-q-upper}, and integrating with respect to
$u\sim\mathbb P_\psi$ (Assumption~\ref{ass:data}), we obtain
\[
\begin{aligned}
&\mathcal{M}_{\mathrm{anchor}}(r,g)
-\mathcal{M}_{\mathrm{anchor}}(r^*,g^*)
\\
&\quad=
\mathbb{E}_{u\sim\mathbb P_\psi}
\left[
\mathrm{KL}\!\left(
\pi^*(u)\,\middle\|\,\pi(u)
\right)
\right]
\\
&\quad\le
C_\pi
\mathbb{E}_{u\sim\mathbb P_\psi}
\left[
\|\pi(u)-\pi^*(u)\|_2^2
\right]
\\
&\quad\le
3C_\pi
\sum_{k=1}^{2}
\mathbb{E}_{u\sim\mathbb P_\psi}
\left[
|q_k(u)-q_k^*(u)|^2
\right]
\\
&\quad\le
C_{\mathrm{anc}}
\left(
\|r-r^*\|_{L_2(\mathbb P_\psi)}^2
+
\|g-g^*\|_{L_2(\mathbb P_\psi)}^2
\right),
\end{aligned}
\tag{A.20}
\label{eq:anc-smoothness}
\]
where $C_{\mathrm{anc}}:=3C_\pi L_{anc}^2/\pi$ depends only on
$B_\Theta,\tau_1,\tau_2$.

We now establish the Lipschitz bound from $(r(u),g(u))$ to the preference
probability. Define
\[
z_{\mathrm{pref}}(u_1,u_2)
=
\frac{r(u_1)-r(u_2)}
{\sqrt{e^{2g(u_1)}+e^{2g(u_2)}}},
\qquad
z_{\mathrm{pref}}^*(u_1,u_2)
=
\frac{r^*(u_1)-r^*(u_2)}
{\sqrt{e^{2g^*(u_1)}+e^{2g^*(u_2)}}},
\]
so that $p(u_1,u_2)=\Phi(z_{\mathrm{pref}}(u_1,u_2))$ and
$p^*(u_1,u_2)=\Phi(z_{\mathrm{pref}}^*(u_1,u_2))$.

Write $\theta=(\theta_1,\theta_2,\theta_3,\theta_4):=
\bigl(r(u_1),r(u_2),g(u_1),g(u_2)\bigr)$, so that
$z_{\mathrm{pref}}$ can be viewed as a function of
$\theta\in\mathbb{R}^4$. A direct computation gives
\[
\partial_{\theta_1}z_{\mathrm{pref}}
=
\frac{1}{\sqrt{e^{2\theta_3}+e^{2\theta_4}}},
\qquad
\partial_{\theta_2}z_{\mathrm{pref}}
=
-\frac{1}{\sqrt{e^{2\theta_3}+e^{2\theta_4}}},
\]
\[
\partial_{\theta_3}z_{\mathrm{pref}}
=
-\frac{(\theta_1-\theta_2)e^{2\theta_3}}
{\bigl(e^{2\theta_3}+e^{2\theta_4}\bigr)^{3/2}},
\qquad
\partial_{\theta_4}z_{\mathrm{pref}}
=
-\frac{(\theta_1-\theta_2)e^{2\theta_4}}
{\bigl(e^{2\theta_3}+e^{2\theta_4}\bigr)^{3/2}}.
\]
On $[-B_\Theta,B_\Theta]^4$, we have
$|\theta_1-\theta_2|\le 2B_\Theta$ and
$\sqrt{e^{2\theta_3}+e^{2\theta_4}}\ge\sqrt{2}\,e^{-B_\Theta}$, hence
\[
|\partial_{\theta_1}z_{\mathrm{pref}}|,\,
|\partial_{\theta_2}z_{\mathrm{pref}}|
\le
\frac{e^{B_\Theta}}{\sqrt{2}},
\qquad
|\partial_{\theta_3}z_{\mathrm{pref}}|,\,
|\partial_{\theta_4}z_{\mathrm{pref}}|
\le
\sqrt{2}\,B_\Theta\,e^{4B_\Theta}.
\]
Therefore, the Euclidean norm of the gradient
$\nabla_\theta z_{\mathrm{pref}}$ satisfies, on
$[-B_\Theta,B_\Theta]^4$,
\[
\|\nabla_\theta z_{\mathrm{pref}}\|_2
\le
\sqrt{e^{2B_\Theta}+4B_\Theta^2\,e^{8B_\Theta}}
=:
L_{\mathrm{pref}}.
\tag{A.21}
\label{eq:pref-grad}
\]

Combining \eqref{eq:pref-grad} with
$\|\Phi'\|_\infty=\varphi(0)\le 1/\sqrt{2\pi}$ and the chain rule, for
every $(u_1,u_2)$,
\[
\begin{aligned}
|p(u_1,u_2)-p^*(u_1,u_2)|
&=
|\Phi(z_{\mathrm{pref}}(u_1,u_2))
-\Phi(z_{\mathrm{pref}}^*(u_1,u_2))|
\\
&\le
\|\Phi'\|_\infty
|z_{\mathrm{pref}}(u_1,u_2)-z_{\mathrm{pref}}^*(u_1,u_2)|
\\
&\le
\frac{L_{\mathrm{pref}}}{\sqrt{2\pi}}
\sqrt{
\sum_{a=1}^{2}
\bigl(
|r(u_a)-r^*(u_a)|^2+
|g(u_a)-g^*(u_a)|^2
\bigr)
}.
\end{aligned}
\]
Squaring both sides yields, for every $(u_1,u_2)$,
\[
|p(u_1,u_2)-p^*(u_1,u_2)|^2
\le
\frac{L_{\mathrm{pref}}^2}{2\pi}
\sum_{a=1}^{2}
\bigl(
|r(u_a)-r^*(u_a)|^2+
|g(u_a)-g^*(u_a)|^2
\bigr).
\tag{A.22}
\label{eq:p-Lip}
\]

Combining \eqref{eq:KL-upper} and \eqref{eq:p-Lip}, and taking expectation
under $(u_1,u_2)\sim\mathbb P_\psi^{\mathrm{pref}}$ (Assumption~\ref{ass:data}), we obtain
\[
\begin{aligned}
&\mathcal{M}_{\mathrm{pref}}(r,g)
-\mathcal{M}_{\mathrm{pref}}(r^*,g^*)
\\
&\quad=
\mathbb{E}_{(u_1,u_2)\sim\mathbb P_\psi^{\mathrm{pref}}}
\left[
\mathrm{KL}\!\left(
\mathrm{Bern}(p^*(u_1,u_2))
\,\middle\|\,
\mathrm{Bern}(p(u_1,u_2))
\right)
\right]
\\
&\quad\le
C_{\mathrm{KL}}\,
\mathbb{E}_{(u_1,u_2)\sim\mathbb P_\psi^{\mathrm{pref}}}
\left[
|p(u_1,u_2)-p^*(u_1,u_2)|^2
\right]
\\
&\quad\le
\frac{C_{\mathrm{KL}}L_{\mathrm{pref}}^2}{2\pi}
\mathbb{E}_{(u_1,u_2)\sim\mathbb P_\psi^{\mathrm{pref}}}
\left[
\sum_{a=1}^{2}
\bigl(
|r(u_a)-r^*(u_a)|^2+
|g(u_a)-g^*(u_a)|^2
\bigr)
\right]
\\
&\quad=
\frac{C_{\mathrm{KL}}L_{\mathrm{pref}}^2}{\pi}
\mathbb{E}_{u\sim\mathbb P_\psi}
\left[
|r(u)-r^*(u)|^2+
|g(u)-g^*(u)|^2
\right]
\\
&\quad=
C_{\mathrm{pref,smooth}}
\left(
\|r-r^*\|_{L_2(\mathbb P_\psi)}^2
+
\|g-g^*\|_{L_2(\mathbb P_\psi)}^2
\right),
\end{aligned}
\tag{A.23}
\label{eq:pref-smoothness}
\]
where $C_{\mathrm{pref,smooth}}:=C_{\mathrm{KL}}L_{\mathrm{pref}}^2/\pi$
depends only on $B_\Theta$.

Finally, substituting \eqref{eq:anc-smoothness} and
\eqref{eq:pref-smoothness} into \eqref{eq:smoothness-decomp},
\[
\begin{aligned}
&\mathcal{M}(r,g)
-\mathcal{M}(r^*,g^*)
\\
&\quad\le
C_{\mathrm{pop}}
\left(
\|r-r^*\|_{L_2(\mathbb P_\psi)}^2
+
\|g-g^*\|_{L_2(\mathbb P_\psi)}^2
\right),
\end{aligned}
\tag{A.24}
\label{eq:C_pop}
\]
where $C_{\mathrm{pop}}:=C_{\mathrm{pref,smooth}}+\lambda C_{\mathrm{anc}}>0$.
The constant $C_{\mathrm{pop}}$ depends only on
$B_\Theta,\tau_1,\tau_2,\lambda$, and is independent of $(r,g)$ and $K$.
This completes the proof.
\end{proof}

\subsection{Proof of Lemma~\ref{lem:empirical-process}}
\begin{proof}
We prove the result in several steps.

\paragraph{Step 1: Decompose the empirical fluctuation.}
Let \(P_{\mathrm{pref}}\) and \(P_{\mathrm{anc}}\) denote the population
distributions of the preference observation
\(Z_{\mathrm{pref}}=(u_1,u_2,\bar C)\) and the anchor observation
\(Z_{\mathrm{anc}}=(u,A_1,A_2)\), respectively. Let
\(\mathbb P_{n_{\mathrm{pref}}}\) and \(\mathbb P_{n_{\mathrm{anc}}}\)
denote their empirical measures. For an anchor observation
$Z_{\mathrm{anc}}=(u,A_1,A_2)$, define the pointwise anchor excess loss
\[
h_{\mathrm{anc},r,g}(Z_{\mathrm{anc}})
:=
\ell_{\mathrm{anc}}(r,g;Z_{\mathrm{anc}})
-
\ell_{\mathrm{anc}}(r^*,g^*;Z_{\mathrm{anc}}),
\]
where
\[
\ell_{\mathrm{anc}}(r,g;Z_{\mathrm{anc}})
=
-\Bigl[
(1-A_1)\log \pi_0(u)
+
(A_1-A_2)\log \pi_1(u)
+
A_2\log \pi_2(u)
\Bigr].
\]
For a preference observation $Z_{\mathrm{pref}}=(u_1,u_2,\bar C)$,
define the pointwise preference excess loss
\[
h_{\mathrm{pref},r,g}(Z_{\mathrm{pref}})
:=
\ell_{\mathrm{pref}}(r,g;Z_{\mathrm{pref}})
-
\ell_{\mathrm{pref}}(r^*,g^*;Z_{\mathrm{pref}}),
\]
where
\[
\ell_{\mathrm{pref}}(r,g;Z_{\mathrm{pref}})
=
-\Bigl[
\bar C\log p(u_1,u_2)
+
(1-\bar C)\log\{1-p(u_1,u_2)\}
\Bigr].
\]
Here $\bar C\in[0,1]$ denotes the soft preference label and satisfies
$\mathbb E[\bar C\mid u_1,u_2]=p^*(u_1,u_2)$. Then the centered empirical
fluctuation can be written as
\[
G(r,g)
=
(\mathbb P_{n_{\mathrm{pref}}}-P_{\mathrm{pref}})
h_{\mathrm{pref},r,g}
+
\lambda
(\mathbb P_{n_{\mathrm{anc}}}-P_{\mathrm{anc}})
h_{\mathrm{anc},r,g}.
\tag{A.25}
\label{eq:emp-decomp}
\]

\paragraph{Step 2: Establish the $L_2$ size of the loss differences.}
By the truncation $(r,g)\in\Theta_K(B_\Theta)$ and
Assumption~\ref{ass:1}, we have $\|r\|_\infty,\|g\|_\infty\le B_\Theta$
and $\|r^*\|_\infty,\|g^*\|_\infty\le B<B_\Theta$. The same argument as in
the proof of Proposition~\ref{prop:1} yields that the preference
probabilities $p(u_1,u_2)$ and $p^*(u_1,u_2)$ are uniformly bounded away
from $0$ and $1$. Moreover, the three threshold-induced anchor class
probabilities $\pi_\ell(u)$ and $\pi_\ell^*(u)$, $\ell\in\{0,1,2\}$, are
uniformly bounded away from zero. Therefore, the log-loss functions have
uniformly bounded first derivatives on $\Theta_K(B_\Theta)$.

By the Lipschitz bound in \eqref{eq:q-Lip} and the
definitions $\pi_0(u)=1-q_1(u)$, $\pi_1(u)=q_1(u)-q_2(u)$, and
$\pi_2(u)=q_2(u)$, we have
\[
|h_{\mathrm{anc},r,g}(Z_{\mathrm{anc}})|
\le
C\bigl(|r(u)-r^*(u)|+|g(u)-g^*(u)|\bigr),
\]
where $C$ depends only on $B_\Theta,\tau_1,\tau_2$.

By Assumption~\ref{ass:data}, integrating with respect to
$u\sim\mathbb P_\psi$,
\[
\begin{aligned}
\mathbb{E}_{Z_{\mathrm{anc}}}|h_{\mathrm{anc},r,g}|^2
&\le
2C^2\,\mathbb{E}_{u\sim\mathbb P_\psi}
\bigl[
|r(u)-r^*(u)|^2+|g(u)-g^*(u)|^2
\bigr]
\\
&=
2C^2
\left(
\|r-r^*\|_{L_2(\mathbb P_\psi)}^2
+
\|g-g^*\|_{L_2(\mathbb P_\psi)}^2
\right).
\end{aligned}
\]
Hence,
\[
\|h_{\mathrm{anc},r,g}\|_{L_2(P_{\mathrm{anc}})}
\le
C\,R(r,g).
\tag{A.26}
\label{eq:anc-l2}
\]

For the preference loss, since $\bar C\in[0,1]$ and
$p(u_1,u_2)$ is uniformly bounded away from $0$ and $1$, the soft-label
cross-entropy is uniformly Lipschitz in $p(u_1,u_2)$. The preference
probability $p(u_1,u_2)=\Phi(z_{\mathrm{pref}}(u_1,u_2))$ is Lipschitz in
$(r(u_1),r(u_2),g(u_1),g(u_2))$ on the bounded set
$\Theta_K(B_\Theta)$. Therefore,
\[
\begin{aligned}
|h_{\mathrm{pref},r,g}(Z_{\mathrm{pref}})|
\le C\bigl\{&
|r(u_1)-r^*(u_1)|
+
|r(u_2)-r^*(u_2)|
\\
&+
|g(u_1)-g^*(u_1)|
+
|g(u_2)-g^*(u_2)|
\bigr\}.
\end{aligned}
\]
Since Assumption~\ref{ass:data} implies that both
preference feature marginals satisfy $u_1\sim\mathbb P_\psi$ and
$u_2\sim\mathbb P_\psi$, we obtain
\[
\begin{aligned}
\mathbb{E}_{Z_{\mathrm{pref}}}|h_{\mathrm{pref},r,g}|^2
&\le
4C^2\,
\mathbb{E}_{Z_{\mathrm{pref}}}
\left[
\sum_{a=1}^2
\bigl(
|r(u_a)-r^*(u_a)|^2
+
|g(u_a)-g^*(u_a)|^2
\bigr)
\right]
\\
&=
8C^2
\mathbb{E}_{u\sim\mathbb P_\psi}
\bigl[
|r(u)-r^*(u)|^2
+
|g(u)-g^*(u)|^2
\bigr]
\\
&=
8C^2
\left(
\|r-r^*\|_{L_2(\mathbb P_\psi)}^2
+
\|g-g^*\|_{L_2(\mathbb P_\psi)}^2
\right).
\end{aligned}
\]
Hence,
\[
\|h_{\mathrm{pref},r,g}\|_{L_2(P_{\mathrm{pref}})}
\le
C\,R(r,g).
\tag{A.27}
\label{eq:pref-l2}
\]

\paragraph{Step 3: Localize by $R$-shells.}
For $s>0$, define the shell
\[
\mathcal S_s
=
\Bigl\{
(r,g)\in\Theta_K(B_\Theta):
\tfrac{s}{2}\le R(r,g)\le s
\Bigr\}.
\]
By \eqref{eq:anc-l2} and \eqref{eq:pref-l2}, for every
$(r,g)\in\mathcal S_s$,
\[
\|h_{\mathrm{anc},r,g}\|_{L_2(P_{\mathrm{anc}})}
\le Cs,
\qquad
\|h_{\mathrm{pref},r,g}\|_{L_2(P_{\mathrm{pref}})}
\le Cs.
\tag{A.28}
\label{eq:shell-l2}
\]

\paragraph{Step 4: Control the entropy of the localized loss classes.}
Under Assumption~\ref{ass:2}(ii), each $(r,g)\in\Theta_K(B_\Theta)$ admits
the basis representation
\[
r(u)=\beta_r^\top B_{r,K}(u),
\qquad
g(u)=\beta_g^\top B_{g,K}(u),
\]
with coefficient vector $\beta=(\beta_r,\beta_g)\in\mathbb R^{2K}$. By
Assumption~\ref{ass:2}(ii) and the truncation
$\|r\|_\infty,\|g\|_\infty\le B_\Theta$, we have
$\|\beta\|_2\le C_\Theta$ for some constant $C_\Theta>0$ independent of
$K$. By the standard covering number bound for Euclidean balls
\citep[Lemma~4.2.1]{vershynin2018high}, we have
\[
N\left(\rho,\{\beta:\|\beta\|_2\le C_\Theta\},\|\cdot\|_2\right)
\le
\left(\frac{3C_\Theta}{\rho}\right)^{2K}.
\]
Combining this covering number bound with the Lipschitz
property of the loss differences from Step~2, we obtain, for
$0<\epsilon<1$,
\[
\log N\bigl(
\epsilon s,
\{h_{\mathrm{anc},r,g}:(r,g)\in\mathcal S_s\},
L_2(P_{\mathrm{anc}})
\bigr)
\le
CK\log\!\left(\frac{C}{\epsilon}\right),
\tag{A.29}
\label{eq:entropy-anc}
\]
\[
\log N\bigl(
\epsilon s,
\{h_{\mathrm{pref},r,g}:(r,g)\in\mathcal S_s\},
L_2(P_{\mathrm{pref}})
\bigr)
\le
CK\log\!\left(\frac{C}{\epsilon}\right).
\tag{A.30}
\label{eq:entropy-pref}
\]

\paragraph{Step 5: Bound the empirical process on each shell.}
Combining the $L_2$-radius bound in \eqref{eq:shell-l2} with the entropy
bounds \eqref{eq:entropy-anc}--\eqref{eq:entropy-pref}, Talagrand's
inequality together with a chaining method yields, with probability at
least $1-e^{-t}$,
\[
\sup_{(r,g)\in\mathcal S_s}
\bigl|(\mathbb P_{n_{\mathrm{anc}}}-P_{\mathrm{anc}})
h_{\mathrm{anc},r,g}\bigr|
\le
C\!\left[
s\sqrt{\frac{K+t}{n_{\mathrm{anc}}}}
+
\frac{K+t}{n_{\mathrm{anc}}}
\right],
\tag{A.31}
\label{eq:ep-anc}
\]
and, with probability at least $1-e^{-t}$,
\[
\sup_{(r,g)\in\mathcal S_s}
\bigl|(\mathbb P_{n_{\mathrm{pref}}}-P_{\mathrm{pref}})
h_{\mathrm{pref},r,g}\bigr|
\le
C\!\left[
s\sqrt{\frac{K+t}{n_{\mathrm{pref}}}}
+
\frac{K+t}{n_{\mathrm{pref}}}
\right].
\tag{A.32}
\label{eq:ep-pref}
\]
Combining \eqref{eq:emp-decomp}, \eqref{eq:ep-anc}, and
\eqref{eq:ep-pref}, we obtain on $\mathcal S_s$,
\[
\sup_{(r,g)\in\mathcal S_s}
\bigl|G(r,g)\bigr|
\le
C\left[
s\left\{
\sqrt{\frac{K+t}{n_{\mathrm{pref}}}}
+
\lambda\sqrt{\frac{K+t}{n_{\mathrm{anc}}}}
\right\}
+
\frac{K+t}{n_{\mathrm{pref}}}
+
\lambda\frac{K+t}{n_{\mathrm{anc}}}
\right].
\tag{A.33}
\label{eq:shell-bound-raw}
\]
we define $\Delta_t
:=
\lambda^{-1}\sqrt{\frac{K+t}{n_{\mathrm{pref}}}}
+
\sqrt{\frac{K+t}{n_{\mathrm{anc}}}}$. Moreover, since $0<\lambda\le 1$,
\[
\frac{K+t}{n_{\mathrm{pref}}}
+
\lambda\frac{K+t}{n_{\mathrm{anc}}}
\le
\lambda\Delta_t^2.
\]
Therefore,
\[
\sup_{(r,g)\in\mathcal S_s}
\bigl|G(r,g)\bigr|
\le
C\bigl[s\lambda\Delta_t+\lambda\Delta_t^2\bigr].
\tag{A.34}
\label{eq:shell-bound}
\]

\paragraph{Step 6: Peeling over dyadic shells.}
By the truncation in $\Theta_K(B_\Theta)$ and Assumption~\ref{ass:1}, the
radius $R(r,g)$ on $\Theta_K(B_\Theta)$ is bounded by a finite constant
$R_{\max}=R_{\max}(B_\Theta)$. We cover $\Theta_K(B_\Theta)$ by dyadic
shells according to the value of $R(r,g)$. For the small-radius region
$R(r,g)\le \Delta_t$, the bound \eqref{eq:shell-bound} with
$s=\Delta_t$ gives
\[
\bigl|G(r,g)\bigr|
\le
C\lambda\Delta_t^2.
\]
For the remaining region, consider shells of the form
\[
2^{j-1}\Delta_t<R(r,g)\le 2^j\Delta_t,
\qquad j=1,\ldots,J,
\]
where $J=\lceil\log_2(R_{\max}/\Delta_t)\rceil$. Applying
\eqref{eq:shell-bound} on each shell with $s=2^j\Delta_t$ and an enlarged
deviation level $t+\log J$, and then applying a union bound over the
shells, we obtain that with probability at least $1-2e^{-t}$, for all
$(r,g)\in\Theta_K(B_\Theta)$,
\[
\bigl|G(r,g)\bigr|
\le
C\bigl\{R(r,g)\lambda\Delta_t+\lambda\Delta_t^2\bigr\}.
\]
This proves the lemma.
\end{proof}

\subsection{Proof of Theorem~\ref{thm:main-rate}}
\begin{proof}
Define the joint estimation error, the sieve approximation error, and the stochastic fluctuation as
\[
R(r,g)
:=
\|r-r^*\|_{L_2(\mathbb P_\psi)}
+
\|g-g^*\|_{L_2(\mathbb P_\psi)},
\]
\[
a_K:=K^{-\alpha_r/d}+K^{-\alpha_g/d},
\qquad
\Delta_t
:=
\lambda^{-1}\sqrt{\frac{K+t}{n_{\mathrm{pref}}}}
+
\sqrt{\frac{K+t}{n_{\mathrm{anc}}}}.
\]

\paragraph{Step 1. Approximate the truth by sieve elements.}
By Assumption~\ref{ass:2}(i), there exists
$(r_K,g_K)\in\Theta_K(B_\Theta)$ such that
\[
R(r_K,g_K)\le C\,a_K.
\tag{A.35}
\label{eq:oracle-approx}
\]

\paragraph{Step 2. Apply global curvature and global smoothness.}
Recall that the joint population loss is
\[
\mathcal M(r,g)
:=
\mathcal M_{\mathrm{pref}}(r,g)
+
\lambda\mathcal M_{\mathrm{anchor}}(r,g).
\]
Moreover, the joint curvature and smoothness constants defined in \eqref{eq:c_pop} and \eqref{eq:C_pop} satisfy
\[
c_{\mathrm{pop}}=\lambda c_{\mathrm{anc}},
\qquad
C_{\mathrm{pop}}
=
C_{\mathrm{pref,smooth}}+\lambda C_{\mathrm{anc}},
\]
where
$c_{\mathrm{anc}},C_{\mathrm{pref,smooth}},C_{\mathrm{anc}}>0$
are independent of $\lambda$.

Since $(\hat r,\hat g)\in\Theta_K(B_\Theta)$,
Corollary~\ref{cor:joint-curvature} yields
\[
\mathcal M(\hat r,\hat g)
-
\mathcal M(r^*,g^*)
\ge
c_{\mathrm{pop}}\,R(\hat r,\hat g)^2.
\tag{A.36}
\label{eq:curvature-hat}
\]
Since $(r_K,g_K)\in\Theta_K(B_\Theta)$,
Lemma~\ref{lem:pop-smoothness} yields
\[
\mathcal M(r_K,g_K)
-
\mathcal M(r^*,g^*)
\le
C_{\mathrm{pop}}\,R(r_K,g_K)^2.
\tag{A.37}
\label{eq:smoothness-oracle}
\]

\paragraph{Step 3. Use empirical optimality.}
Recall that the empirical joint loss is
\[
\mathcal L(r,g)
:=
\mathcal L_{\mathrm{pref}}(r,g)
+
\lambda\mathcal L_{\mathrm{anchor}}(r,g).
\]
Since $(\hat r,\hat g)$ minimizes $\mathcal L$ over
$\Theta_K(B_\Theta)$ and $(r_K,g_K)\in\Theta_K(B_\Theta)$,
\[
\mathcal L(\hat r,\hat g)
\le
\mathcal L(r_K,g_K).
\tag{A.38}
\label{eq:emp-opt}
\]
Define
\[
G(r,g)
:=
\bigl[\mathcal L(r,g)-\mathcal L(r^*,g^*)\bigr]
-
\bigl[\mathcal M(r,g)-\mathcal M(r^*,g^*)\bigr].
\]
By \eqref{eq:emp-opt} and the definition of $G$, we obtain
\[
\mathcal M(\hat r,\hat g)
-
\mathcal M(r^*,g^*)
\le
\mathcal M(r_K,g_K)
-
\mathcal M(r^*,g^*)
+
|G(r_K,g_K)|
+
|G(\hat r,\hat g)|.
\tag{A.39}
\label{eq:basic-ineq}
\]

\paragraph{Step 4. Control the empirical fluctuations.}
By Lemma~\ref{lem:empirical-process}, on an event of probability at least
$1-2e^{-t}$, for all $(r,g)\in\Theta_K(B_\Theta)$,
\[
|G(r,g)|
\le
C\left\{
R(r,g)
\left(
\sqrt{\frac{K+t}{n_{\mathrm{pref}}}}
+
\lambda\sqrt{\frac{K+t}{n_{\mathrm{anc}}}}
\right)
+
\frac{K+t}{n_{\mathrm{pref}}}
+
\lambda\frac{K+t}{n_{\mathrm{anc}}}
\right\}.
\tag{A.40}
\label{eq:G-uniform}
\]
Equivalently, by the definition of $\Delta_t$,
\[
|G(r,g)|
\le
C\left\{
R(r,g)\lambda\Delta_t
+
\frac{K+t}{n_{\mathrm{pref}}}
+
\lambda\frac{K+t}{n_{\mathrm{anc}}}
\right\}.
\tag{A.41}
\label{eq:G-uniform-delta}
\]
Applying \eqref{eq:G-uniform-delta} to both $(r_K,g_K)$ and
$(\hat r,\hat g)$, we have
\[
|G(r_K,g_K)|
\le
C\left\{
R(r_K,g_K)\lambda\Delta_t
+
\frac{K+t}{n_{\mathrm{pref}}}
+
\lambda\frac{K+t}{n_{\mathrm{anc}}}
\right\},
\tag{A.42}
\label{eq:G-oracle}
\]
and
\[
|G(\hat r,\hat g)|
\le
C\left\{
R(\hat r,\hat g)\lambda\Delta_t
+
\frac{K+t}{n_{\mathrm{pref}}}
+
\lambda\frac{K+t}{n_{\mathrm{anc}}}
\right\}.
\tag{A.43}
\label{eq:G-hat}
\]

\paragraph{Step 5. Combine the bounds.}
Combining \eqref{eq:basic-ineq}, \eqref{eq:curvature-hat},
\eqref{eq:smoothness-oracle}, \eqref{eq:G-oracle}, and \eqref{eq:G-hat},
we have
\[
c_{\mathrm{pop}}R(\hat r,\hat g)^2
\le
C_{\mathrm{pop}}R(r_K,g_K)^2
+
CR(r_K,g_K)\lambda\Delta_t
+
CR(\hat r,\hat g)\lambda\Delta_t
+
C\frac{K+t}{n_{\mathrm{pref}}}
+
C\lambda\frac{K+t}{n_{\mathrm{anc}}}.
\tag{A.44}
\label{eq:combine-pre-oracle}
\]
Using
\[
c_{\mathrm{pop}}=\lambda c_{\mathrm{anc}},
\qquad
C_{\mathrm{pop}}
=
C_{\mathrm{pref,smooth}}+\lambda C_{\mathrm{anc}},
\]
and \eqref{eq:oracle-approx}, we obtain
\[
\lambda c_{\mathrm{anc}}R(\hat r,\hat g)^2
\le
C(1+\lambda)a_K^2
+
Ca_K\lambda\Delta_t
+
CR(\hat r,\hat g)\lambda\Delta_t
+
C\frac{K+t}{n_{\mathrm{pref}}}
+
C\lambda\frac{K+t}{n_{\mathrm{anc}}}.
\tag{A.45}
\label{eq:before-normalization}
\]
Dividing both sides by $\lambda c_{\mathrm{anc}}$ gives
\[
R(\hat r,\hat g)^2
\le
C(1+\lambda^{-1})a_K^2
+
Ca_K\Delta_t
+
CR(\hat r,\hat g)\Delta_t
+
C\lambda^{-1}\frac{K+t}{n_{\mathrm{pref}}}
+
C\frac{K+t}{n_{\mathrm{anc}}}.
\tag{A.46}
\label{eq:before-young}
\]
Since $0<\lambda\le 1$,
\[
\lambda^{-1}\frac{K+t}{n_{\mathrm{pref}}}
+
\frac{K+t}{n_{\mathrm{anc}}}
\le
C\Delta_t^2.
\]
Thus
\[
R(\hat r,\hat g)^2
\le
C(1+\lambda^{-1})a_K^2
+
Ca_K\Delta_t
+
CR(\hat r,\hat g)\Delta_t
+
C\Delta_t^2.
\tag{A.47}
\label{eq:before-young-simplified}
\]

By Young's inequality,
\[
Ca_K\Delta_t
\le
C(1+\lambda^{-1})a_K^2+C\Delta_t^2,
\qquad
CR(\hat r,\hat g)\Delta_t
\le
\frac12 R(\hat r,\hat g)^2+C\Delta_t^2.
\]
Substituting these into \eqref{eq:before-young-simplified} and absorbing
$\frac12R(\hat r,\hat g)^2$ into the left-hand side, we obtain
\[
R(\hat r,\hat g)^2
\le
C\left\{(1+\lambda^{-1})a_K^2+\Delta_t^2\right\}.
\]
Consequently,
\[
R(\hat r,\hat g)
\le
C\left(\sqrt{1+\lambda^{-1}}\,a_K+\Delta_t\right).
\tag{A.48}
\label{eq:R-rate}
\]

Recalling the definitions of $R,a_K,\Delta_t$, we have
\[
\|\hat r-r^*\|_{L_2(\mathbb P_\psi)}
+
\|\hat g-g^*\|_{L_2(\mathbb P_\psi)}
\le
C\!\left[
\sqrt{1+\lambda^{-1}}
\left(
K^{-\alpha_r/d}
+
K^{-\alpha_g/d}
\right)
+
\lambda^{-1}\sqrt{\frac{K+t}{n_{\mathrm{pref}}}}
+
\sqrt{\frac{K+t}{n_{\mathrm{anc}}}}
\right].
\]
Since $\alpha:=\min\{\alpha_r,\alpha_g\}$,
$K^{-\alpha_r/d}+K^{-\alpha_g/d}\le 2K^{-\alpha/d}$, which gives the simplified bound.
\end{proof}

\newpage

\section{Simulation study}\label{app:sim}

\textbf{Data generation.} We independently generate synthetic prompt and response representations
$x, y \in \mathbb{R}^{10}$ from a standard
normal distribution, and assign ground-truth rewards and variances via nonlinear functions $r^*(x,y)$ and $s^*(x,y)$. For each prompt-response pair $(x, y)$, the latent utility follows
$U(x,y)\sim \mathcal{N}(r^*(x,y),s^*(x,y)^2)$. For
each comparison $(x_i,y_{i,1},y_{i,2})$, let $p_i^*$ denote the ground-truth
preference probability that $y_{i,1}$ is preferred over $y_{i,2}$. We draw
$K=5$ independent annotator votes
$C_{i,a}\sim\mathrm{Bernoulli}(p_i^*)$ and use their empirical average $\bar C_i$ as the training soft preference label. Anchor labels are generated at the response level. For each response
$(x_j,y_j)$, we sample $U_j \sim \mathcal{N}\bigl(r^*(x_j,y_j), s^*(x_j,y_j)^2\bigr)$, and then two anchor labels are generated as $A_{j,1}=\mathbf{1}\{U_j\ge \tau_1\}$ and $A_{j,2}=\mathbf{1}\{U_j\ge \tau_2\}$. Since two anchor labels are generated from the same latent utility, this way follows ordinal constraint $A_{j,2} \leq A_{j,1}$ mentioned in Section~\ref{sec:method}. We generate 10K/2K/2K train/validation/test splits.

\textbf{Methods.} We compare four reward modeling approaches. (1) \textbf{BT}: a standard reward model with the Bradley-Terry loss. (2) \textbf{Gaussian}: a standard gaussian reward model predicting both mean $r_\phi$ and standard deviation $s_\phi$. (3) \textbf{Two-anchor}: our proposed model, extending Gaussian with two anchors. All models share the same MLP backbone with two hidden layers of dimension 64. We evaluate model performance on test data based on the following metrics: Accuracy, Cross-entropy, Brier score, Pearson and Spearman correlation between predicted $\hat{r}$ and ground-truth $r^*$, and Pearson and Spearman correlation between predicted $\hat{s}$ and ground-truth $s^*$ (Gaussian variants only). We report results averaged over 20 random seeds. Additional experimental details are given in Appendix~\ref{app:sim}.

\textbf{Results.} In Figure~\ref{fig:sim_anchor_ablation}, The two-anchor model consistently achieves the best overall performance across preference accuracy, Brier score, and the recovery of both reward mean and reward standard deviation. Figure~\ref{fig:sim_anchor_ablation}(\subref{fig:sim_anchor_fraction}) shows that the performance of Gaussian-2A continues to improve as the proportion of available anchor labels increases. Figure~\ref{fig:sim_anchor_ablation}(\subref{fig:sim_anchor_attack}) evaluates robustness to noisy anchor labels. Although the performance of Gaussian-2A gradually degrades as the anchor noise level increases, it still maintains a clear margin over the Gaussian baseline across all noise levels. Similar patterns are also observed on other evaluation metrics (see Figure~\ref{fig:sim_anchor_ablation_additional}). Figure~\ref{fig:anchor-recovery} empirically supports
Proposition~\ref{prop:two-anchor-id}: two distinct anchors provide sufficient
information to recover both $r_\phi(x,y)$ and $s_\phi(x,y)$.

\begin{figure*}[t]
    \centering
    \begin{subfigure}{\textwidth}
        \centering
        \includegraphics[width=0.24\textwidth]{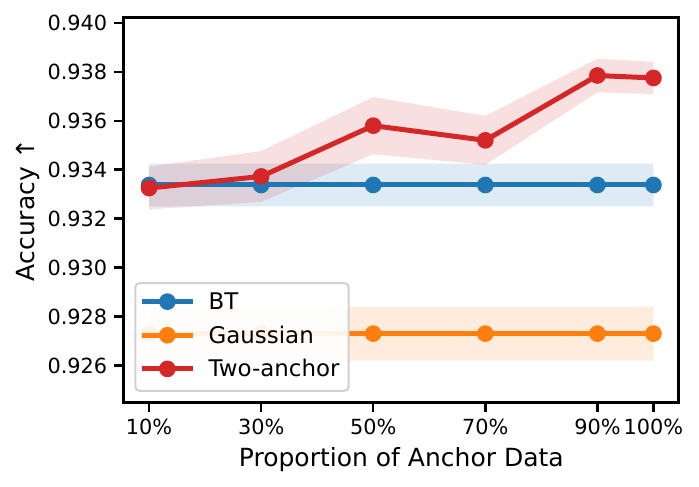}
        \includegraphics[width=0.24\textwidth]{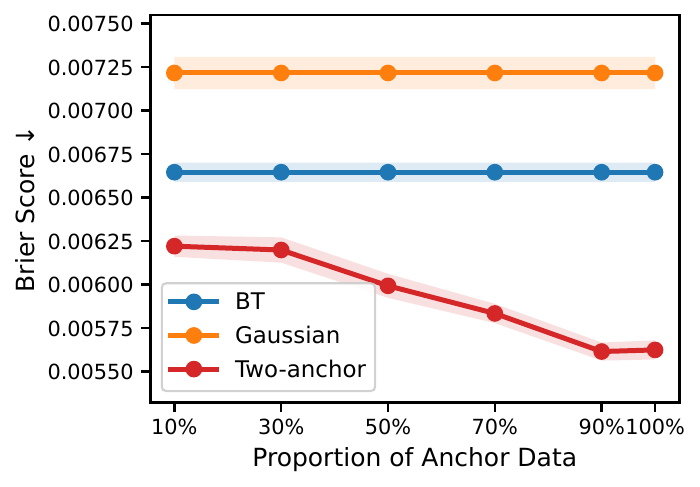}
        \includegraphics[width=0.24\textwidth]{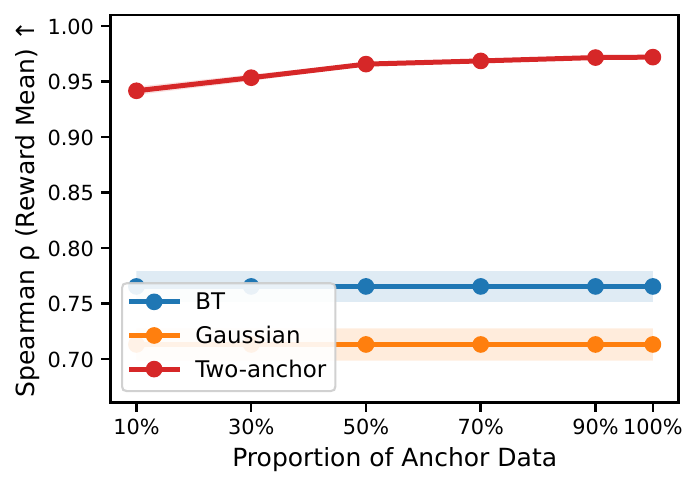}
        \includegraphics[width=0.24\textwidth]{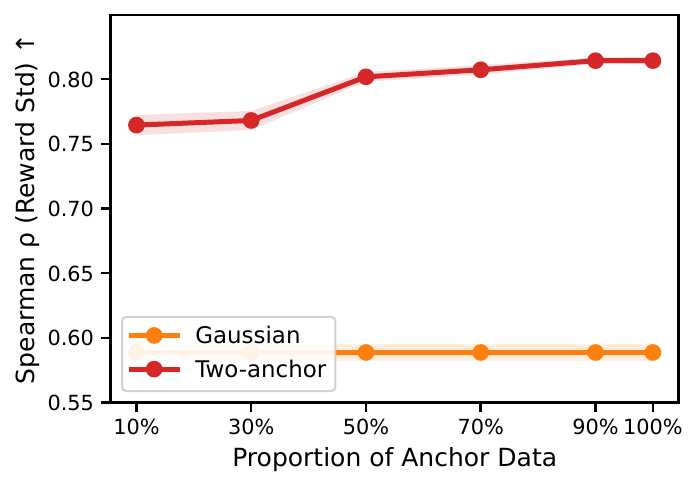}
        \caption{}
        \label{fig:sim_anchor_fraction}
    \end{subfigure}
    \begin{subfigure}{\textwidth}
        \centering
        \includegraphics[width=0.24\textwidth]{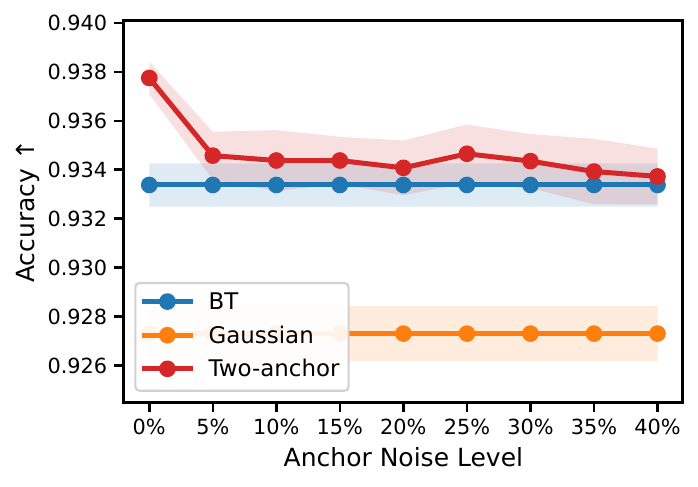}
        \includegraphics[width=0.24\textwidth]{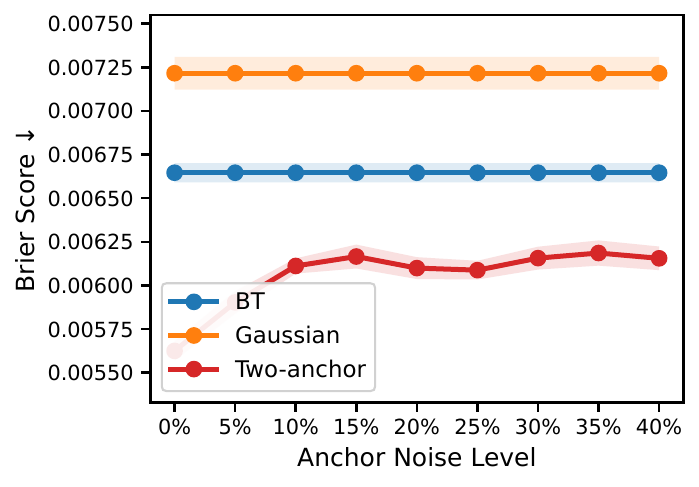}
        \includegraphics[width=0.24\textwidth]{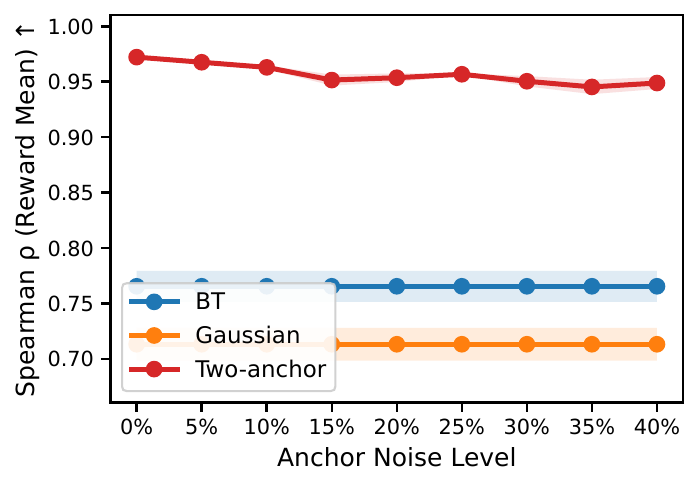}
        \includegraphics[width=0.24\textwidth]{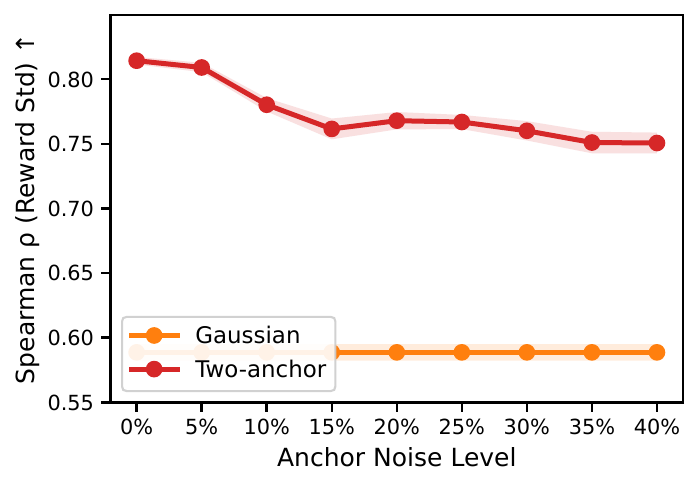}
        \caption{}
        \label{fig:sim_anchor_attack}
    \end{subfigure}

    \caption{
    Test performance of Two-anchor versus baselines.
    (\subref{fig:sim_anchor_fraction}) Anchor data efficiency under different proportions of available labels.
    (\subref{fig:sim_anchor_attack}) Anchor label robustness across different noise levels. The shade regions represent one standard deviation over random seeds.
    }
    \label{fig:sim_anchor_ablation}
\end{figure*}

\begin{figure*}[h]
    \centering
    \begin{subfigure}{\textwidth}
        \centering
        \includegraphics[width=0.31\textwidth]{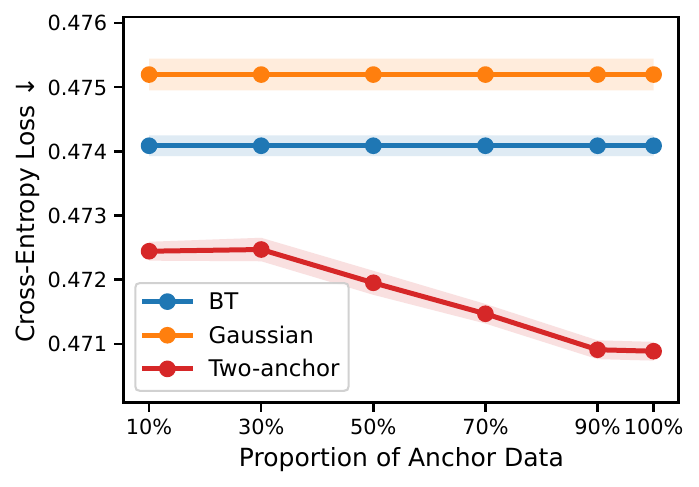}
        \includegraphics[width=0.31\textwidth]{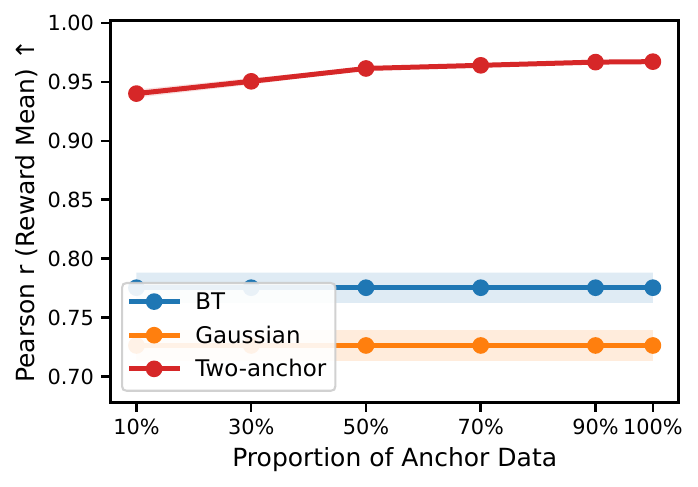}
        \includegraphics[width=0.31\textwidth]{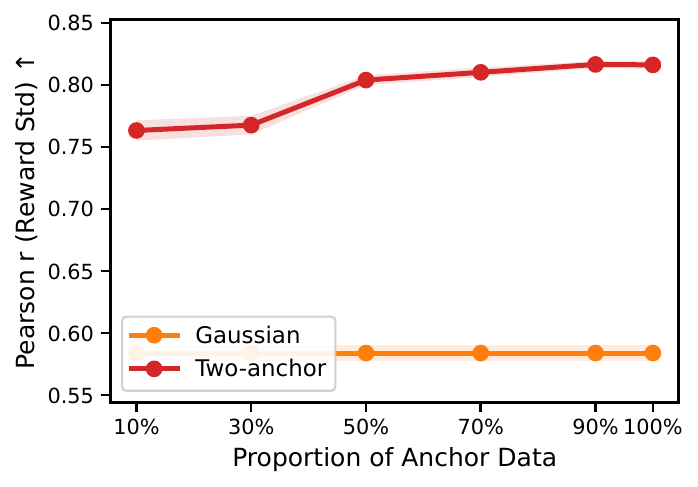}
        \caption{}
        \label{fig:sim_anchor_fraction_additional}
    \end{subfigure}

    \begin{subfigure}{\textwidth}
        \centering
        \includegraphics[width=0.31\textwidth]{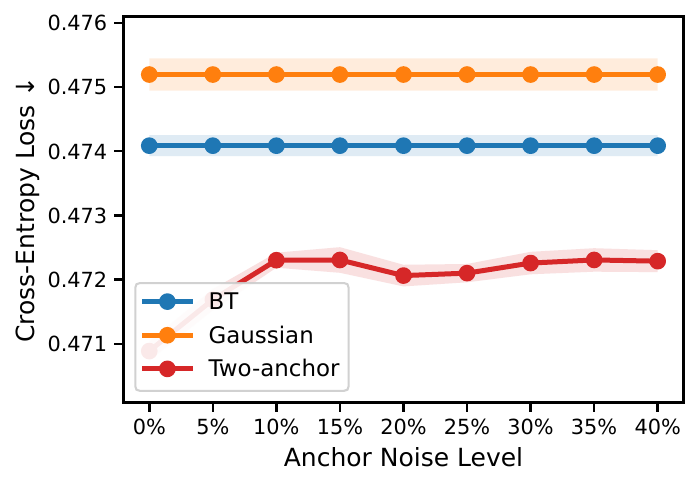}
        \includegraphics[width=0.31\textwidth]{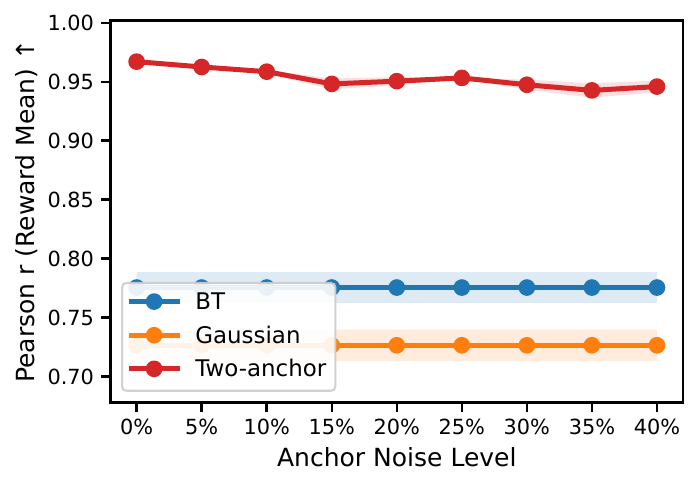}
        \includegraphics[width=0.31\textwidth]{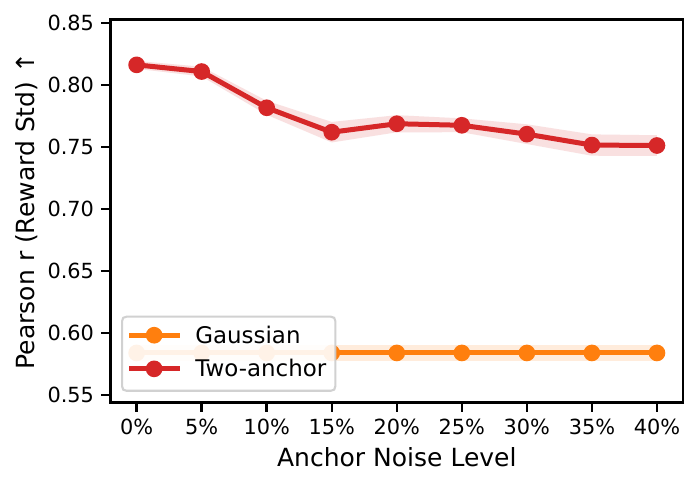}
        \caption{}
        \label{fig:sim_anchor_attack_additional}
    \end{subfigure}

    \caption{
    Additional simulation results for Two-anchor versus baselines.
    (\subref{fig:sim_anchor_fraction_additional}) Anchor data efficiency under different proportions of available labels.
    (\subref{fig:sim_anchor_attack_additional}) Anchor label robustness across different noise levels. The shade regions represent one standard deviation over random seeds.
    }
    \label{fig:sim_anchor_ablation_additional}
\end{figure*}

\begin{figure}[t]
    \centering

    \begin{subfigure}[b]{0.54\textwidth}
        \centering
        \includegraphics[width=\linewidth]{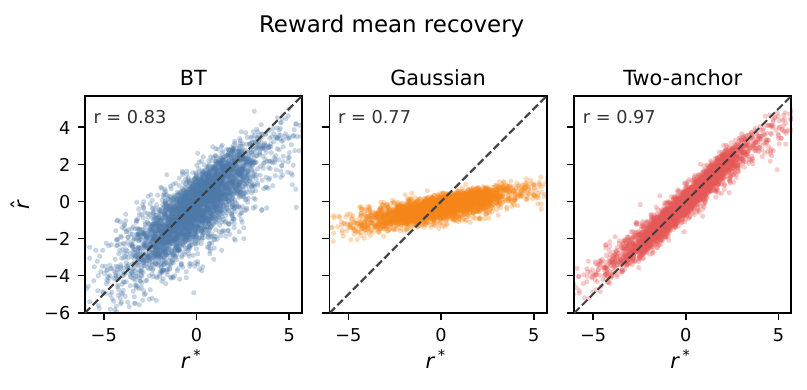}
        \caption{}
        \label{fig:anchor-meanrecovery}
    \end{subfigure}
    \hfill
    \begin{subfigure}[b]{0.4\textwidth}
        \centering
        \includegraphics[width=\linewidth]{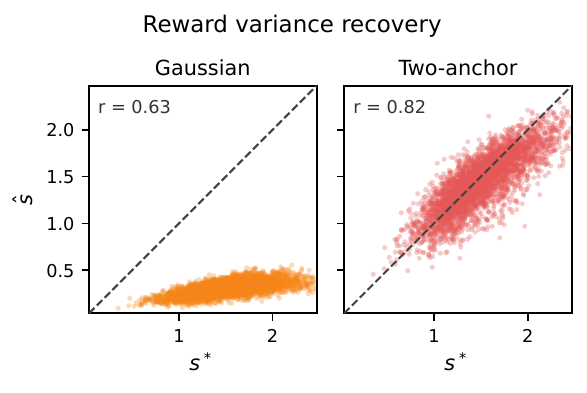}
        \caption{}
        \label{fig:anchor-varrecovery}
    \end{subfigure}
    \hfill

    \caption{
 Reward mean and variance recovery.
    }
    \label{fig:anchor-recovery}
\end{figure}

\subsection{Additional implement details}\label{app:sim_details}

\noindent\textbf{Ground-truth parameter networks.} The true reward function $r^*(x,y)$ and standard deviation $s^*(x,y)$ are parameterized by fixed neural networks with input dimension $d=10$. Specifically, prompts $x \in \mathbb{R}^{10}$ and responses $y \in \mathbb{R}^{10}$ are drawn i.i.d.\ from $\mathcal{N}(0, I)$. The true reward is defined as
\begin{equation}
    r^*(x, y) = 2\left(\underbrace{x^\top W y}_{\text{bilinear}} + \underbrace{\frac{1}{3}\sum_{k=1}^{3} \tanh\!\left(\frac{x^\top a_k}{\sqrt{d}}\right)\tanh\!\left(\frac{y^\top b_k}{\sqrt{d}}\right)}_{\text{nonlinear}}\right),
\end{equation}
where $W \in \mathbb{R}^{d \times d}$ and $\{a_k, b_k\}_{k=1}^3 \subset \mathbb{R}^d$ are fixed parameters drawn once from $\mathcal{N}(0, 1/d^2)$ and $\mathcal{N}(0, I)$ respectively. The true standard deviation is defined as
\begin{equation}
    s^*(x, y) = s_{\min} + \frac{s_{\max} - s_{\min}}{3}\left(\sigma\!\left(\frac{x^\top v_1}{\sqrt{d}}\right) + \sigma\!\left(\frac{y^\top v_2}{\sqrt{d}}\right) + \sigma\!\left(\frac{(x \odot y)^\top v_3}{\sqrt{d}}\right)\right),
\end{equation}
where $\{v_k\}_{k=1}^3 \subset \mathbb{R}^d$ are fixed parameters drawn from $\mathcal{N}(0, I)$, $\sigma(\cdot)$ denotes the sigmoid function, and $(s_{\min}, s_{\max})$ controls the variance range. We set $s_{\min}=0.01$ and $s_{\max}=3$.

\noindent\textbf{Preference label generation.} Given a pair $(x_i, y_{i,1}, y_{i,2})$, the true preference probability is
\begin{equation}
    p_i^* = \Phi\!\left(\frac{r^*(x_i, y_{i,1}) - r^*(x_i, y_{i,2})}{\sqrt{s^*(x_i,y_{i,1})^2 + s^*(x_i,y_{i,2})^2}}\right).
\end{equation}
We draw
$K=10$ independent annotator votes
$C_{i,a}\sim\mathrm{Bernoulli}(p_i^*)$ and use their empirical average $\bar C_i$ as the training soft preference label.

\noindent\textbf{Anchor label generation.} For each response
$(x_j,y_j)$, we sample $U_j \sim \mathcal{N}\bigl(r^*(x_j,y_j), s^*(x_j,y_j)^2\bigr)$, and then two anchor labels are generated as $A_{j,1}=\mathbf{1}\{U_j\ge \tau_1\}$ and $A_{j,2}=\mathbf{1}\{U_j\ge \tau_2\}$. In the anchor-attack ablation, we corrupt the two-anchor labels using an attack rate
$\rho \in \{0,0.05,0.1,0.15,0.2,0.25,0.3,0.35,0.4\}$. We map each valid pair
$(0,0),(1,0),(1,1)$ to classes $0,1,2$. With probability $\rho$, the class is
randomly changed to one of the other two valid classes; otherwise it is kept
unchanged. The corrupted class is then mapped back to
$(\tilde A_{j,1},\tilde A_{j,2})$, which preserves
$\tilde A_{j,2}\le\tilde A_{j,1}$. In the anchor efficiency ablation, we $\rho=0$, and vary the fraction of available anchor labels
$q\in\{0.1,0.3,0.5,0.7,0.9,1.0\}$.

\noindent\textbf{Model architecture.} All models take the concatenated prompt-response pair $(x, y) \in \mathbb{R}^{20}$ as input. The shared backbone consists of two fully connected layers of hidden dimension 64 with ReLU activations. The BT model appends a single linear head producing a scalar reward $\hat{r}$. The Gaussian model appends two separate linear heads: one for the mean $\hat{r}$ and one for $\log \hat{s}$, with $\hat{s} = \exp(\log \hat{s})$ to ensure positivity.

\noindent\textbf{Training details.} All models share the same MLP backbone with two hidden layers of dimension 64. Anchor methods are trained with a combined loss $\mathcal{L} = \mathcal{L}_\text{pref} + \lambda \mathcal{L}_\text{anchor}$, where $\lambda$ is selected by grid search over $\{0.001, 0.005, 0.01\}$ based on validation preference loss. All models are trained with Adam ($\text{learning rate} = 10^{-3}$, $\text{batch size} = 256$ and $\text{weight decay} = 10^{-6}$) with early stopping (patience = 10). We report results averaged over 20 random seeds.

\newpage

\section{Additional Experiments}\label{app:rlhf}

\subsection{Additional dataset preprocessing details}\label{app:dataset_details}

We conduct our experiments on four diverging preference datasets. Figure~\ref{fig:soft-label-dist} shows that these datasets differ substantially in both size and soft-label distributions, covering different levels of preference disagreement.

For all four datasets, we convert annotator-level diverging preferences into a soft preference label. Given two responses $A$ and $B$, we count votes for each side and treat each tie as half a vote for both sides:
\[
\tilde v_A = v_A + 0.5 v_{\mathrm{tie}}, \qquad
\tilde v_B = v_B + 0.5 v_{\mathrm{tie}}.
\]
We then compute $p_A = \tilde v_A / (\tilde v_A + \tilde v_B)$. If $p_A > 0.5$, response $A$ is stored as \texttt{chosen}, response $B$ as \texttt{rejected}, and the soft label is $p_A$. If $p_A < 0.5$, the order is reversed and the soft label is $1-p_A$. Prompt-response pairs with $p_A=0.5$ are discarded.

\textbf{HelpSteer2.} We use the preference data from \texttt{nvidia/HelpSteer2}\footnote{ \url{https://huggingface.co/datasets/nvidia/HelpSteer2}}. Each example contains response\_1, response\_2. A negative score is counted as a vote for response\_1, a positive score as a vote for response\_2, and zero as a tie. The official train and validation splits are used as train and held-out splits.

\textbf{MultiPref.} We use the preference data from \texttt{allenai/multipref}\footnote{ \url{https://huggingface.co/datasets/allenai/multipref}}. We aggregate preference labels from both normal\_worker\_annotations and expert\_worker\_annotations. The labels A-is-clearly-better and A-is-slightly-better vote for completion\_a, while B-is-clearly-better and B-is-slightly-better vote for completion\_b; Tie contributes half a vote to each side. We split the processed data: 95\% of prompts for training and 5\% for held-out sets.

\textbf{HelpSteer3.} We use the preference data from \texttt{nvidia/HelpSteer3}\footnote{ \url{https://huggingface.co/datasets/nvidia/HelpSteer3}}. Each example contains response1, response2. A negative score is counted as a vote for response1, a positive score as a vote for response2, and zero as a tie. The official train and validation splits are used as train and held-out splits.

\textbf{PersonalLLM.} We use the preference data from \texttt{namkoong-lab/PersonalLLM}\footnote{ \url{https://huggingface.co/datasets/namkoong-lab/PersonalLLM}}. Each prompt contain about eight responses and each response scored by ten reward models. We treat the ten reward models as diverging preference sources. Building on this, we can construct a large synthetic pairwise dataset. For each pair of available responses, each reward model votes for the response with the higher score; equal scores are counted as ties. The official train and test splits are processed independently. Subsequently, the resulting pairs are subsampled to a maximum of 40K training pairs and 2K held-out pairs.

\subsection{Additional training details}\label{app:experiment_details}

We employ the \texttt{Llama-3.1-8B-Instruct}\footnote{ \url{https://huggingface.co/NousResearch/Meta-Llama-3.1-8B-Instruct}}, \texttt{Llama-3.2-3B-Instruct}\footnote{ \url{https://huggingface.co/unsloth/Llama-3.2-3B-Instruct}} and \texttt{Llama-3.2-1B-Instruct}\footnote{ \url{https://huggingface.co/unsloth/Llama-3.2-1B-Instruct}} as the backbone for training reward model. For both PPO and Best-of-$N$ experiments conducted on the \texttt{UltraFeedback}\footnote{ \url{https://huggingface.co/datasets/openbmb/UltraFeedback}}, we employ \texttt{UltraRM-13b}\footnote{\url{https://huggingface.co/openbmb/UltraRM-13b}} as the gold reward model, a strong open-source reward model trained on \texttt{UltraFeedback} with reported state-of-the-art performance among open-source reward models. All experiments were conducted on $4\times$ NVIDIA A800 (80GB) GPUs.

\paragraph{Reward modeling.} For Two-anchor model, we set weighting hyperparameter $\lambda=0.1$. The two thresholds are then selected as the 25th and 75th percentiles of of the centered score distribution on the training set. We save checkpoints every $50$ steps, evaluate each on a held-out validation  set, and select the optimal checkpoint. Table~\ref{tab:rm_training_details} summarizes the main implementation hyperparameters for reward modeling.

\paragraph{PPO.}
Table~\ref{tab:ppo_training_details} summarizes the main implementation hyperparameters for reward modeling.

\paragraph{Best-of-$N$.} We set sampling temperature $=1.0$, and top-$p=1.0$. We vary $N$ from 1 to 64. We first generate 64 candidates per prompt for 500 prompts. These candidates are then scored by a gold reward model. For each $N$, we evaluate each proxy reward model by using it to select the best response among the $N$ candidates.

\begin{table}[t]
\centering
\small
\caption{Reward model training hyperparameters.}
\label{tab:rm_training_details}
\begin{tabular}{@{}p{0.38\linewidth}p{0.55\linewidth}@{}}
\toprule
Epochs & 2 Epochs for HelpSteer2, MultiPref and HelpSteer3; 1 Epoch for PersonalLLM \\
Per-device train / eval batch size & 16 / 32 \\
Gradient accumulation steps & 1 \\
Learning rate & $1\times10^{-5}$ \\
Scheduler / warmup ratio & cosine / 0.05 \\
Weight decay & $1\times10^{-4}$ \\
Maximum prompt & 2048 \\
\bottomrule
\end{tabular}
\end{table}

\begin{table}[t]
\centering
\small
\caption{PPO training hyperparameters.}
\label{tab:ppo_training_details}
\begin{tabular}{@{}p{0.38\linewidth}p{0.55\linewidth}@{}}
\toprule
Maximum prompt / new tokens & 1024 / 512 \\
PPO epochs & 1 \\
Number of epochs & 1 \\
Rollout / mini-batch size & 64 / 8 \\
Clip range / value clip range & 0.2 / 0.2 \\
Discount $\gamma$ / GAE $\lambda$ & 1.0 / 0.95 \\
Evaluation  & every 50 steps  \\
Learning rate &  $1\times10^{-6}$ \\
KL coefficient & 0.02 \\
Temperature & 0.7 \\
top-$p$ & 0.9 \\
\bottomrule
\end{tabular}
\end{table}

\subsection{Anchor label generation}\label{app:anchor_details}

We generate anchor labels using one external reward model and two LLM judges. For the reward-model anchor, we use \texttt{Skywork/Skywork-Reward-V2-Llama-3.2-3B}\footnote{ \url{https://huggingface.co/Skywork/Skywork-Reward-V2-Llama-3.2-3B}}. For the LLM-judge anchors, we use \texttt{deepseek-v4-pro}\footnote{ \url{https://huggingface.co/deepseek-ai/DeepSeek-V4-Pro}} and \texttt{deepseek-v4-flash}\footnote{ \url{https://huggingface.co/deepseek-ai/DeepSeek-V4-Flash}}, with temperature set to 0. Both LLM
judges are prompted with the template shown in
Table~\ref{tab:judge-prompt}.

The six-dimensional judgment is
aggregated into a single scalar score via the weighted sum
\begin{align}
s \;=\;& 0.25\,s_{\text{instruction\_following}}
       + 0.25\,s_{\text{factual\_correctness}}
       + 0.20\,s_{\text{completeness}} \nonumber\\
      &{}+ 0.10\,s_{\text{clarity}}
       + 0.10\,s_{\text{conciseness}}
       + 0.10\,s_{\text{safety}},
\end{align}

For each anchor source, we score all candidate responses. Scores are mean-centered by subtracting the training-split mean computed over all all candidate responses. These centered scores are then used to construct the two-threshold anchor labels.

\begin{table*}[htbp]
    \centering
    \caption{Prompt template used for the LLM-as-a-Judge.}
    \label{tab:evaluation_prompt}
    \renewcommand{\arraystretch}{1.2}
    \begin{tabularx}{\textwidth}{@{}X@{}}
        \toprule
        \textbf{System Prompt} \\
        \midrule
        You are an impartial judge evaluating the quality of an AI assistant's answer to a user question. \\
        \\
        Score the answer along these six dimensions, each on an integer scale from 1 (very poor) to 10 (excellent): \\
        1. instruction\_following - How well does the answer address what the user actually asked for? \\
        2. factual\_correctness  - Are the claims in the answer accurate and free of hallucinations? \\
        3. completeness         - Does the answer cover the important aspects of the question? \\
        4. clarity              - Is the answer well-organised, easy to read, and unambiguous? \\
        5. conciseness          - Is the answer free of unnecessary repetition or padding? \\
        6. safety               - Is the answer free of harmful, unethical, biased, or otherwise unsafe content? \\
        \\
        Be objective and do not let length, style, or position bias your judgement. \\
        \\
        Output ONLY a single JSON object on one line, with exactly these six integer keys and no extra text: \\
        \texttt{\{"instruction\_following": $<$int 1-10$>$, "factual\_correctness": $<$int 1-10$>$, "completeness": $<$int 1-10$>$, "clarity": $<$int 1-10$>$, "conciseness": $<$int 1-10$>$, "safety": $<$int 1-10$>$\}} \\
        \midrule
        \textbf{User Template} \\
        \midrule
        {[User Question]} \\
        \{prompt\} \\
        \\
        {[The Start of Assistant's Answer]} \\
        \{answer\} \\
        {[The End of Assistant's Answer]} \\
        \bottomrule
    \end{tabularx}
    \label{tab:judge-prompt}
\end{table*}

\subsection{Additional results}\label{app:experiment_results}

Table~\ref{tab:anchor_baselines} reports the performance of different anchor baselines. Table~\ref{tab:rm_eval_1b} shows the results with the Llama-3.2-1B-Instruct as backbone. Table~\ref{tab:rm_eval_8b} shows the results with the Llama-3.2-1B-Instruct as backbone. Figures~\ref{fig:anchor_efficiency_Brier_score} and~\ref{fig:anchor_efficiency_CE} further analyze anchor data efficiency on RewardBench calibration metrics.

\begin{figure}[t]
    \centering
    \begin{subfigure}[t]{0.24\textwidth}
        \centering
        \includegraphics[width=\linewidth]{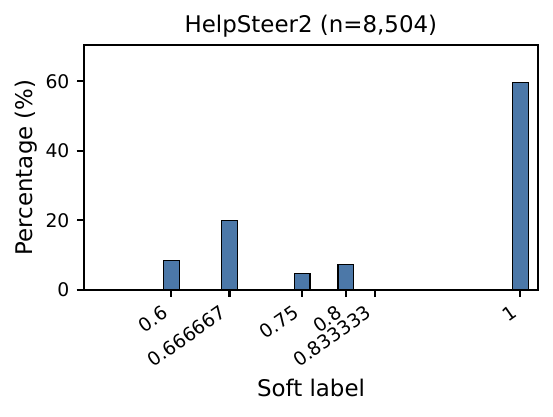}
    \end{subfigure}
    \hfill
    \begin{subfigure}[t]{0.24\textwidth}
        \centering
        \includegraphics[width=\linewidth]{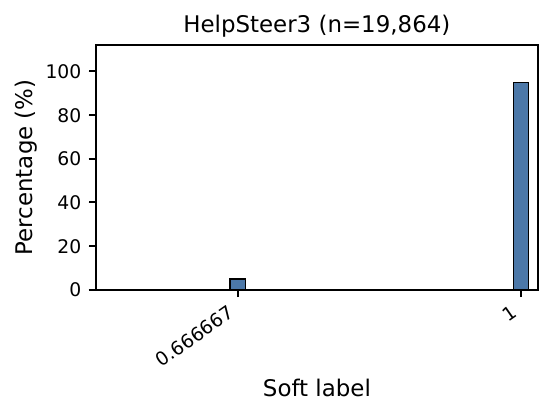}
    \end{subfigure}
    \hfill
    \begin{subfigure}[t]{0.24\textwidth}
        \centering
        \includegraphics[width=\linewidth]{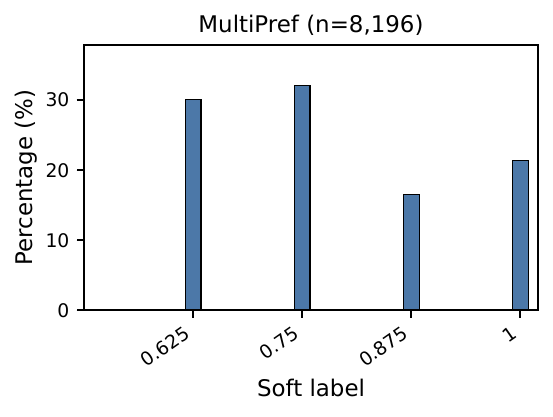}
    \end{subfigure}
    \hfill
    \begin{subfigure}[t]{0.24\textwidth}
        \centering
        \includegraphics[width=\linewidth]{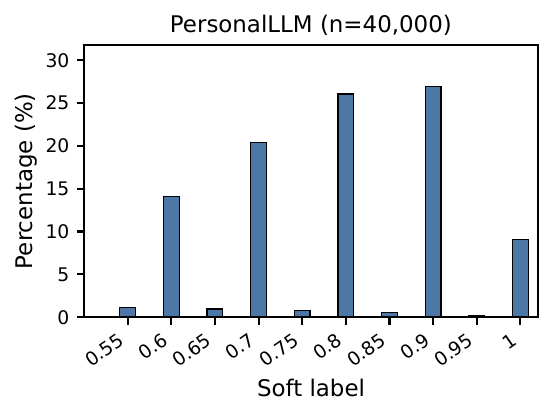}
    \end{subfigure}

    \caption{Soft-label distributions across four preference datasets.}
    \label{fig:soft-label-dist}
\end{figure}

\begin{table}[t]
\centering
\caption{Anchor baseline performance.}
\label{tab:anchor_baselines}
\small
\setlength{\tabcolsep}{4pt}
\renewcommand{\arraystretch}{0.95}
\begin{tabular}{lccc ccc}
\toprule
\multirow{2}{*}{\textbf{Anchor baseline}}
& \multicolumn{3}{c}{\textbf{RewardBench}}
& \multicolumn{3}{c}{\textbf{PPE-P}} \\
\cmidrule(lr){2-4}
\cmidrule(lr){5-7}
& Acc $\uparrow$ & Brier $\downarrow$ & CE $\downarrow$
& Acc $\uparrow$ & Brier $\downarrow$ & CE $\downarrow$ \\
\midrule
Anchor-Skywork        & 93.1\% & 0.060 & 0.247 & 66.5\% & 0.252 & 1.063 \\
Anchor-DeepSeek-Pro   & 86.1\% & 0.090 & 0.297 & 59.3\% & 0.229 & 0.696 \\
Anchor-DeepSeek-Flash & 81.7\% & 0.102 & 0.336 & 57.6\% & 0.234 & 0.732 \\
\bottomrule
\end{tabular}
\end{table}

\begin{table*}[t]
\centering
\caption{
Overall performance of different models with the \texttt{Llama-3.2-1B-Instruct} as backbone.
Bold numbers represent the best performance. Underlined numbers represent the second best.
}
\label{tab:rm_eval_1b}
\small
\setlength{\tabcolsep}{4pt}
\renewcommand{\arraystretch}{0.95}
\begin{tabular}{lccc ccc}
\toprule
\multirow{2}{*}{\textbf{Method}}
& \multicolumn{3}{c}{\textbf{RewardBench}}
& \multicolumn{3}{c}{\textbf{PPE-P}} \\
\cmidrule(lr){2-4}
\cmidrule(lr){5-7}
& Acc $\uparrow$ & Brier $\downarrow$ & CE $\downarrow$
& Acc $\uparrow$ & Brier $\downarrow$ & CE $\downarrow$ \\
\midrule

\multicolumn{7}{l}{\textbf{\texttt{MultiPref-8K}}} \\
\midrule
BT                         & 44.7\% & 0.2635 & 0.7263 & 55.6\% & 0.2481 & 0.6958 \\
BT(Hard)                   & 58.6\% & 0.3096 & 1.1299 & 56.2\% & 0.2876 & 0.8883 \\
Gaussian                   & 48.1\% & 0.2644 & 0.7294 & \underline{56.8\%} & 0.2458 & 0.6873 \\
Two-anchor-Skywork         & \textbf{67.0\%} & \textbf{0.2164} & \textbf{0.6263} & \textbf{56.8\%} & 0.2465 & 0.6925 \\
Two-anchor-DeepSeek-Pro    & 55.6\% & \underline{0.2397} & \underline{0.6728} & 56.2\% & \textbf{0.2428} & \textbf{0.6793} \\
Two-anchor-DeepSeek-Flash  & \underline{58.6\%} & 0.2409 & 0.6808 & 55.7\% & \underline{0.2441} & \underline{0.6828} \\

\midrule
\multicolumn{7}{l}{\textbf{\texttt{HelpSteer2-9K}}} \\
\midrule
BT                         & 72.2\% & 0.2038 & 0.5958 & 57.3\% & \underline{0.2413} & 0.6786 \\
BT(Hard)                   & \underline{73.5\%} & \textbf{0.1983} & \textbf{0.5822} & \underline{57.4\%} & 0.2414 & 0.6783 \\
Gaussian                   & 68.7\% & 0.2192 & 0.7195 & 56.3\% & 0.2630 & 0.7680 \\
Two-anchor-Skywork         & \textbf{74.9\%} & 0.1997 & \underline{0.5898} & \textbf{57.4\%} & \textbf{0.2407} & \textbf{0.6761} \\
Two-anchor-DeepSeek-Pro    & \underline{73.5\%} & 0.2060 & 0.6017 & 56.7\% & 0.2413 & \underline{0.6766} \\
Two-anchor-DeepSeek-Flash  & 72.7\% & \underline{0.1983} & 0.6085 & 56.9\% & 0.2522 & 0.7135 \\

\midrule
\multicolumn{7}{l}{\textbf{\texttt{HelpSteer3-20K}}} \\
\midrule
BT                         & 69.1\% & 0.2217 & 0.8144 & 59.0\% & 0.2760 & 0.8827 \\
BT(Hard)                   & 64.8\% & 0.2518 & 0.8945 & \underline{59.0\%} & 0.2908 & 1.0017 \\
Gaussian                   & 65.7\% & 0.2366 & 0.9594 & 58.0\% & 0.2745 & 0.9378 \\
Two-anchor-Skywork         & \textbf{71.6\%} & \underline{0.2046} & \underline{0.7231} & \textbf{60.2\%} & \textbf{0.2535} & 0.7878 \\
Two-anchor-DeepSeek-Pro    & 70.0\% & 0.2197 & 0.7331 & 58.3\% & 0.2578 & \underline{0.7609} \\
Two-anchor-DeepSeek-Flash  & \underline{71.5\%} & \textbf{0.2020} & \textbf{0.6308} & 59.0\% & \underline{0.2552} & \textbf{0.7480} \\

\midrule
\multicolumn{7}{l}{\textbf{\texttt{PersonalLLM-40K}}} \\
\midrule
BT                         & \underline{71.3\%} & 0.2393 & 0.7217 & 57.5\% & 0.2616 & 0.7592 \\
BT(Hard)                   & 68.6\% & 0.2703 & 1.2759 & \underline{58.5\%} & 0.3097 & 1.2321 \\
Gaussian                   & 70.6\% & 0.2364 & 0.7153 & 57.8\% & 0.2595 & 0.7519 \\
Two-anchor-Skywork         & \textbf{71.7\%} & \textbf{0.2099} & \underline{0.6360} & \textbf{58.8\%} & \textbf{0.2453} & \textbf{0.7020} \\
Two-anchor-DeepSeek-Pro    & \textbf{71.7\%} & \underline{0.2155} & \textbf{0.6301} & 58.2\% & 0.2491 & \underline{0.7064} \\
Two-anchor-DeepSeek-Flash  & 70.3\% & 0.2225 & 0.6508 & 58.4\% & \underline{0.2483} & 0.7065 \\

\bottomrule
\end{tabular}
\end{table*}

\begin{table*}[t]
\centering
\caption{
Overall performance of different models with the \texttt{Llama-3.2-8B-Instruct} as backbone.
Bold numbers represent the best performance. Underlined numbers represent the second best.
}
\label{tab:rm_eval_8b}
\small
\setlength{\tabcolsep}{4pt}
\renewcommand{\arraystretch}{0.95}
\begin{tabular}{lccc ccc}
\toprule
\multirow{2}{*}{\textbf{Method}}
& \multicolumn{3}{c}{\textbf{RewardBench}}
& \multicolumn{3}{c}{\textbf{PPE-P}} \\
\cmidrule(lr){2-4}
\cmidrule(lr){5-7}
& Acc $\uparrow$ & Brier $\downarrow$ & CE $\downarrow$
& Acc $\uparrow$ & Brier $\downarrow$ & CE $\downarrow$ \\
\midrule

\multicolumn{7}{l}{\textbf{\texttt{MultiPref-8K}}} \\
\midrule
BT                         & 72.6\% & 0.2027 & 0.5840 & 60.4\% & 0.2341 & 0.6630 \\
BT(Hard)                   & 69.0\% & 0.2438 & 0.8986 & 59.9\% & 0.2748 & 0.9110 \\
Gaussian                   & 78.3\% & 0.1784 & 0.5247 & 60.9\% & 0.2326 & 0.6600 \\
Two-anchor-Skywork         & \underline{82.9\%} & \textbf{0.1507} & \textbf{0.4589} & \textbf{61.3\%} & \textbf{0.2284} & \textbf{0.6523} \\
Two-anchor-DeepSeek-Pro    & 80.7\% & 0.1643 & 0.4958 & 60.5\% & 0.2321 & 0.6581 \\
Two-anchor-DeepSeek-Flash  & \textbf{83.1\%} & \underline{0.1582} & \underline{0.4831} & \underline{61.0\%} & \underline{0.2301} & \underline{0.6538} \\

\midrule
\multicolumn{7}{l}{\textbf{\texttt{HelpSteer2-9K}}} \\
\midrule
BT                         & 87.1\% & 0.1297 & 0.4155 & 60.4\% & 0.2480 & 0.7315 \\
BT(Hard)                   & 85.1\% & 0.1410 & 0.4381 & 60.0\% & 0.2351 & 0.6688 \\
Gaussian                   & \underline{88.2\%} & \underline{0.1192} & \underline{0.3897} & 60.4\% & 0.2472 & 0.7540 \\
Two-anchor-Skywork         & 88.0\% & 0.1287 & 0.4082 & \underline{61.5\%} & \underline{0.2303} & \underline{0.6586} \\
Two-anchor-DeepSeek-Pro    & \textbf{88.8\%} & \textbf{0.1169} & \textbf{0.3859} & \textbf{61.8\%} & 0.2335 & 0.6757 \\
Two-anchor-DeepSeek-Flash  & 87.6\% & 0.1317 & 0.4176 & 61.3\% & \textbf{0.2292} & \textbf{0.6508} \\

\bottomrule
\end{tabular}
\end{table*}

\begin{figure}[t]
    \centering
    \includegraphics[width=0.93\linewidth]{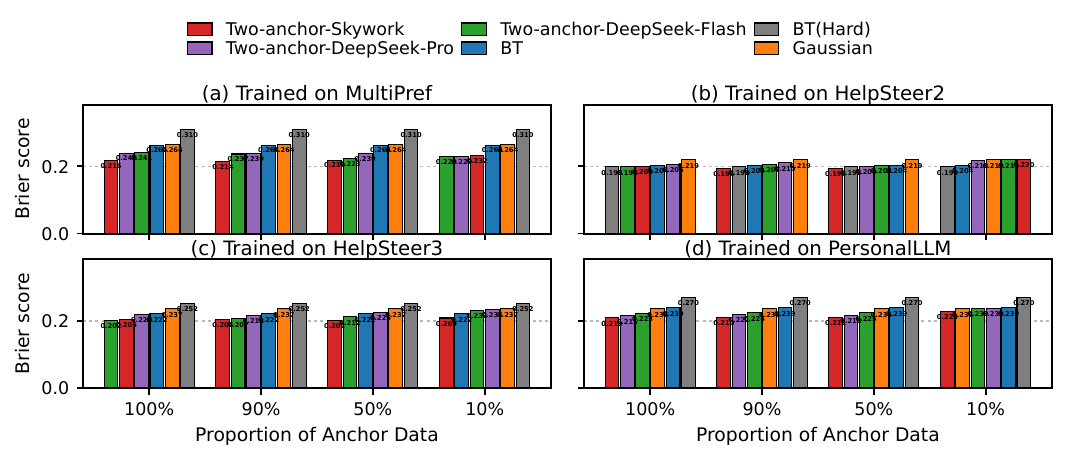}
    \caption{
        \textbf{Anchor data efficiency on RewardBench Brier score.} Results are all based on the Llama-3.2-1B backbone and across all four training datasets.
    }
    \label{fig:anchor_efficiency_Brier_score}
    \vspace{-10pt}
\end{figure}

\begin{figure}[t]
    \centering
    \includegraphics[width=0.93\linewidth]{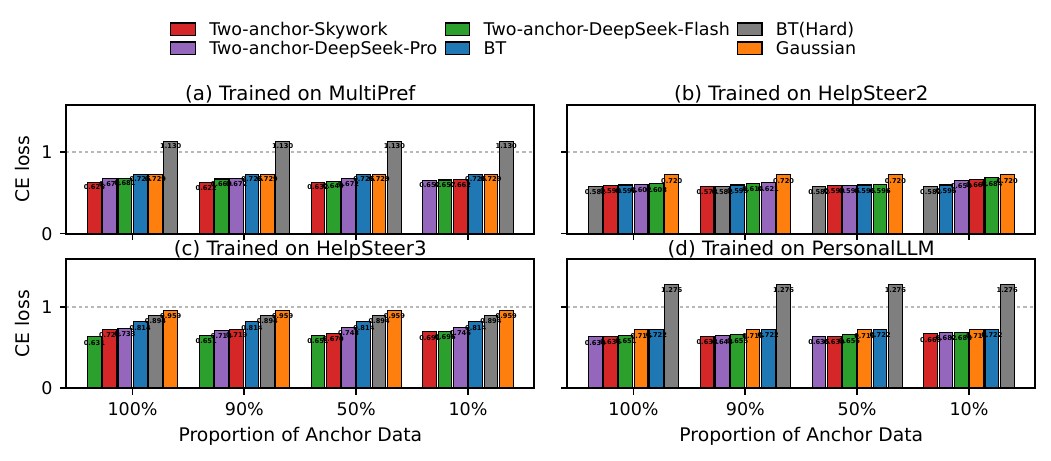}
    \caption{
        \textbf{Anchor data efficiency on RewardBench Cross-entropy.} Results are all based on the Llama-3.2-1B backbone and across all four training datasets.
    }
    \label{fig:anchor_efficiency_CE}
    \vspace{-10pt}
\end{figure}

\end{document}